\newtheorem{theorem}{Theorem}[section]
\newtheorem{lemma}[theorem]{Lemma}
\newtheorem{corollary}[theorem]{Corollary}
\newtheorem{definition}[theorem]{Definition}
\newenvironment{proof}{{\bf Proof:}}{\hfill\rule{2mm}{2mm}}
\newenvironment{proofsketch}{{\bf Proof sketch:}}{\hfill\rule{2mm}{2mm}}
\newcommand\E{\mathbb{E}}
\newcommand{\A}{\mathcal{A}}
\newcommand{\B}{\mathcal{B}}
\newcommand{\D}{\mathcal{D}}
\newcommand\asteriskfootnote[1]{%
  \begin{NoHyper}
  \renewcommand\thefootnote{* }\footnotetext{#1}%
  \end{NoHyper}
}
\newcommand{\calF}{\mathcal{F}}
\newcommand{\bbP}{\mathbb{P}}
\newcommand{\bbE}{\mathbb{E}}
\renewcommand{\dfrac}{\frac}
\newcommand{\ignore}[1]{}
\newcommand \sett[2] { \left.\left\{{#1}\right\vert{#2}\right\}}
\newcommand{\curly}[1]{ {\left\{ #1 \right\}}}
\newcommand{\roundy}[1]{ {\left( #1 \right)}}
\newcommand{\squary}[1]{ {\left[ #1 \right]}}
\newcommand{\obs}{\mathcal{O}}
\DeclarePairedDelimiterXPP\bPr[1]{\operatorname{\Pr}}{[}{]}{}{#1}
\newcommand{\mail}[1]{\href{mailto:#1}{\color{black} #1}}
\title{Delay as Payoff in MAB}
\author{ 
Ofir Schlisselberg$^{*}$ \\  Tel Aviv University \\ \mail{ofirs4@mail.tau.ac.il}
\and
Ido Cohen$^{*}$ \\ Tel Aviv University \\ \mail{idoc@mail.tau.ac.il}
\and
Tal Lancewicki \\  Tel Aviv University \\ \mail{lancewicki@mail.tau.ac.il}
\and
Yishay Mansour \\  Tel Aviv University and Google Research \\ \mail{mansour.yishay@gmail.com}
}
\begin{document}
\maketitle

\asteriskfootnote{ Equal contribution.}

\begin{abstract}

In this paper, we investigate a variant of the classical stochastic Multi-armed Bandit (MAB) problem, where the payoff received by an agent (either cost or reward) is both delayed, and directly corresponds to the magnitude of the delay. This setting models faithfully many real world scenarios such as the time it takes for a data packet to traverse a network given a choice of route (where delay serves as the agent’s cost); or a user's time spent on a web page given a choice of content (where delay serves as the agent’s reward). 
Our main contributions are tight upper and lower bounds for both the cost and reward settings. For the case that delays serve as costs, which we are the first to consider, we prove optimal regret that scales as $\sum_{i:\Delta_i > 0}\frac{\log T}{\Delta_i} + d^*$, where $T$ is the maximal number of steps, $\Delta_i$ are the sub-optimality gaps and $d^*$ is the \textit{minimal} expected delay amongst arms. For the case that delays serves as rewards, we show optimal regret of $\sum_{i:\Delta_i > 0}\frac{\log T}{\Delta_i} + \bar{d}$, where $\bar d$ is the second \textit{maximal} expected delay. These improve over the regret in the general delay-dependent payoff setting, which scales as $\sum_{i:\Delta_i > 0}\frac{\log T}{\Delta_i} + D$, where $D$ is the maximum possible delay. Our regret bounds highlight the difference between the cost and reward scenarios, showing that the improvement in the cost scenario is more significant than for the reward. Finally, we accompany our theoretical results with an empirical evaluation.

\end{abstract}

\section{Introduction}
Classical stochastic Multi-armed Bandit (MAB) is a well studied theoretical framework for sequential decision making, where at every step an agent chooses an action and immediately receives some payoff, be it reward or cost. A natural generalization of this framework considers the situation where the payoff is only received after a certain delay. This is known as the stochastic MAB problem with randomized delays \cite{jouliani}, and was extensively researched in previous work under various variants \cite{vernade,pike,zhou2019learning,gael,vernade2020linear,wu2022thompson,howson2023delayed,shi2023statistical}. In all these works the delay was considered reward-independent, namely, the reward and delay are sampled from independent distributions (more on this in the following sections). Later, \citet{tal} introduced reward-dependent delay, where the delay and reward are sampled from a joint distribution. This model is more challenging as it introduces selection bias into the observed payoffs. \citet{yifu} consider a special case of this they call strongly-dependent, where for all intents and purposes the delay is exactly the reward that we are trying to maximize. We second their motivation to study this case, and additionally generalize this to delay as cost that we are trying to minimize. This motivation is twofold. First, it models well many real world scenarios. Second, from a performance perspective, it offers a significant gain in the regret. 
We show that the \emph{delay as payoff} scenario is actually ``simpler" than the general payoff-dependent setting by providing a tighter upper bound compared to \cite{yifu}, and a significantly better bound for the cost setting.

To motivate the cost scenario, consider a communication network where we route packets from node $a$ to node $b$, and would like to do it in the fastest way possible. We can model this as a stochastic MAB, where every route $a\rightarrow b$ is an arm (action) and the time it takes the packet to arrive (formally known as Round Trip Time (RTT), see \citet{rtt}) is our payoff. For routing we want to minimize the \textit{RTT} and so we call the payoff \textit{``cost"}. 
To motivate the reward scenario, consider a web page with some dynamic content. We wish to capture the viewer's attention for as long as possible, 
by choosing the content wisely. A common metric used in advertising is Average Time on Page (ATP), used, for example, by Google Analytics.  In this case we want to maximize the \textit{ATP} and so we call the payoff \textit{``reward"}. In both scenarios the payoff (\textit{RTT} or \textit{ATP}) is the time elapsed from choosing an action until its payoff is final, hence it can be modeled as delay.

As far as performance, an immediate observation is that while the agent is waiting for an action's final payoff, it gains partial knowledge about its payoff as time progresses. More specifically, if at time $t_1$ an arm is played and by time $t_2$ the payoff has not been revealed, we can learn that the delay (hence the payoff) is at least $t_2-t_1$. This is a crucial observation that we use. Taking advantage of this knowledge, we can improve the regret bounds. Notice that this knowledge is one-sided in the sense that every time-step that passes provides an improved lower bound on the delay, but the same cannot be said for an upper bound. This is a challenging limitation that is the key difference between \textit{cost} and \textit{reward}, and explains how a better regret can be achieved for \textit{cost}. (This issue is further discussed in \Cref{cvd}.)

\begin{table*}[t!]
\centering
\renewcommand{\arraystretch}{2}
\begin{tabular}{||c|c|c||} 
 \hline
 & Reward & Cost \\ [0.5ex] 
 \hline\hline
 \cite{tal} & $\sum_{i:\Delta_i > 0}\frac{\log T}{\Delta_i} + D$& $\sum_{i:\Delta_i > 0}\frac{\log T}{\Delta_i} + D$ \\
 \hline
 \cite{yifu} & $ \sum_{i:\Delta_i > 0}\frac{\log T}{\Delta_i} + D\sum_{i:\Delta_i > 0}\Delta_i $ & N/A \\
 \hline
 This work & $ \sum_{i:\Delta_i > 0}{\frac{\log T}{\Delta_i}} + \min\{\bar{d},D\Delta_{max}\} $ & $ \sum_{i:\Delta_i > 0}{\frac{\log T}{\Delta_i}} +\min\{d^*,D\Delta_{max}\} $ \\ [1ex] 
 \hline
\end{tabular}

\caption{Pseudo regret comparison for works on delay-dependent payoff. $T$ is the total number of steps, $\Delta_i$ is the sub-optimality gap of arm $i$ and $\Delta_{max}=\max_i{\Delta_i}$, $D$ is the maximum possible delay, $\bar{d}$ is the second \textit{maximal} expected delay (across arms), and $d^*$ is \textit{minimal} expected delay.}
\label{table:1}
\end{table*}

\subsection{Our Contributions}
We study both reward and cost as payoff in the special case of payoff-dependent delay where delay serves as payoff. This setting presents an opportunity to use the partial knowledge accumulated while waiting for the payoff, to achieve better regret bounds. 
In order to conform with the literature, we normalize the payoff to be in $[0,1]$, by setting it to the actual delay divided by the maximum delay. This has no implications on the analysis, and is aimed to be inline with the existing regret bounds.
Our main contributions are the following:
\begin{enumerate}
    \item In the case of cost, we offer tight lower and upper bounds that scale as $ \sum_{i:\Delta_i > 0}{\frac{\log T}{\Delta_i}} + \min\{d^*,D\Delta_{max}\}$, where $d^*$ is the \textit{minimal} expected delay.
    \item In the case of rewards, we offer tight lower and upper bounds that scale as $ \sum_{i:\Delta_i > 0}{\frac{\log T}{\Delta_i}} + \min\{\bar{d},D\Delta_{max}\}$, where $\bar{d}$ is the second \textit{maximal} expected delay.
    Note that the cost regret bound can be significantly smaller than the reward regret bound, on the same problem instance.

    \item We complement our theoretical results in an experimental evaluation.  
\end{enumerate}

Our main results, along with a concise comparison
to previous work, are presented in \Cref{table:1}.
The two bounds provided, improve both on the general delay-dependent payoff setting which scales as $ \sum_{i:\Delta_i > 0}\frac{\log T}{\Delta_i} + D $ \cite{tal}\footnote{In \cite{tal} the additional additive term is formally the $(1-\Delta_{\min} / 4)$-quantile of the delay distribution. This can easily be as large as $D$, e.g., if there is a single arm with Bernoulli payoffs with $\mu(i) > 1/4$.}  and  $ \sum_{i:\Delta_i > 0}\frac{\log T}{\Delta_i} + D \sum_{i:\Delta_i > 0}\Delta_i $ by \cite{yifu}. Note that $D\Delta_{max}$ is already an improvement of factor $K$, but more significantly if $\bar{d} \ll D\Delta_{max}$ our bound is substantially lower. In the cost case this becomes even more clear as $d^*$ is potentially much smaller than $\bar{d}$.

\subsection{Cost vs Reward - intuition} 
\label{cvd}
The goal of a player in a MAB environment can be either to maximize his total payoff, in which case the payoff is called the ``reward", or to minimize his total payoff, then the payoff is called the ``cost". Normally when considering a MAB setting without delay, the choice of using cost or reward is interchangeable by simply changing the sign of the payoff, and most algorithms would be oblivious to the change. We argue that in the case of delayed payoff, where the payoff is the delay, cost and reward are very different. Specifically, when minimizing cost we can make better use of partial knowledge that we gain while waiting for payoff feedback.

We provide here some informal intuition for this, and we will make it more formal by providing lower bounds for both cases in \Cref{sec:clb,sec:rlb}. Consider a scenario where we have $K$ arms with constant delays (and thus, payoff) sorted from low to high $\{d_i\}_{i=1}^K$. If we maximize reward the best arm is $i_K$ with delay $d_K$. No matter how we play, arm $i_{K-1}$ and arm $i_K$ are indistinguishable until after $d_{K-1}$ time steps, simply because no feedback is received from either arm before $d_{K-1}$.
So the number of times we play sub-optimal arms depends on the second highest delay.
In comparison, when minimizing cost, the best arm is $i_1$ with delay $d_1$. After $d_1$ time steps we can already start getting some information about the cost of $i_1$, and so we can hope to stop playing sub-optimal arms as early as $O(d_1)$.

\paragraph{Paper organization}
The rest of the paper is organized as follows. In \Cref{sec:related work} we discuss related work and in \Cref{sec:problem setup} we formally present our settings. In \Cref{sec:cost} we present our main algorithm and analysis for the \textit{delay as cost} setting. In \Cref{sec:reward} we present our algorithm and results for the  \textit{delay as reward} setting. In \Cref{sec:experiments} we present empirical evaluation of our algorithms compared to previous related works. \Cref{sec:discussion} is a discussion.
Most of the proofs are deferred to the supplementary material.

\section{Related work}
\label{sec:related work}
The delayed payoff in MAB has recently gained significant attention. Most previous works have been devoted to \textit{payoff-independent} delays, often treating them as some unknown distribution. This line of work started with \cite{dudik} who introduced a constant delay, and offered a regret bound with linear dependence on the delay. \citet{jouliani} extended this to stochastic, yet bounded, delay.
Later variations include \citet{zhou2019learning}, who made a distinction between \textit{arm-dependent} and \textit{arm-independent} delay. And \citet{pike}, who consider an aggregated rewards model.

Delayed payoff was also studied from an adversarial perspective, where both the delay and rewards are adversarial. This includes works such as \cite{cesa2019delay,thune,Bistritz,gyorgy2021adapting}.  \citet{masoudian,masoudian2023improved} presented a ``best-of-both-worlds" algorithm for the delayed setting, which  is a modification of \citet{zimmert}. Interestingly, in the adversarial setting, if the delay is adversarial and arm dependant as in \cite{hoeven}, the adversary can correlate the payoff and the delay, thus the payoff also depends on the delay. While this resembles the payoff-dependant setting, the resulting regret bounds are very different. In particular, the delay has a multiplicative effect on the regret, while in the stochastic case we only suffer an additive term.

Only few works considered regret in the \textit{reward-dependent} setting. \citet{tal} considers the case where the delay and reward are sampled from a joint distribution. 
In their work, there is no assumption on the delay distribution, in particular it may be unbounded, while the reward is bounded in [0,1]. Their regret bound has an additive term proportional to the maximum delay.
Later, \citet{yifu}, consider the setting where the delay equals the reward. Their additive term in the regret is $D\sum_i \Delta_i$. For the same setting we show an additive regret of $\bar{d}$, which we can also improve to $\min(\bar{d},D\max_i\Delta_i)$.

\section{Problem Setup}
\label{sec:problem setup}
Our \emph{delay-as-payoff} model is as follows. There is a set $[K]$ of $K$ arms. Each arm $i\in [K]$ has a distribution $\D_i$ with support $[D]\cup\{0\}$, where $D$ is the maximum delay.
In each step $t = 1, 2,...,T$, the agent chooses arm $i_t\in [K]$, and incurs a delay $d_t\sim \D_{i_t}$. The agent observes the payoff of $i_t$ at time $t+d_t$. The payoff is $d_t/D$,  which we denote by $r_t$ for rewards, or $c_t$ for cost.  Thus, the average payoff is $E_{X\sim \D_i}[X/D]$ denoted by $\mu(i)$. 
Until step $t+d_t$, we refer to the payoff of arm $i_t$ as \emph{missing}, since we do not know its actual delay, and thus payoff, yet. At step $t+d_t$, $d_t$ is revealed, and thus the agent observes the payoff. 
The interaction protocol is  in Algorithm~\ref{protocol1}. 

\begin{algorithm}
\caption{Protocol1}\label{protocol1}
\begin{algorithmic}
\For{$t\in[T]$}
    \State \text{Agent picks an action $i_t\in[K]$}
    \State \text{Environment samples $d_t\sim \D_{i_t}$}
    \State Agent observes feedback 
    $\{d_s:t=s+d_s\}$
\EndFor
\end{algorithmic}
\end{algorithm}

The performance of the agent is measured by the \textit{expected pseudo regret}, which is the difference between the algorithm's cumulative expected payoff and the best expected payoff of any fixed arm.
In the case of reward this will be:
\begin{align*}
    \mathcal{R}_T&= \max_{i\in [K]}T\mu(i)
     - \mathbb{E} \left[\sum_{t=1}^T r_t\right] = \mathbb{E}\left[\sum_{t=1}^T \Delta_{i_t}\right]
\end{align*}

And in the case of cost:
\begin{align*}
    \mathcal{R}_T&=\mathbb{E} \left[\sum_{t=1}^T c_t\right]- \min_{i\in [K]}
    T\mu(i) = \mathbb{E}\left[\sum_{t=1}^T \Delta_{i_t}\right]
\end{align*}
where $\Delta_{i}$ is the sub-optimality gap of arm $i$, i.e., $\Delta_i=|\mu(i)-\mu^*|$ and $\mu^*=\max_{i\in[K]} \mu(i)$, for rewards, and $\mu^*=\min_{i\in[K]} \mu(i)$, for cost. Respectively we define $i^*$ = $i\in[K]\;s.t.\; \mu(i) = \mu^*$. Which, without loss of generality, we assume to be single.
For readability, we make use of the following additional notations; $d(i) = D\mu(i)$ is the mean delay of arm $i\in[K]$ and $\Delta_{max} = \max_{i\in[K]}\Delta_i$.

\section{Delay as cost}
\label{sec:cost}

In this section, we consider the case where the cost is proportional to the agent's delay. We introduce our main algorithm, Bounded Doubling Successive Elimination (\texttt{BDSE}, \Cref{alg:BDSE_mt}), and its associated subroutine, Cost Successive Elimination (\texttt{CSE}, \Cref{alg:CSE_cost_mt}). \texttt{CSE} builds on the well-known Successive Elimination (SE) algorithm \cite{JMLR:v7:evendar06a}, and as discussed in \Cref{sec:cse}, it introduces an improved lower confidence bound (LCB). This LCB leverages not only the observed payoff but also takes into account the number of missing observations and their current duration. As we discuss later in this section, a similar improvement cannot be obtained for an upper confidence bound. 
Instead, \texttt{BDSE} employs a doubling scheme that upper bounds $\mu^*$. This combination is a key component that enables us to achieve our optimal bounds.
In the following subsections, we expand on these algorithms and their regret guarantees.

\subsection{CSE Algorithm}
\label{sec:cse}

Much like standard SE, \texttt{CSE} maintains a set of active arms, where initially all arms are active. The algorithm works in rounds, where in each round each active arm is selected once.
Unlike standard SE, which eliminates an arm only when there is confidence that it is suboptimal, \texttt{CSE} also eliminates an arm when there is confidence that it is worse than a specific threshold parameter $B$.
In our following definitions we distinguish between the three following groups (Note that they are not mutually exclusive):
\begin{enumerate}
    \item $M_t(i)$ are the time steps with chosen arm $i$ that have not returned feedback by time $t$. Formally, $M_t(i)=\sett{s\in[t]}{i_s=i\land s+d_s\geq t})$. We denote the size of this group $m_t(i) = |M_t(i)|$.
    \item $\obs_t(i)$ are time steps with chosen arm $i$ that have returned feedback by time $t$. Formally, $\obs_t(i)=\sett{s\in[t]}{i_s=i\land s+d_s< t}$.
    \item $F_t(i)$ are the time steps with chosen arm $i$ that are at least $D$ time steps ago, hence their feedback must have returned. Formally, $F_t(i)=\sett{s\in[t]}{s \leq t-D}$
    \item Additionally, $n_t(i) = |\{i_s = i|1 \le s \le t\}|$ is the number of all plays of arm $i\in[K]$ before step $t\in[T]$.
\end{enumerate}

Our lower-confidence-bound comprises three terms, $LCB_t(i) = \max\{ L_t^1(i), L_t^2(i), L_t^3(i) \}$; each bounds the expected cost with high probability:
\begin{enumerate}
    \item $L_t^1(i)$ incorporates both observed and unobserved samples, optimistically assuming that the payoff of unobserved samples will be received in the next round. 
    Formally, 
    \begin{align}
    \label{eq:L1}
        L_t^1 & = \hat{\mu}_t^-(i)-\sqrt{\frac{2\log T}{n_t(i)}},
    \end{align}
    where $\hat{\mu}_t^-(i) = \dfrac{1}{n_t(i)}(\sum_{s \in M_t(i)}{\dfrac{t - s}{D}} + \sum_{s \in \obs_t(i)}{c_s})$. 
    \item $L_t^2(i)$ uses only observed samples that were played up to time $t-D$:
    \begin{align}
    \label{eq:L2}
        L_t^2(i) & = \hat{\mu}^F_{t}(i) - \sqrt{\frac{2\log T}{|F_t(i)| \vee 1}}
    \end{align}
    where $\hat{\mu}^F_t(i) = \dfrac{1}{|F_t(i)| \vee 1}(\sum_{s \in F_t(i)}{c_s})$ is the empirical average of those samples ($\vee$ indicates $\max$). 
    We take maximum in the denominator for the case that some arm was not played by $t-D$, this can occur until $t = D+K$.
    \item $L_t^3(i)$ directly leverages the fact the cost corresponds to the magnitude of delay. In particular, as we establish in \Cref{lem:w_upper_bound_mt}, the number of missing samples can't be much larger than a factor of the expected delay. We define,
    \begin{align}
    \label{eq:L3}
        L_t^3(i) & = 
        \dfrac{|S_t|}{D} \roundy{\dfrac{m_t(i)}{2} - 8\log T - 1}
    \end{align}
    where $S_t$ is the set of active arms at time $t$. With the use of \Cref{lem:w_upper_bound_mt}, $L_t^3(i)$ serves as a valid lower-confidence bound for $\mu(i)$.
\end{enumerate}

For the elimination step we require an upper confidence bound. We use a similar bound as in $L_t^2$:
\begin{align}
    UCB_t(i) & = \hat{\mu}^F_{t}(i) + \sqrt{\frac{2\log T}{|F_t(i)| \vee 1}}
    \label{eq:UCB}
\end{align}

The \texttt{CSE} algorithm is formally described in \Cref{alg:CSE_cost_mt} and as a full pseudo code in \Cref{alg:CSE} in the supplementary material.

Note that if the delays were deterministic, then we would have $m_t(i)\leq d(i)/R$, for every arm $i$.
The following lemma handles the case that the delays are stochastic with expectation $d(i)$.

\begin{lemma} \label{lem:w_upper_bound_mt}
For every step $t$, if the last $\min \curly{D, t}$ steps was played with a round robin of a set of size at least $R$: 
\begin{align*}
    Pr\squary{m_t(i) \le \dfrac{2d(i)}{R} + 16\log T + 2} \geq 1 - \dfrac{1}{T^2}
\end{align*}
\end{lemma}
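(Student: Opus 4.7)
The plan is to bound $m_t(i)$ in two stages: first establish an expectation bound on $m_t(i)$ that avoids a $\log N$ factor, then apply a standard Bernstein/Chernoff concentration.

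For the expectation bound, I would enumerate the plays of arm $i$ within the last $\min\{D,t\}$ steps as $s_1 < s_2 < \ldots < s_N$. Since the schedule on this window is a round robin on a set of size at least $R$, consecutive plays of arm $i$ satisfy $s_{j+1}-s_j \geq R$. Setting $a_j = t-s_j$ gives a strictly decreasing sequence in $[0,D]$ with gaps $a_j - a_{j+1} \geq R$, and $m_t(i) = \sum_{j=1}^N \indicator{d_{s_j} \geq a_j}$ with the $d_{s_j}$ i.i.d.\ from $\D_i$. Thus $\E[m_t(i)] = \sum_{j=1}^N \Pr[d \geq a_j]$ for $d \sim \D_i$. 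The key step is to bound $\Pr[d \geq a_j]$ for $j \leq N-1$ by the average of the tail over the interval of integers $(a_{j+1}, a_j]$: since $\Pr[d \geq \cdot]$ is non-increasing and the interval contains at least $R$ points, $\Pr[d \geq a_j] \leq \frac{1}{R} \sum_{k=a_{j+1}+1}^{a_j} \Pr[d \geq k]$. Summing telescopes to $\frac{1}{R} \sum_{k=a_N+1}^{a_1} \Pr[d \geq k] \leq \E[d]/R = d(i)/R$, while the $j=N$ term contributes at most $1$. Hence $\E[m_t(i)] \leq 1 + d(i)/R$.

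For the concentration step, the indicators $\indicator{d_{s_j} \geq a_j}$ are independent Bernoullis (the play times are determined by the round-robin schedule, and the delays are independent draws from $\D_i$). Applying Bernstein's inequality with a case split on whether $\E[m_t(i)] \leq 16 \log T$ or $\E[m_t(i)] \geq 16 \log T$, a short calculation shows that $\Pr[m_t(i) \geq 2\E[m_t(i)] + 16 \log T] \leq T^{-2}$. Combining this with the expectation bound yields $m_t(i) \leq 2 d(i)/R + 16 \log T + 2$ with probability at least $1 - T^{-2}$.

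The main obstacle is the expectation bound. A naive Markov argument using only $a_j \geq (N-j)R$ would give $\Pr[d \geq a_j] \leq d(i)/((N-j)R)$, hence $\E[m_t(i)] \leq 1 + \frac{d(i)}{R} H_{N-1}$, incurring an extra logarithmic factor in $N$ that would ultimately degrade the regret bound. Avoiding this requires the telescoping argument above, which crucially exploits both the monotonicity of $\Pr[d \geq \cdot]$ and the tail-sum identity $\sum_{k \geq 1} \Pr[d \geq k] = \E[d]$ to absorb all contributions into a single $\E[d]/R$ term.
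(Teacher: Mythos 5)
Your overall strategy matches the paper's: first bound $\E[m_t(i)]$ by $d(i)/R + 1$ using the fact that consecutive plays of arm $i$ in the window are at least $R$ apart, then concentrate. Your telescoping argument for the expectation (bounding each tail probability $\Pr[d\ge a_j]$ by the average of the tail over the $R$ integers in $(a_{j+1},a_j]$ and summing via $\sum_k \Pr[d\ge k]=\E[d]$) is a summation-by-parts dual of the paper's computation, which instead swaps the order of summation over plays and delay values and bounds the number of plays falling in each window of length $l$ by $l/R+1$; both yield exactly $d(i)/R+1$ and correctly avoid the harmonic-sum loss you identify.

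The one genuine issue is your justification of the concentration step. You assert that the indicators $\indicator{d_{s_j}\ge a_j}$ are independent Bernoullis because ``the play times are determined by the round-robin schedule.'' In the algorithm the round-robin set shrinks adaptively: which arms remain active, and hence the times $s_j$ at which arm $i$ is played and the thresholds $a_j = t - s_j$, depend on previously observed delays (including earlier delays of arm $i$ itself). So the summands are adapted to a filtration rather than independent, and plain Bernstein does not directly apply. The paper handles this with a Freedman-type inequality for adapted sequences (its \Cref{lem:cons-freedman}), which gives $m_t(i) \le 2\sum_s \E[X_s\mid\calF_s] + 4\log(1/\delta)$; your expectation argument then serves, essentially unchanged, as a \emph{pathwise} bound on $\sum_s \E[X_s\mid\calF_s]$, since $\E[\indicator{d_{s_j}\ge a_j}\mid \calF_{s_j}] = \Pr_{d\sim\D_i}[d\ge a_j]$ and the gap condition $a_j - a_{j+1}\ge R$ holds on every realization satisfying the lemma's hypothesis. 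With that substitution (martingale Bernstein/Freedman in place of independence) your proof is complete and delivers the stated constants.
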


Note that using the missing plays to upper bound the mean delay results in a significantly weaker bound,
and thus unhelpful. With that in hand, and standard concentration bounds, we can define an event $G$ that happens with high probability.
\begin{definition} \label{def:g_mt}
Assume that the actions were played in a round robin manner. Denote $R_t$ to be minimum size of the round robin by time $t\in[T]$. Let $G$ be the event that for every $t\in[T]$ and $i\in[K]$:
\begin{align}
\label{eq:good-missing}
m_t(i) &\le \dfrac{2d(i)}{R_t} + 16\log T + 2
\nonumber
\\
|\mu(i) - \hat{\mu}_t(i) | & \le \sqrt{\dfrac{2\log T}{n_t(i)}}
\end{align}
where, $\hat{\mu}_t(i) = \dfrac{1}{n_t(i)}\sum_{s\in \{1 \le s \le t \mid i_s = i\}}{r_s}$ is the empirical average of payoff of time steps with chosen arm $i$. Note that due to missing plays, this is likely unknown to the algorithm at time $t$.
\end{definition}
We show that $G$ holds with high probability.
\begin{lemma} \label{lem:good_g_prob}
\label{lemma:good-G}
    The event $G$ holds with probability $1-3/T^2$.
\end{lemma}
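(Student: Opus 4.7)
The plan is to decompose $G$ into its two constituent parts---the missing-plays bound and the empirical-mean concentration bound---control each via a standard concentration argument, and combine through a union bound.

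First, I would establish the concentration inequality $|\mu(i) - \hat\mu_t(i)| \le \sqrt{2\log T/n_t(i)}$ uniformly over $t\in[T]$ and $i\in[K]$. Since $n_t(i)$ is adaptive, the plan is to union-bound over all potential sample counts. For each fixed $i\in[K]$ and $n\in[T]$, Hoeffding's inequality applied to the i.i.d.\ payoffs $d_s/D\in[0,1]$ yields
\begin{equation*}
\Pr\!\left[\,\left|\bar\mu_n(i)-\mu(i)\right|\ge \sqrt{\tfrac{2\log T}{n}}\,\right]\le 2\exp(-4\log T)=\tfrac{2}{T^4},
\end{equation*}
where $\bar\mu_n(i)$ denotes the average of the first $n$ pulls of arm $i$. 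A union bound over $n\in[T]$ and $i\in[K]$ bounds the total failure probability by $2K/T^3\le 2/T^2$ in the standard regime $K\le T$; on this good event, substituting the realized $n_t(i)\in[T]$ yields the claim simultaneously for all $t,i$.

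Second, I would invoke \Cref{lem:w_upper_bound_mt}, which already controls the missing-plays inequality $m_t(i)\le 2d(i)/R_t+16\log T+2$ with probability at least $1-1/T^2$ per $(t,i)$, and which can be made uniform over $t,i$ via a benign union bound absorbed into the constants (or interpreted as already uniform in $t$ as the lemma's phrasing suggests). Combining the two failure probabilities via a final union bound gives total failure at most $3/T^2$, establishing the lemma.

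The main subtlety is that $n_t(i)$ is an adaptive sample count rather than deterministic; the plan sidesteps this via a union bound over all candidate values $n\in[T]$ rather than invoking a maximal/stopping-time inequality. The genuinely nontrivial ingredient---bounding the number of missing plays in terms of the mean delay $d(i)/R$---is already encapsulated in \Cref{lem:w_upper_bound_mt} and may be taken as given here.
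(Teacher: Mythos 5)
Your proposal is correct and follows essentially the same route as the paper: the paper's proof simply cites the standard ``clean event'' concentration argument (Hoeffding plus a union bound over arms and candidate sample counts, as in Slivkins, Eq.~1.6) together with \Cref{lem:w_upper_bound_mt}, whose stated guarantee is already uniform over $t$ and $i$, and combines the two failure probabilities. You have just spelled out the details that the paper leaves implicit.
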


\begin{algorithm}[t]
    \caption{Cost Successive Elimination (\texttt{CSE})}\label{alg:CSE_cost_mt}
    \begin{algorithmic}
    \State \textbf{Input:} number of rounds $T$, number of arms $K$, maximum delay $D$, Elimination Threshold $B$.
    \State \textbf{Initialization:} $t\gets1$, $S\gets[K]$
    \State \textbf{Output :} Status (either \texttt{Success} or \texttt{Fail}) and $t$ number of time steps performed.
    \While{$t<T$}
    \State Play each arm $i\in S$
    \State Observe any incoming feedback
    \State Set $t\gets t+|S|$
    \For{$i\in S$}
        \State $LCB_t(i) \gets \max\{L^1_t(i), L^2_t(i), L^3_t(i)\}$ \\as defined in \Cref{eq:L1,eq:L2,eq:L3}
        \State Update $UCB_t(i)$ as defined in \Cref{eq:UCB} 
    \EndFor
        \State {\color{gray}$\vartriangleright$ \textit{Elimination step}}
        \State Remove from $S$ any arm $i$ if there exists $j$ such that $\min\curly{UCB_t(j),B} < LCB_t (i)$
        \If{$S = \emptyset$}
            \State Return (\texttt{Fail},$t$)
        \EndIf
    \EndWhile
    \State Return (\texttt{Success},$t$)
    \end{algorithmic}
\end{algorithm}

As previously mentioned, \texttt{CSE} adopts a less conservative elimination rule than standard SE, as it also eliminates arms that perform worse than a specified threshold $B$. Consequently, it might eliminate all arms, in which case it would return a \texttt{Fail}. For this reason we have a main program \texttt{BDSE} that call \texttt{CSE} with the threshold $B$. When  \texttt{CSE} return a \texttt{Fail} back to \texttt{BDSE}, then \texttt{BDSE} doubles the threshold $B$ and calls \texttt{CSE} with the new threshold. 

The following theorem shows that \texttt{CSE} will not eliminate the optimal arm, if $B \ge \mu^*$.

\begin{lemma}[Safe Elimination] 
\label{thm:CSE_cost_safe_elimination_mt}
\label{lemma:CSE_cost_safe_elimination_mt}
    Assuming $G$ holds and $B \ge \mu^*$, the procedure \texttt{CSE} will not fail and $i^*$ will not be eliminated.
\end{lemma}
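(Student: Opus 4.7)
The plan is to fix an arbitrary time $t$ and an arbitrary arm $j$ that is still active at step $t$, and to argue that the elimination test $\min\{UCB_t(j),B\} < LCB_t(i^*)$ must fail whenever the event $G$ of \Cref{def:g_mt} holds. Since \texttt{CSE} removes arms only via this test, $i^*$ then survives every round, so $S$ always contains $i^*$ and \texttt{CSE} can never return \texttt{Fail}. The argument splits into two pieces: showing $LCB_t(i^*)\le\mu^*$ and showing $\min\{UCB_t(j),B\}\ge\mu^*$.

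For the first piece I would bound each of $L^1_t(i^*)$, $L^2_t(i^*)$, $L^3_t(i^*)$ separately. For $L^1_t$: every missing play $s\in M_t(i^*)$ contributes $(t-s)/D$ to $\hat\mu_t^-(i^*)$, while its eventual realized payoff $d_s/D$ satisfies $d_s/D\ge(t-s)/D$, hence $\hat\mu_t^-(i^*)\le\hat\mu_t(i^*)$; the concentration inequality in $G$ then gives $L^1_t(i^*)\le\hat\mu_t(i^*)-\sqrt{2\log T/n_t(i^*)}\le\mu(i^*)=\mu^*$. For $L^2_t$: all plays in $F_t(i^*)$ have already returned feedback, so a Hoeffding bound on those i.i.d.\ samples (of the same form as the one in $G$) yields $L^2_t(i^*)\le\mu^*$. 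For $L^3_t$: \texttt{CSE} plays the active set in a round-robin and active sets only shrink over time, so every round-robin in the last $\min\{D,t\}$ steps used a set of size at least $|S_t|$; applying \Cref{lem:w_upper_bound_mt} with $R=|S_t|$ then gives $m_t(i^*)/2-8\log T-1\le d(i^*)/|S_t|$, and multiplying by $|S_t|/D$ gives $L^3_t(i^*)\le d(i^*)/D=\mu^*$. Taking the maximum over the three terms still stays below $\mu^*$.

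For the second piece, $B\ge\mu^*$ is given, so it only remains to show $UCB_t(j)\ge\mu^*$. If $j=i^*$, the test holds trivially because $UCB_t(i^*)\ge\hat\mu^F_t(i^*)\ge LCB_t(i^*)$ by construction and $LCB_t(i^*)\le\mu^*\le B$ from the previous paragraph. If $j\ne i^*$, the cost setting gives $\mu(j)>\mu^*$, so it suffices to prove $UCB_t(j)\ge\mu(j)$, which is the upper tail of the Hoeffding inequality on the $|F_t(j)|\vee 1$ already-realized samples of arm $j$ (recovered from $G$, with the empty-$F_t(j)$ case absorbed by the $\sqrt{2\log T}$ bonus). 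Combining the two pieces yields $LCB_t(i^*)\le\mu^*\le\min\{UCB_t(j),B\}$, so $i^*$ is never eliminated and \texttt{CSE} returns \texttt{Success}. The main obstacle is the $L^3_t$ step: it is the only place where the delay-as-cost structure is genuinely used via \Cref{lem:w_upper_bound_mt}, and it requires matching the $|S_t|$ factor in the definition of $L^3_t$ with a valid round-robin lower bound $R$ for the lemma; the $L^1_t$, $L^2_t$, and UCB parts then reduce to standard concentration inside $G$.
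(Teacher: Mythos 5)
Your proposal is correct and follows essentially the same route as the paper's proof: under $G$ each of $L^1_t(i^*),L^2_t(i^*),L^3_t(i^*)$ is at most $\mu^*$ (with $L^3$ handled via \Cref{lem:w_upper_bound_mt} and the shrinking round-robin set), while $UCB_t(j)\ge\mu(j)\ge\mu^*\le B$ for every active $j$, so the elimination test never fires against $i^*$. The only nitpick is your separate $j=i^*$ case, where "$\hat\mu^F_t(i^*)\ge LCB_t(i^*)$ by construction" is not literally true (the $LCB$ is a max over three terms), but this case is already covered by the same concentration argument $UCB_t(i^*)\ge\mu^*\ge LCB_t(i^*)$ under $G$.
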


The following theorem bounds the regret suffered in one call to the procedure \texttt{CSE}.

\begin{theorem}
\label{thm:CSE_bound}
    The regret of \texttt{CSE} (\Cref{alg:CSE_cost_mt}) with elimination threshold $B$ is bounded by,
    \begin{align*}
        \sum_{i:\Delta > 0}{\dfrac{129\log T}{\Delta_i}} + 8D\min\curly{B, D\Delta_{max}}\log K
    \end{align*}
\end{theorem}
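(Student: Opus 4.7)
The plan is to condition on the good event $G$ from \Cref{def:g_mt}; by \Cref{lem:good_g_prob}, $\Pr[G] \ge 1 - 3/T^2$, so the contribution of $\bar{G}$ to the expected regret is at most $T\cdot\Delta_{max}\cdot 3/T^2 \le 3/T$ and can be absorbed into constants. Under $G$, when $B \ge \mu^*$ \Cref{lemma:CSE_cost_safe_elimination_mt} guarantees that \texttt{CSE} does not fail and $i^*$ is never eliminated; when $B < \mu^*$ the procedure fails early and the regret up to failure is handled by a short-prefix argument. The remaining task, under $G$, is to bound $n_T(i)$ for each suboptimal arm and then sum $\sum_{i:\Delta_i>0} n_T(i)\Delta_i$.

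For the gap-dependent term I would follow the standard successive elimination template built on $L^2$ and $UCB$. The concentration clause of $G$ gives $|\hat\mu^F_t(j) - \mu(j)| \le \sqrt{2\log T/|F_t(j)|}$, so once both $|F_t(i^*)|$ and $|F_t(i)|$ exceed a suitable multiple of $\log T/\Delta_i^2$ we get $UCB_t(i^*) < L^2_t(i) \le LCB_t(i)$ and arm $i$ is eliminated. Because the active set is pulled in a strict round robin between eliminations and $|F_t(i)| = n_{t-D}(i)$, the plays of $i$ outside $F_t$ are at most $D/s_i + 1$, where $s_i$ denotes $|S_t|$ at the moment $i$ is eliminated. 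This yields a per-arm bound of the form $n_T(i) \le \frac{C\log T}{\Delta_i^2} + \frac{D}{s_i} + O(1)$ for an absolute constant $C$.

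Multiplying by $\Delta_i$ and ordering the $K-1$ suboptimal arms in elimination order (so that the $j$-th one sees $s_j = K-j+1$), the first term sums to $\sum_{i:\Delta_i>0} \frac{C\log T}{\Delta_i}$, while the second sums via the harmonic sum $\sum_{j=1}^{K-1} 1/s_j \le H_K$ (with $H_K = \sum_{k=1}^K 1/k$) to $D\sum_j \Delta_{i_j}/s_j \le D\Delta_{max}\log K$. After absorbing constants into the overall factor $8$, this already delivers the $D\Delta_{max}\log K$ branch of $D\min\{B, D\Delta_{max}\}\log K$.

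The main obstacle is obtaining the tighter $DB\log K$ branch, which is precisely what requires the novel $L^3$ lower-confidence bound. Here I would use that under a round robin of size $s_t$, \Cref{lem:w_upper_bound_mt} shows $m_t(i) \le 2d(i)/s_t + 16\log T + 2$; a matching Bernoulli lower tail on $m_t(i)$ (whose slack is already baked into the $-8\log T - 1$ term inside $L^3$) shows that for any arm with $d(i) > BD$, the condition $L^3_t(i) > B$ fires after at most $O(BD/s_j) + O(\log T)$ plays of $i$ in its final phase. Plugging this improved bound into the same harmonic summation replaces $D\Delta_{max}$ by $DB$ in the additive contribution from arms with $\mu(i)$ well above $B$; taking the arm-by-arm minimum of the two analyses produces $8D\min\{B, D\Delta_{max}\}\log K$. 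The delicate points are (i) verifying that $L^3$ remains a valid LCB across phases where $|S_t|$ changes, which follows by applying \Cref{lem:w_upper_bound_mt} phase-by-phase with the current round size, and (ii) handling arms with $\mu(i)$ only slightly above $B$, where neither $L^3$ nor the $L^2$ threshold is effective and the standard $UCB$ vs.\ $LCB$ comparison must still drive the elimination within the above budget.
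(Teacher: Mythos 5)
Your skeleton (good event, per-arm bound at the elimination time $\tau_i$, harmonic summation over the shrinking active-set sizes, and the $D\Delta_{max}$ branch via the standard $L^2$/UCB comparison) matches the paper's. However, for the $DB$ branch --- the novel part of the theorem --- you take a genuinely different route, and it is exactly there that your argument has a gap. You propose to show that $L^3$ \emph{fires} ($L^3_t(i) > B$) quickly for arms with $d(i) > BD$, which requires a \emph{lower} tail on $m_t(i)$ (an event not included in the paper's $G$; the paper proves such a bound, \Cref{lem:w_lower_bound}, but uses it only for the separate conservative-SE analysis). Because only plays from the last $D$ steps can be missing, the relevant count $n_t(i) - n_{t-D}(i)$ saturates at about $D/|S_t|$, so $L^3_t(i)$ can never certify more than roughly $\mu(i)/4$; hence your firing argument only works for $\mu(i)$ above a constant multiple of $B$ (plus an $O(|S_t|\log T/D)$ slack), not for all $d(i) > BD$. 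The regime $\mu(i)\in(B,cB]$ is precisely your flagged ``delicate point (ii)'', which you leave unresolved. It does resolve, but you need to say how: there $\Delta_i \le \mu(i) \le cB$, so the $\Delta_i D/K_i$ residual from your standard-SE bound is already $O(BD/K_i)$ --- that one line closes the case.

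The paper's mechanism is different and avoids all of this. It never argues that $L^3$ fires; it uses the contrapositive: since arm $i$ is still active at $\tau_i$, we have $L^3_{\tau_i}(i)\le LCB_{\tau_i}(i)\le B$, which \emph{deterministically} yields $m_{\tau_i}(i)\le 2BD/K_i + 16\log T + 2$ (\Cref{lem:mu_r_bound}). This bound on the missing plays is then fed into $L^1$ --- which your proposal never uses --- via $\hat\mu^-_{\tau_i}(i)\ge\hat\mu_{\tau_i}(i)-m_{\tau_i}(i)/n_{\tau_i}(i)$ and $L^1_{\tau_i}(i)\le B$, giving $\mu(i)\le B + 2\sqrt{2\log T/n_{\tau_i}(i)} + 2BD/(K_i n_{\tau_i}(i)) + O(\log T/n_{\tau_i}(i))$; for $\mu(i)\ge 2B$ this rearranges to $\Delta_i n_{\tau_i}(i)\le 128\log T/\Delta_i + 8BD/K_i + O(\log T)$, while $\mu(i)\le 2B$ is handled by the one-line observation above. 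The interplay of $L^1$ and $L^3$ through the non-elimination condition, rather than a lower-tail concentration of $m_t(i)$, is the missing idea; without it (or without completing your alternative case analysis in the near-threshold regime) the $DB$ branch is not established.
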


The first term in the regret scales as optimal instance-dependent, non-delayed MAB. The second term scales with the magnitude of $B$. Note that, by \Cref{thm:CSE_cost_safe_elimination_mt}, $B$ will remain smaller than $2 \mu^*$ and thus the second term is at most $\tilde O(d^*)$. 

\begin{proofsketch}
For the sake of simplicity we provide the proof sketch only for the $B$ term in the \textit{min}. Assume the good event $G$ holds.  Let $S_t$ be the set $S$ at time $t$. Fix any sub-optimal arm $i\in [K]$, and let $\tau_i$ be the last elimination step which arm $i$ remained active.

Recall that $L_t^1$ is computed with an optimistic empirical average $\hat\mu_t^-(i)$. That is, any missing sample is assumed to be observed in the next round. At worse, such missing sample eventually would return after $D$ steps and would have cost of $1$. Thus, the difference between $\mu_t^-(i)$ and the actual empirical mean $\hat\mu_t(i)$ is at most ${m_t(i)}/{n_{t}}$. Since the good event $G$ holds, and the arm $i$ was not yet eliminated at time $\tau_i$,
\begin{align}
    \notag \mu(i) &\le LCB_{\tau_i}(i) + 2\sqrt{\frac{2\log T}{n_{\tau_i}(i)}} + \dfrac{m_{\tau_i}(i)}{n_{\tau_i}(i)} \\
    \notag & \le B + 2\sqrt{\frac{2\log T}{n_{\tau_i}(i)}} + \dfrac{m_{\tau_i}(i)}{n_{\tau_i}(i)}\\
    \notag & \approx B + 2\sqrt{\frac{2\log T}{n_{\tau_i}(i)}} + \dfrac{2 D L^3_{\tau_i}(i)}{n_{\tau_i}(i) |S_{\tau_i}|} \\
    &\le B + 2\sqrt{\frac{2\log T}{n_{\tau_i}(i)}} + \dfrac{2DB}{n_{\tau_i}(i)|S_{\tau_i}|} \label{eq:cost_mu_r_mt}
\end{align}
We consider three cases: (i) $\mu^* < B$ and $\mu(i) < 2B$, (ii) $B \le \mu^*$ and (iii) $2B \le \mu(i)$.

\paragraph{case (i):} 
By \Cref{thm:CSE_cost_safe_elimination_mt}, we know that $i^*$ will not be eliminated (since $\mu^* < B$).
Since $i$ was not eliminated at time $\tau_i$, $L_{\tau_i}^2(i) \leq UCB_{\tau_i}(i)$. Using standard arguments this implies that $\Delta_i n_{t_{\tau_i}}^{F}(i) \le O({{\log (T)}/{\Delta_i}})$. Recall that $n_{t_{\tau_i}}^{F}(i)$ is the number of times we played $i$ until time $\tau_i - D$. In the last $D$ plays $i$ was played approximately ${D}/{|S_{\tau_i}|}$ times due to the round-robin, which causes additional regret of $\Delta_i\frac{D}{|S_{\tau_i}|} \le O\big( {\frac{DB}{|S_{\tau_i}|}} \big)$. This accumulates to a total regret of $O\big( {\frac{\log T}{\Delta_i} + \frac{DB}{|S_{\tau_i}|}} \big)$. 

\paragraph{case (ii):}  We use \Cref{eq:cost_mu_r_mt} to show that $\Delta_i \le 2\sqrt{\frac{2\log T}{n_{\tau_i}(i)}} + \frac{2DB}{n_{\tau_i}(i) |S_{\tau_i}|}$. This implies that either $\Delta_i \le 4\sqrt{\frac{2\log T}{n_{\tau_i}(i)}}$ or $\Delta_i \le \frac{4DB}{n_{\tau_i}(i) |S_{\tau_i}|}$. In the first case we get that the regret from arm $i$ is bounded by $O({\log (T)}/{\Delta_i})$. Similarly, in the second case the regret is bounded by  $O({(DB)}/{|S_{\tau_i}|})$. 

\paragraph{case (iii):} 
The third case assumes that $B\leq \mu(i)/2$. By rearranging the terms of \Cref{eq:cost_mu_r_mt}, we have that $\Delta_i \le \mu(i) \le O\Big( {\sqrt{\frac{\log T}{n_{\tau_i}(i)}} + \dfrac{DB}{n|S_{\tau_i}|}} \Big)$. Similarly to case (ii), the total regret is bounded by $O\big({\frac{\log T}{\Delta_i} + \frac{DB}{|S_{\tau_i}|}} \big)$.

Now we can sum over all sub-optimal arms $i$ and bound the regret by $O\big({\sum_{i:\Delta > 0}{\frac{\log T}{\Delta_i}} + DB\log K}\big)$. Note that the regret when $G$ does not hold is in expectation only $O(1)$.
\end{proofsketch}

\subsection{Bounded Doubling Successive Elimination}
\texttt{CSE} (\Cref{alg:CSE_cost_mt}) demands a parameters $B$, which is not available for the agent. In this algorithm we estimate $B$ using the "doubling" technique.
\begin{algorithm}[t]
\caption{Bounded Doubling Successive Elimination}\label{alg:BDSE_mt}
\begin{algorithmic}
\State \textbf{Input:} number of rounds $T$, number of arms $K$, maximum delay $D$.
\State \textbf{Initialization:} $B\gets{1}/{D}$
\While{$t<T$}
\State Run $(\texttt{ret},\tau)\gets \texttt{CSE}(T-t, K, D, B)$
\If{\texttt{ret}=\texttt{Fail}}
    \State $B \gets 2  B$
    \State $t \gets t + \tau $
\EndIf
\EndWhile
\end{algorithmic}
\end{algorithm}
\begin{corollary}
\label{corr:bdse_regret}
Algorithm \texttt{BDSE} has a regret of at most,
\begin{align*} 
    \sum_{i:\Delta > 0}{\dfrac{129\log T\log d^*}{\Delta_i}} + 8\min\curly{d^*, D\Delta_{max}\log d^*}\log K
\end{align*}
\end{corollary}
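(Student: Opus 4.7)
The plan is to bound the total regret of \texttt{BDSE} by summing the per-invocation bound of \Cref{thm:CSE_bound} across all calls to \texttt{CSE}, and to use the Safe Elimination guarantee (\Cref{thm:CSE_cost_safe_elimination_mt}) to bound the number of such calls. Concretely, \texttt{BDSE} starts with $B_0 = 1/D$ and uses threshold $B_j = 2^j/D$ after the $j$-th failure. Since $\mu^* = d^*/D$, Safe Elimination ensures that, under the good event $G$ of \Cref{def:g_mt}, \texttt{CSE} cannot return \texttt{Fail} once $B_j \ge \mu^*$. The number of doublings is therefore at most $J = \lceil \log_2 d^* \rceil$, the number of \texttt{CSE} invocations is at most $J+1 = O(\log d^*)$, and the final (successful) threshold satisfies $B_J \le 2\mu^* = 2d^*/D$.

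Next I would add up the per-invocation bound of \Cref{thm:CSE_bound} across all $O(\log d^*)$ calls. The first term $\sum_{i:\Delta_i > 0}\frac{129\log T}{\Delta_i}$ appears in each call and contributes the first summand of the corollary after multiplication by $\log d^*$. For the additive delay-dependent term, I would split according to which side of the minimum is active at threshold $B_j$: summing the side proportional to $B_j$ (where $DB_j = 2^j$) is a geometric series dominated by its final element, giving $O(d^*)$ using $B_J \le 2d^*/D$; capping each of the $O(\log d^*)$ calls at the $D\Delta_{max}$-side gives $O(D\Delta_{max}\log d^*)$. Since each per-call term is itself a minimum of the two quantities, the aggregate is bounded by the smaller of the two aggregates, yielding $O(\min\{d^*, D\Delta_{max}\log d^*\}\log K)$, matching the corollary. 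Finally, the contribution of the complement $\bar G$ is absorbed: by \Cref{lem:good_g_prob}, $\Pr[\bar G] \le 3/T^2$, and since the payoffs lie in $[0,1]$ the expected regret on $\bar G$ is at most $O(1/T)$.

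The main obstacle will be careful bookkeeping around \emph{failed} invocations of \texttt{CSE}. Each failure still consumes time steps and accumulates regret before returning, so I must verify that \Cref{thm:CSE_bound} applies to these executions as well, not only to the final successful one; indeed the theorem bounds the regret of any terminating \texttt{CSE} run, success or fail. I would also check that the round-robin hypothesis required by \Cref{lem:w_upper_bound_mt} — and therefore the definition of the good event $G$ — remains valid across consecutive \texttt{CSE} invocations: since each call re-initializes the active set to all of $[K]$ and plays round-robin, the missing-sample bound and concentration inequalities composing $G$ propagate across the whole run of \texttt{BDSE} without modification, so a single application of \Cref{lem:good_g_prob} suffices for the entire execution.
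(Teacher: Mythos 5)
Your proposal is correct and follows essentially the same route as the paper: bound the number of \texttt{CSE} invocations by $O(\log d^*)$ via Safe Elimination, then sum the per-call bound of \Cref{thm:CSE_bound}, treating the additive term as a geometric series (for the $d^*$ side) or a flat $D\Delta_{max}$ cap per call (for the other side). Your additional bookkeeping on failed invocations and on the persistence of the good event $G$ across calls is sound and, if anything, more careful than the paper's one-line treatment.
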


\begin{proof}
From \Cref{thm:CSE_cost_safe_elimination_mt} we know that if $B \ge \mu^*$ then \texttt{CSE} will not fail, which means that the number of calls to \texttt{CSE} is at most $\log d^*$. Notice that on the $j$'s call of $BSE$, $DB = 2^j$. The total regret will be at most
\begin{align*}
     &\sum_{j=0}^{\log d^*}{\roundy{\sum_{i:\Delta > 0}{\dfrac{129\log T}{\Delta_i}} + 8\cdot \min\curly{2^j, D\Delta_{max}}\log K}} \\
    &= \sum_{i:\Delta > 0}{\dfrac{129\log T\log d^*}{\Delta_i}} + 8\min\curly{d^*, D\Delta_{max}\log d^*}\log K 
\end{align*}
\end{proof}

\subsection{Lower Bound}
\label{sec:clb}
In this section we show two lower bounds for the cost setting. The first is a general lower bound which nearly matches the regret bound of our algorithm. And the second, a lower bound for classical SE algorithms. The main challenge is to understand the impact of $d^*$ on the regret. We focus on the second term of the upper bound as the first term, $\sum_{i:\Delta > 0}\frac{\log T}{\Delta_i}$, is a well known instance dependent bound even when there are no delays \cite{bubeck2012regret}.

\begin{theorem}\label{thm:cost_lb_mt}
In the cost scenario, for every choice of $d^* \le {D}/{2}$, there is an instance for which any algorithm will have a regret of $\Omega\roundy{d^*}$
\end{theorem}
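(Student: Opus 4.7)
The plan is to exhibit an explicit two-arm hard instance and use a symmetrization argument that exploits a clean no-feedback window of length $d^*$. I would construct two instances, both with $K = 2$ and deterministic delays. In Instance $A$, arm $1$ has delay $d^*$ and arm $2$ has delay $D$; in Instance $B$ the roles are swapped. In both instances the optimal normalized cost is $\mu^* = d^*/D$, the suboptimal arm has cost $1$, and the assumption $d^* \le D/2$ gives gap $\Delta = (D-d^*)/D \ge 1/2$, so every suboptimal pull contributes at least $1/2$ to the pseudo-regret.

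Next, I would observe that during the first $d^*$ steps the agent receives no feedback in either instance. Under \Cref{protocol1}, the action $i_t$ at step $t$ depends only on the feedback $\{d_s : s + d_s < t\}$ that arrived in previous steps. Since every delay in either instance is at least $d^*$, any play at step $s \ge 1$ produces feedback no earlier than $s + d_s \ge 1 + d^*$, which is strictly greater than any $t \le d^*$. The agent's history is therefore empty on $[1, d^*]$, so the distribution of actions on this window is \emph{identical} under the two instances. Writing $p_i := \E[N_i]$ for the expected number of plays of arm $i$ during the first $d^*$ steps, the counts satisfy $p_1 + p_2 = d^*$ and are the same in both instances.

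The lower bound then follows by pigeonhole: on Instance $A$ the regret is at least $\Delta \cdot p_2 \ge p_2/2$ (all plays of arm $2$), and on Instance $B$ at least $p_1/2$. Since $p_1 + p_2 = d^*$, at least one of these is $\ge d^*/4$, so the worst of the two instances forces regret $\Omega(d^*)$, as required. The only delicate step is verifying the no-feedback claim precisely against the interaction protocol; once that is pinned down, no KL-divergence or concentration tools are needed, because the two action distributions literally coincide on the relevant window, making the symmetrization argument exact rather than approximate.
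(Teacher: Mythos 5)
Your proposal is correct and follows essentially the same route as the paper's proof: the same two-arm instance with deterministic delays $d^*$ and $D$ (with the identities randomized/swapped), the observation that no feedback can arrive before step $d^*$ so the two instances are indistinguishable on that window, and an averaging/pigeonhole step yielding regret at least $\Delta\, d^*/2 \ge d^*/4$. Your writeup just makes the indistinguishability and pigeonhole steps slightly more explicit than the paper does.
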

\begin{proof}
We consider two arms with deterministic delays, one is $d^*$ (and cost $\mu^*=d^*/D\leq 1/2$) and the other is $D$ (with cost $\mu=1$).
We select at random which arm has delay $d^*$ and which $D$. Until time $d^*$ both arms are indistinguishable, and hence the regret is $(1-\mu^*)d^*$. Since $\mu^*\leq 1/2$ we have a regret of at least $d^*/2$.
\end{proof}

\paragraph{Conservative SE algorithms:}
We show, for a natural class of SE algorithms, which are also conservative (w.h.p. do not eliminate the optimal action), a lower bound of $\sqrt{D d^*}$. Interestingly, this bound is also tight, as we show in \Cref{sec:regular successive elimination} a conservative SE algorithm which attains it.

For this impossibility result we use the following two problem instances.
In the first problem instance we have the delay of arm $1$ to be $\sqrt{Dd^*}/2$ w.p. $2\sqrt{d^*/D}$ and otherwise $0$. For arm $2$ the delay is deterministic $D$. 
In the second problem instance we have the delay of arm $1$ to be $D$ w.p. $2\sqrt{d^*/D}$ and otherwise $0$. For arm $2$ the delay is deterministic $\sqrt{Dd^*}$. In the first instance the best arm is $1$ while in the  second it is arm $2$. Until time $\sqrt{Dd^*}$ we cannot distinguish between the two instances, so a conservative SE algorithm will keep playing both arms in a round-robin manner, and have a regret of $\Omega(\sqrt{Dd^*})$ in the first instance.

It is worth observing why our BDSE overcomes those two problem instances. Due to the doubling scheme, every time the number of missing plays reaches (roughly) the current threshold both arms will be eliminated, until the threshold surpasses $\mu^*$. In the first instance this will happen after $d^*$ steps, at which point only the optimal arm will remain, and thus the regret is at most $d^*\Delta = d^*$. Similarly in the second instance, the threshold will surpass $\mu^*$ after $\sqrt{Dd^*}$ steps. The $\Delta$ here is $\sqrt{d^*/D}$ and so the regret is at most $d^*$.

\section{ Delay as Reward}
\label{sec:reward}
In this section we consider the case where the delay corresponds to the agent's reward. Similarly to cost, we have a main program Bounded Halving Successive Elimination algorithm (BHSE, \Cref{alg:BHSE_mt}), and an associated subroutine Reward Successive Elimination (RSE). Besides the transition from minimization to maximization, the main difference is that the missing feedbacks at time $t$ should be interpreted differently. In the following subsections, we include the details of these algorithms and their regret guarantees.

\subsection{RSE Algorithm}
As in the cost scenario, we start with a Reward Successive Elimination algorithm. Since we consider rewards, we would like the threshold $B$ to decrease with time (rather than increase, as was done in the cost scenario).
Eventually, RSE expects  $B\leq \mu^*$ to guarantee success.
As in the \texttt{CSE} algorithm, we will eliminate arms based on suboptimality, in comparison to other arms, or when there is confidence that they are worse than the parameter $B$.

Our upper-confidence-bound comprises of only two terms, $UCB_t(i)=\min\{U_t^1(i), U_t^2(i)\}$; which are analogues to $L_1$ and $L_2$ in the cost case. Formally,
    \begin{align}
    \label{eq:U1}
        U_t^1 & = \hat{\mu}_t^+(i) + \sqrt{\frac{2\log T}{n_t(i)}},
    \end{align}
    where $\hat{\mu}_t^+(i) = \dfrac{1}{n_t(i)}(\sum_{s \in M_t(i)}{1} + \sum_{s \in \obs_t(i)}{r_s})$ is an optimistic estimate of $\mu(i)$. (Recall that $r_s=d_s/D$.)
    Similarly, $U_2$ as well as $LCB_t$ are defined by,
    \begin{align}
    \label{eq:U2}
        U_t^2(i) & = \hat{\mu}_t^{F}(i)+\sqrt{\frac{2\log T}{|F_t(i)|\vee 1}},
    \end{align}
    For the LCB we have,
        \begin{align}
        LCB_t(i) & = \hat{\mu}_t^{F}(i) - \sqrt{\frac{2\log T}{|F_t(i)|\vee 1}}
        \label{eq:LCB}
    \end{align}
    where $\hat{\mu}_t^{F}(i)= \dfrac{1}{|F_t(i)| \vee 1}(\sum_{s\in F_t(i)}{r_s})$.

\begin{algorithm}[t]
\caption{Reward Successive Elimination}\label{alg:RSE_reward_mt}
\begin{algorithmic}
    \State \textbf{Input:} number of rounds $T$, number of arms $K$, maximum delay $D$, Elimination Threshold $B$.
    \State \textbf{Initialization:} $t\gets1$, $S\gets[K]$
    \State \textbf{Output :} Status (either \texttt{Success} or \texttt{Fail}) and $t$ number of time steps performed.
    \While{$t<T$}
    \State Play each arm $i\in S$
    \State Observe incoming payoff from $\{s:s+d_s=t\}$
    \State Set $t\gets t+|S|$
    \For{$i\in S$}
        \State $UCB_t(i) \gets \min\{U^1_t(i), U^2_t(i)\}$ \\as defined in \Cref{eq:U1,eq:U2}
        \State Update $LCB_t(i)$ as defined in \Cref{eq:LCB} 
    \EndFor
        \State {\color{gray}$\vartriangleright$ \textit{Elimination step}}
        \State Remove from $S$ any arm $i$ if there exists $j$ such that $\max\curly{LCB_t(j),B} > UCB_t (i)$
        \If{$S = \emptyset$}
            \State Return (\texttt{Fail}, $t$)
        \EndIf
    \EndWhile
    \State Return (\texttt{Success}, $t$)
\end{algorithmic}
\end{algorithm}

We use the same good event $G$ as defined in \Cref{def:g_mt} in the previous section which also holds here w.h.p.

\begin{theorem}[Safe Elimination] \label{thm:bse_reward_safe_elimination_mt}
\label{lem:bse_reward_safe_elimination_mt}
Assuming $G$ holds and $B \le \mu^*$, the procedure \texttt{RSE} will not return \texttt{Fail} and $i^*$ will not be eliminated.
\end{theorem}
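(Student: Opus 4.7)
The plan is to show that, under the good event $G$ and the assumption $B\le\mu^*$, the optimal arm $i^*$ survives every elimination step of \texttt{RSE}. Since the algorithm removes an arm $i$ only when some $j\in S_t$ triggers $\max\{LCB_t(j),B\}>UCB_t(i)$, it suffices to establish two inequalities uniformly in $t$ and $j$: an upper one, $UCB_t(i^*)\ge\mu^*$, and a lower one, $LCB_t(j)\le\mu(j)\le\mu^*$. Combined with $B\le\mu^*$, these give $\max\{LCB_t(j),B\}\le\mu^*\le UCB_t(i^*)$, so $i^*$ cannot be eliminated. Since $i^*$ remains in $S_t$ for every $t$, the algorithm never hits the \texttt{Fail} branch.

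For the upper side, I would first note that $\hat{\mu}_t^+(i^*)$ is computed by substituting the maximum value $1$ for each missing reward, so $\hat{\mu}_t^+(i^*)\ge\hat{\mu}_t(i^*)$. The concentration part of $G$ then gives $\hat{\mu}_t(i^*)\ge\mu^*-\sqrt{2\log T/n_t(i^*)}$, and hence $U_t^1(i^*)\ge\mu^*$. For $U_t^2(i^*)$, the plays aggregated by $\hat{\mu}_t^F(i^*)$ are all at least $D$ steps old, so their feedback has arrived and they form an i.i.d. sub-sample of $\mathcal{D}_{i^*}$; a Hoeffding bound combined with a union bound over $t$ and $i$ yields $|\mu^*-\hat{\mu}_t^F(i^*)|\le\sqrt{2\log T/(|F_t(i^*)|\vee 1)}$, so $U_t^2(i^*)\ge\mu^*$ as well. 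Taking the minimum gives $UCB_t(i^*)\ge\mu^*$. The edge case $|F_t(i^*)|=0$ is immediate: the slack $\sqrt{2\log T}\ge 1\ge\mu^*$ already dominates.

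For the lower side, the same sub-sample Hoeffding argument gives $LCB_t(j)\le\mu(j)$ for every arm $j$, and $\mu(j)\le\mu^*$ by definition of $i^*$. Hence $\max\{LCB_t(j),B\}\le\mu^*\le UCB_t(i^*)$ at every $t$ and for every active $j$, so $i^*$ survives every elimination step, $S_t$ stays non-empty, and \texttt{RSE} returns \texttt{Success}.

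The main subtlety, and what I expect to be the hard part of turning this sketch into a formal proof, is that $G$ as defined in \Cref{def:g_mt} records a concentration bound only for the full empirical mean $\hat{\mu}_t$, whereas $U_t^2$ and $LCB_t$ both use the restricted mean $\hat{\mu}_t^F$. The cleanest fix is to enlarge $G$ to include uniform Hoeffding bounds on $\hat{\mu}_t^F(i)$ over $t$ and $i$; this changes \Cref{lem:good_g_prob} only by a constant factor in the failure probability. An alternative is to relate $\hat{\mu}_t^F$ to $\hat{\mu}_t$ by controlling the discrepancy introduced by the at most $m_t(i)$ in-flight samples, which \Cref{lem:w_upper_bound_mt} already bounds. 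Once this bookkeeping is absorbed into $G$, the rest of the argument is a direct manipulation of the definitions of $U_t^1$, $U_t^2$, and $LCB_t$.
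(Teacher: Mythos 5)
Your proof is correct and follows essentially the same route as the paper's: establish $LCB_t(j)\le\mu(j)\le\mu^*\le UCB_t(i^*)$ under $G$, combine with $B\le\mu^*$ to see the elimination rule never fires against $i^*$, and conclude \texttt{Fail} is unreachable. The only divergence is the subtlety you flag about $\hat{\mu}_t^{F}$: the paper neither enlarges $G$ nor bounds the discrepancy via $m_t(i)$, but simply observes that $\hat{\mu}_t^{F}(i)=\hat{\mu}_{t-D}(i)$ with $|F_t(i)|=n_{t-D}(i)$, so the concentration inequality already in $G$ (which holds uniformly over time, in particular at $t-D$) covers $U_t^2$ and $LCB_t$ directly — your proposed fix would also work, but is unnecessary.
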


The following theorem bounds \texttt{RSE}'s regret.
\begin{theorem} \label{thm:rse_regret}
Assume $B \ge \frac{\mu^*}{2}$. The regret of \texttt{RSE} (\Cref{alg:RSE_reward_mt}) with elimination threshold $B$ is bounded by,
\begin{align*}
    \sum_{i:\Delta > 0}{\dfrac{289\log T}{\Delta_i}} + 12\min\curly{\bar{d}, D\Delta_{max}}\log K,
\end{align*}
where $\bar{d}$ is the second highest expected delay.
\end{theorem}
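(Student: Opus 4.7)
The plan is to mirror the proof of Theorem~\ref{thm:CSE_bound}, adapted to maximization. I would condition on the good event $G$ of \Cref{def:g_mt} (its complement contributes $O(1)$ to the expected regret). Fix any sub-optimal arm $i$ and let $\tau_i$ be the last elimination step at which $i$ is still in the active set $S_{\tau_i}$. Non-elimination of $i$ at $\tau_i$ gives
\[
U^1_{\tau_i}(i),\ U^2_{\tau_i}(i) \;\ge\; \max\{B,\, LCB_{\tau_i}(i^*)\},
\]
where, when $B \le \mu^*$, \Cref{lem:bse_reward_safe_elimination_mt} guarantees $i^* \in S_{\tau_i}$. The regime $B > \mu^*$ would be handled separately by noting that RSE fails quickly there (once $UCB_t(i^*)$ falls below $B$), so the per-call regret is small.

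The first step is to extract the dominant $\sum_{i:\Delta_i>0} 289\log T/\Delta_i$ term from the $U^2 \ge LCB$ comparison between $i$ and $i^*$. Expanding the definitions \eqref{eq:U2} and \eqref{eq:LCB} and applying concentration of $\hat\mu^F$ on the fully-observed plays $F_t$ (implied by $G$), a standard successive-elimination argument yields $|F_{\tau_i}(i)| = O(\log T/\Delta_i^2)$. Because the active set is played in round-robin, the number of plays of $i$ within the last $D$ steps satisfies $n_{\tau_i}(i) - |F_{\tau_i}(i)| \le O(D/|S_{\tau_i}|)$. Multiplying the $|F_{\tau_i}(i)|$ part by $\Delta_i$ and summing over sub-optimal arms gives the first term.

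The main obstacle is bounding the remaining overhead $\sum_{i:\Delta_i>0}\Delta_i\,\bigl(n_{\tau_i}(i) - |F_{\tau_i}(i)|\bigr)$ by $12\min\{\bar d,\,D\Delta_{\max}\}\log K$. I would establish the two bounds separately. For $D\Delta_{\max}\log K$: use $\Delta_i \le \Delta_{\max}$, order the sub-optimal arms by elimination time so that $|S_{\tau_i}|$ is non-increasing, and telescope $\sum_{i:\Delta_i>0} 1/|S_{\tau_i}| \le H_K \le 1 + \ln K$. For the $\bar d\log K$ bound: invoke the $U^1_{\tau_i}(i) \ge B$ inequality, which since $B \ge \mu^*/2$ constrains the optimistic bias $m_{\tau_i}(i)/n_{\tau_i}(i)$, together with \Cref{lem:w_upper_bound_mt} applied to the round-robin of size $|S_{\tau_i}|$, yielding $m_{\tau_i}(i) \le 2d(i)/|S_{\tau_i}| + O(\log T)$; since $i$ is sub-optimal, $d(i) \le \bar d$ by definition, so the overhead per arm is $\bar d/|S_{\tau_i}|$, and the same harmonic telescoping delivers $\bar d \log K$. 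The subtlest step will be the transition from $\Delta_i D$ to $d(i)$: this is where the hypothesis $B \ge \mu^*/2$ is crucial, since it forces the $U^1$-side bound to be expressed in terms of $\mu(i)$ (hence $d(i)/D$) rather than $\Delta_i$. Taking the minimum of the two overhead bounds and adding the $O(1)$ contribution from the complement of $G$ yields the stated regret.
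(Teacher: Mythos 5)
Your proposal follows essentially the same route as the paper: the $U^2$/LCB comparison on fully-observed plays yields the $\sum_i \log T/\Delta_i$ term, the round-robin overhang of the last $D$ steps gives the $D\Delta_{\max}/|S_{\tau_i}|$ bound, the $U^1 \ge B$ inequality combined with $B\ge\mu^*/2$ and the bound $m_{\tau_i}(i)\le 2d(i)/|S_{\tau_i}|+O(\log T)$ gives the $\bar d/|S_{\tau_i}|$ bound, and harmonic telescoping supplies the $\log K$. The only small deviation is your treatment of the regime $B>\mu^*$: the paper does not argue that RSE ``fails quickly'' there, but instead observes that any arm still active satisfies $U^2_{\tau_i}(i)\ge B\ge\mu^*$, which directly yields $\Delta_i\le 2\sqrt{2\log T/|F_{\tau_i}(i)|}$ and folds this case into the same per-arm bound.
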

The assumption that $B \geq \mu^* / 2$ is satisfied under the main program \texttt{BHSE} (\Cref{alg:BHSE_mt}) due to \Cref{lem:bse_reward_safe_elimination_mt}.

\ignore{
\begin{proof}(Sketch)
We will bound the regret of a single arm $i\in[K]$ with an assumption that $G$ (\Cref{def:g_mt}) holds. Let $t\in[T]$ be the last step that $i$ was played. Let $K_i$ be the size of $S$ at time $t$. \\
The worst case of a missing play is that it returns $D$, which will add $\dfrac{Dm_t(i)}{n}$ to the $UCB$. Thus:
\begin{align}
    \notag \mu(i) &\ge UCB_t(i) - 2\sqrt{\dfrac{2\log T}{n}} - \dfrac{Dm_t(i)}{n}\\
    \notag &\ge B - 2\sqrt{\dfrac{2\log T}{n}} - \dfrac{Dm_t(i)}{n}\\
    \notag &\gtrapprox B - 2\sqrt{\dfrac{2\log T}{n}} - \dfrac{2DUCB_t(i)}{n}\\
    \notag &\ge B - 2\sqrt{\dfrac{2\log T}{n}} - \dfrac{2D\mu(i)}{n}\\
    &= B - 2\sqrt{\dfrac{2\log T}{n}} - \dfrac{2d(i)}{n} \label{eq:reward_mu_r_mt}
\end{align}
Now let's split to 3 cases:
\begin{enumerate}
    \item $\mu^* \ge B$ and $\mu(i) \ge \frac{B}{2}$
    \item $B \ge \mu^*$
    \item $\frac{B}{2} \ge \mu(i)$
\end{enumerate}
In the first case we know that $i^*$ will not be eliminated (as $\mu^* \ge B$), so we can use the full-information elimination rule to bound the regret from the full-information plays. As in the standard SE analysis, it means that $\Delta_i n_t^{FI} \le \Theta\roundy{\frac{\log T}{\Delta_i}}$. Since $B \ge \frac{\mu}{2}$ and $\mu(i) \ge \frac{B}{2}$, $\Delta_i \le 3\mu(i)$. In the last $D$ plays $i$ was played $\frac{D}{K_i}$ times, which causes additional regret of $\Delta_i\frac{D}{K_i} \le \Theta\roundy{\frac{D\mu(i)}{K_i}}$. This accumulates to a total regret of $\Theta\roundy{\frac{\log T}{\Delta_i} + \frac{\bar{d}}{K_i}}$. \\
In the second case \Cref{eq:reward_mu_r_mt} can be adjusted to $\Delta_i \le 2\sqrt{\dfrac{2\log T}{n}} + \dfrac{2d(i)}{n}$. We split the additive terms and solve them separately to get a total regret bound of $\Theta\roundy{\frac{\log T}{\Delta_i} + \frac{\bar{d}}{K_i}}$. \\
In the third case, the case assumption in addition to $B \ge \frac{\mu}{2}$ means that $3B - 3\mu(i) \ge \Delta_i$, which means that $\Delta_i \le 6\sqrt{\dfrac{2\log T}{n}} + \dfrac{6d(i)}{n}$. This can be solved similarly to the second case to get a total regret bound of $\Theta\roundy{\frac{\log T}{\Delta_i} + \frac{\bar{d}}{K_i}}$.  \\
That can be accumulated for all arms to get the desired $\Theta\roundy{\sum_{i:\Delta > 0}\frac{\log T}{\Delta_i} + \bar{d}\log K}$. The regret absorbed from the $G$ holds assumption is neglected asymptotically.
\end{proof}
}

\subsection{Bounded Halving Successive Elimination} 
\label{sec:bhse_mt}
Similar to the main program in the cost case, $\texttt{BHSE}$ estimates a lower bound for $\mu^*$. It starts with an over-estimation of $B=1$ and this time halves it by $2$ whenever $\texttt{RSE}$ returns $\texttt{Fail}$.

\begin{algorithm}
\caption{Bounded Halving Successive Elimination (\texttt{BHSE})}
\label{alg:BHSE_mt}
\begin{algorithmic}
\State \textbf{Input:} number of rounds $T$, number of arms $K$, maximum delay $D$.
\State \textbf{Initialization:} $B\gets 1$
\While{$t<T$}
\State Run $(\texttt{ret}, \tau) = \texttt{RSE}(T-t, K, D, B)$
\If{\texttt{ret} = \texttt{Fail}}
    \State $B \gets B / 2$
    \State $t \gets t + \tau$   
\EndIf
\EndWhile
\end{algorithmic}
\end{algorithm}

\begin{corollary} \label{corr:BHSE_regret}
Algorithm \texttt{BHSE} has regret of at most,
\begin{align*}
    \roundy{\sum_{i:\Delta > 0}\dfrac{289\log T}{\Delta_i} + 12\min\curly{\bar{d}, D\Delta_{max}}\log K}\log \dfrac{1}{\mu^*}
\end{align*}
\end{corollary}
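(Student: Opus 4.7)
The plan is to mirror the proof of \Cref{corr:bdse_regret} using the halving scheme in place of doubling, with the two ingredients being the safe-elimination guarantee for \texttt{RSE} (\Cref{lem:bse_reward_safe_elimination_mt}), which bounds the number of \texttt{Fail} returns, and the per-call regret bound (\Cref{thm:rse_regret}), which applies whenever $B \ge \mu^*/2$.

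First I would bound the number of \texttt{RSE} invocations. \texttt{BHSE} initializes $B=1$ and halves $B$ after every \texttt{Fail}. By \Cref{lem:bse_reward_safe_elimination_mt}, once $B \le \mu^*$ the invocation cannot fail under the good event $G$. Thus the number of failing invocations is at most the smallest $j \ge 0$ with $2^{-j}\le \mu^*$, i.e.\ at most $\lceil \log(1/\mu^*)\rceil$, and adding the final successful call gives $O(\log(1/\mu^*))$ calls in total.

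Second I would check that the precondition $B\ge \mu^*/2$ of \Cref{thm:rse_regret} holds in every call. For every failing call $B > \mu^*$, and the eventual successful call is preceded by exactly one halving of a value that was strictly above $\mu^*$, so its $B$ lies in $(\mu^*/2,\mu^*]$. Hence \Cref{thm:rse_regret} applies to every call, contributing at most
\[
\sum_{i:\Delta_i>0}\frac{289\log T}{\Delta_i} + 12\min\{\bar{d},\, D\Delta_{\max}\}\log K.
\]
Summing this bound over the $O(\log(1/\mu^*))$ calls yields the claimed regret. The contribution of the complement $\bar G$ is $O(1)$ by \Cref{lemma:good-G} and absorbs into the constants.

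The only mild subtlety (and what I expect to be the main thing to get right) is the bookkeeping argument that $B\ge \mu^*/2$ persists across \emph{all} calls, including the last one, so that \Cref{thm:rse_regret} can be invoked uniformly; everything else is a direct summation identical in structure to \Cref{corr:bdse_regret}.
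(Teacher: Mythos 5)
Your proposal is correct and follows essentially the same argument as the paper: use the safe-elimination lemma to cap the number of halvings at $\log(1/\mu^*)$, observe that this keeps $B \ge \mu^*/2$ in every call so \Cref{thm:rse_regret} applies uniformly, and sum the per-call bound. Your bookkeeping is slightly more explicit than the paper's (which states these facts in one line), but there is no substantive difference.
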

\begin{proof}
From \Cref{thm:bse_reward_safe_elimination_mt} we know that if $B \le \mu^*$ \texttt{BSE} will not fail, which means that the loop will run a maximum of $\log ({1}/{\mu^*})$ times. This also means that $B \ge {\mu^*} / {2}$, as needed. 
Therefore, the total regret will be $\roundy{\sum_{i:\Delta > 0}\dfrac{289\log T}{\Delta_i} + 12\min\{\bar{d}, D\Delta_{max}\}\log K}\log \dfrac{1}{\mu^*}$
\end{proof}

Note that without loss of generality we can assume that $\mu^* \geq 1/T$, since otherwise $\Delta_i \leq 1/T$ for all arms and the regret is trivially bounded by $1$. Therefore the term $\log ({1}/{\mu^*})$ would be at most $\log T$.

Notice that unlike \Cref{corr:bdse_regret}, the regret bound depends on $\bar{d}$. On the one hand, this is better than the delay of the best arm (which has the maximal expected delay). On the other hand, this can be much larger than the regret in the cost scenario, which depends on the minimal expected delay. This matches the lower bound in \Cref{sec:rlb}.

\subsection{Lower Bound}
\label{sec:rlb}

In this section we show a lower bound for the reward setting, which nearly matches our regret bound. As mentioned before, we will focus on understanding the impact of $\bar d$ since $\sum_{i:\Delta > 0}\frac{\log T}{\Delta_i}$, is a well known instance dependent bound even when there are no delays. 

\begin{theorem} \label{thm:reward_lb_mt}
In the reward scenario, for every choice of $\bar{d} \le {D}/{2}$, there is an instance for which any algorithm will have a regret of $\Omega\roundy{\bar{d}}$
\end{theorem}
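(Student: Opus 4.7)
The plan is to mirror the cost lower bound (\Cref{thm:cost_lb_mt}), flipped to the maximization setting so that the hard instance is built around the second-largest expected delay $\bar d$ rather than the minimum delay $d^*$. I would exhibit a pair of two-arm instances with deterministic delays that are statistically indistinguishable for roughly $\bar d$ rounds, and then apply the standard averaging-over-permutations argument to force $\Omega(\bar d)$ regret on at least one of them.

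Concretely, I would take arms with deterministic delays $D$ and $\bar d$. In instance $\mathcal I_A$, arm $1$ has delay $D$ (reward $\mu(1)=1$) and arm $2$ has delay $\bar d$ (reward $\mu(2)=\bar d/D\le 1/2$); in $\mathcal I_B$ the assignment is swapped. In both instances the second-largest expected delay equals $\bar d$ by construction, and the sub-optimality gap is $\Delta=1-\bar d/D\ge 1/2$, where I use the hypothesis $\bar d\le D/2$. Since every arm in every instance has delay at least $\bar d$, no feedback whatsoever reaches the agent before round $\bar d+1$, so the law of the action sequence $(i_1,\dots,i_{\lfloor\bar d\rfloor})$ is identical under $\mathcal I_A$ and $\mathcal I_B$ for any (possibly randomized) algorithm.

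Now let $N$ denote the (random) number of pulls of arm $1$ during the first $\lfloor\bar d\rfloor$ rounds. Indistinguishability gives $\mathbb E_{\mathcal I_A}[N]=\mathbb E_{\mathcal I_B}[N]$, hence the expected numbers of sub-optimal pulls in the two instances, namely $\lfloor\bar d\rfloor-\mathbb E_{\mathcal I_A}[N]$ and $\mathbb E_{\mathcal I_B}[N]$, sum to $\lfloor\bar d\rfloor$. In at least one of the two instances the agent therefore plays the sub-optimal arm in expectation at least $\lfloor\bar d\rfloor/2$ times; multiplying by $\Delta\ge 1/2$ yields regret $\Omega(\bar d)$, as claimed. (Small-$\bar d$ corner cases are handled trivially since regret is always nonnegative and $\Omega(\bar d)$ absorbs constants.)

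I do not expect a substantial obstacle: the construction is symmetric and the delays are deterministic, so the indistinguishability step is essentially by inspection — no information leaks before round $\bar d$. The only points that warrant verification are that $\bar d$ is indeed realized as the second-largest expected delay in the instance (immediate, since the two expected delays are $D$ and $\bar d$) and that the factor $1/2$ in $\Delta$ is legitimate, which relies precisely on the hypothesis $\bar d\le D/2$ stated in the theorem.
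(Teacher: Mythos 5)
Your proof is correct and follows essentially the same route as the paper's: a deterministic-delay instance in which the two best arms (delays $D$ and $\bar d$) are indistinguishable until round $\bar d$, combined with the gap bound $\Delta \ge 1/2$ from the hypothesis $\bar d \le D/2$. Your explicit two-instance averaging over $N$ simply makes rigorous the paper's informal ``select the identities of the arms at random'' step, and the restriction to two arms rather than $K$ is immaterial.
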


\begin{proof}
Consider the case of $K$ arms. We have one arm with constant delay $D$, one arm with constant delay $\bar{d}=D/2$, and the remaining arms have delay $0$ (and hence reward $0$). We select at random the identities of the arms. This implies that for any sub-optimal arm $i$ we have $\Delta_i\geq 1/2$.
Clearly,
until time $\bar d$ the best two arms are indistinguishable hence the regret is at least of order of $\bar d \min_i\Delta_i \geq \bar d / 2$.
\end{proof}
\paragraph{Conservative SE algorithms:}
Using similar arguments as in the cost case we can show here a lower bound of $\sqrt{D\bar{d}}$.

\section{Experiments}
\label{sec:experiments}

We conducted synthetic experiments for both the \textit{cost} and the \textit{reward} setting, measuring against the algorithms mentioned in \Cref{table:1}. For both settings we show results on two representative distributions: Truncated Normal (bounded in $[0, D]$) and Bernoulli.
(Due to space constraints, we defer to the supplementary material the experiments regrading cost.)

All experiments are run with parameters: $T=150,000$, $K=30$ and $D=5000$. For the truncated Normal we sample $K$ means and standard deviations, and adjust them to get a truncated version. Since our additive term in the regret is $\min\{\bar{d},D\Delta_{max}\}$, our contribution is mainly for instances where $\bar{d}<D\Delta_{max}$. Hence, we show the result on such instance, by using an exponential distribution to sample the means of the arms, resulting in sparsity in the higher regime. The standard deviations are sampled uniformly in $[0,D]$. We then computed analytically the expected mean of each arm, to compute the regret. For the Bernoulli distribution, we sample $K$ probabilities $p_i$ uniformly in $[0,1]$, so that arm $i$ gets $0$ with probability $p_i$ and $D$ with probability $1-p_i$.
\Cref{fig:exp-reward} shows the average cumulative regret, averaged on 10 runs. The shaded region is the standard deviation of the 10 runs. \texttt{BHSE} outperforms \texttt{OPSE} and \texttt{CensoredUCB} in both distributions. It is also evident that Bernoulli distribution is more challenging, resulting in both higher regret, and larger standard deviation. 
In the cost setting,  \texttt{BDSE} outperforms \texttt{OPSE}. The high standard deviation of \texttt{BDSE} in the Bernoulli case, occurs because a small variation in the means can lead to a different number of doubling calls. In practice, this can easily be smoothed by not ``throwing" the history every doubling call.

\begin{figure}
    \centering
    \includegraphics[width=0.82\linewidth]
    {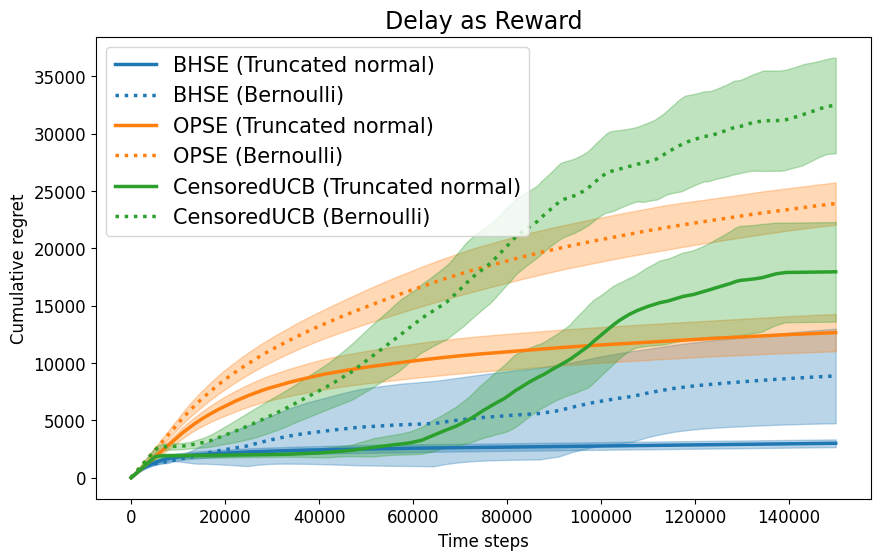}
    \includegraphics[width=0.82\linewidth]{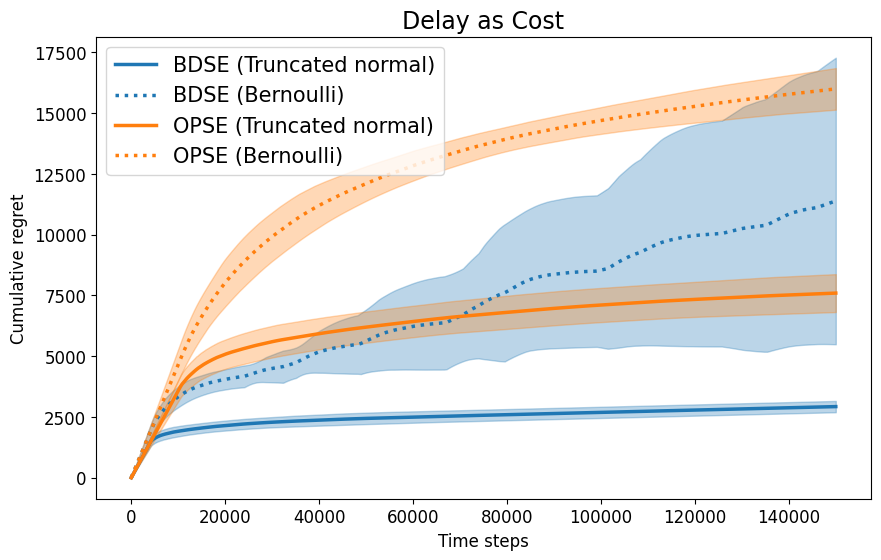}
    \caption{This graph shows results of experiments on different algorithms (color) and different distributions (line style).}
    \label{fig:exp-reward}
\end{figure}

\section{Discussion}
\label{sec:discussion}
In this paper, we explored a variant of the classical MAB problem, where the payoff is both delayed and directly corresponds to the magnitude of the delay. For the delay as reward setting we provided a detailed analysis of this problem and introduced tighter upper and lower regret bounds compared to those established in previous works. We are the first to generalize also to cost, highlighting the inherent difference between cost and reward in this setting.

There are several interesting future directions. 
First, as our motivation for the reward setting is the online advertising, it is a natural question to ask if we can expect similar results in the a \textit{delay as payoff} contextual bandit setting or other variants of the MAB problem (e.g., linear contextual bandit). 
Furthermore, it remains unclear whether our results can be generalized for more general delay distributions which are potentially unbounded (but have a bounded expectation).
Finally, by taking a similar perspective in the adversarial setting, the assumption that the delays correspond to the payoffs is a challenging new problem.

\section{Acknowledgments}
We thank Batya Berzack and Yoav Nagel for their input and work during the initiation of this paper and Ran Darshan for his constructive feedback.
OS is supported by the Darshan lab.
IC is supported by the Israeli Science Foundation (grant number 1186/18).
TL and YM are supported by the European Research Council (ERC) under the European Union’s Horizon 2020 research and innovation program (grant agreement No. 882396), by the Israel Science Foundation and the Yandex Initiative for Machine Learning at Tel Aviv University.
IC, TL and YM are also supported by a grant from the Tel Aviv University Center for AI and Data Science (TAD).

\newpage

\bibliographystyle{plainnat}
\bibliography{cited.bib}

\newpage

\appendix

\section{Summary of notations}
\label{apndx: notations}
For convenience, the table below summarizes most of the notation used throughout the paper.
\begin{table}[ht]
    \centering
    \begin{tabular}{c|l}
    $T$ & Number of rounds
    \\
    $K$ & Number of actions
    \\
    $\mathcal{D}_i$ & Delay distribution of arm $i$
    \\
    $D$ & Maximum possible delay
    \\
    $i_t$ & The agent's action at time $t$
    \\
    $\mu(i)$ & The expected payoff of arm $i$: $\mu(i) = E_{X\sim \D_i}[X/D]$
    \\
    $d(i)$ & The expected delay of arm $i$: $d(i) = D\mu(i)$
    \\
    $d_t$ & Realized delay at time $t$
    \\
    $r_t$ & Realized reward at time $t$ for the reward setting. I.e., $r_t = d_t/D$
    \\
    $c_t$ & Realized cost at time $t$ for the cost setting. I.e., $c_t = d_t/D$
    \\
    $\mu^*$ & The expected payoff of the optimal arm. I.e., 
    $\mu^*=\max_{i\in[K]} \mu(i)$, for rewards, and $\mu^*=\min_{i\in[K]} \mu(i)$
    \\
    $i^*$ & The optimal arm. I.e., 
    $i^*=\arg\max_{i\in[K]} \mu(i)$, for rewards, and $i^*=\arg\min_{i\in[K]} \mu(i)$
    \\
     $\Delta_{i}$ & The suboptimality gap of arm $i$, i.e., $\Delta_i=|\mu(i)-\mu^*|$  
     \\
     $S_t$ & Set of active arms at time $t\in[T]$
     \\
     $\tau_i$ & The last elimination point that $i\in[K]$ was not eliminated at 
     \\
     $K_i$ & $|S_{\tau_i}|$, the size of $S$ when $i\in[K]$ was eliminated
     \\
     $\hat{\mu}_t(i)$ & $\frac{1}{n_t(i)}\sum_{s:i_s=i}^t{c_s}$ (or with $r_t$ for reward), the empirical cost/reward (may be unknown to the algorithm)
    \end{tabular}

    \label{tab:notation}
\end{table}

\section{General Lemmas}
\begin{lemma}[\Cref{lem:w_upper_bound_mt} in the main paper] \label{lem:w_upper_bound}
For every step $t$, if the last $\min \curly{D, t}$ steps were played with a round robin of a set of size at least $R$: 
\begin{align*}
    Pr\squary{m_t(i) \le \dfrac{2d(i)}{R} + 16\log T + 2} < 1 - \dfrac{1}{T^2}
\end{align*}
\end{lemma}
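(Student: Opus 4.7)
The plan is to view $m_t(i)$ as a sum of independent Bernoulli indicators, bound its expectation by $d(i)/R$ plus a small additive constant via a tail-sum identity, and then apply a Bernstein-style concentration inequality. The round-robin hypothesis is what makes all plays in the window $(t-D,t]$ deterministic in their timing and spaced exactly $R$ apart, which is crucial for both steps.

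First I would pin down the structure of the plays. Under the assumption that the last $\min\{D,t\}$ steps are a round robin over a set of size at least $R$ that contains arm $i$, the plays of $i$ in that window happen at deterministic times $s_1>s_2>\cdots>s_N$ with $N\le \lceil D/R\rceil$, and if we set $u_k:=t-s_k$ then $u_1\in[1,R]$ and $u_{k+1}-u_k=R$. By definition $m_t(i)=\sum_{k=1}^{N}\mathds{1}[d_{s_k}\ge u_k]$. Because the delays $d_{s_k}\sim \D_i$ are drawn independently across plays, these indicators are independent Bernoulli random variables.

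Next I would bound the expectation. Writing $\mu:=E[m_t(i)]=\sum_{k=1}^N \Pr_{d\sim \D_i}[d\ge u_k]$, the key observation is the tail-sum identity $d(i)=E[d]=\sum_{j=1}^{D}\Pr[d\ge j]$. For each $k\ge 2$, monotonicity of the survival function gives
\[
R\cdot \Pr[d\ge u_k] \;=\; \sum_{j=u_{k-1}+1}^{u_k}\Pr[d\ge u_k] \;\le\; \sum_{j=u_{k-1}+1}^{u_k}\Pr[d\ge j].
\]
Summing over $k=2,\ldots,N$ the right-hand sides telescope into a subsum of $\sum_{j=1}^{D}\Pr[d\ge j]=d(i)$, yielding $\sum_{k\ge 2}\Pr[d\ge u_k]\le d(i)/R$. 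The $k=1$ term is bounded trivially by $1$, so $\mu \le 1 + d(i)/R$.

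Finally I would apply Bernstein's inequality for independent $[0,1]$ random variables, using $\mathrm{Var}(m_t(i))\le \mu$. Standard manipulation gives, with probability at least $1-1/T^2$,
\[
m_t(i) \;\le\; \mu + \sqrt{2\mu\log T^2}+\tfrac{2}{3}\log T^2 \;\le\; 2\mu + O(\log T),
\]
where the last step uses AM-GM on the square-root term. Substituting $\mu\le 1+d(i)/R$ gives $m_t(i)\le 2d(i)/R+O(\log T)+2$, and a careful accounting of the constants in the Bernstein bound yields the target $2d(i)/R+16\log T+2$.

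The main obstacle is really the expectation bound: one must avoid the naive argument $\sum_k \Pr[d\ge u_k]\le \sum_k d(i)/u_k$ (Markov), which produces a spurious logarithmic factor. The telescoping argument above leverages both the equal spacing $R$ of the round-robin plays and the monotonicity of the survival function to recover exactly $d(i)/R$. Once that clean expectation bound is in hand, the concentration step is standard, with the only subtlety being tuning constants (and allowing for the possibility $N=0$, in which case the statement is vacuous).
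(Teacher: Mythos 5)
Your proposal follows essentially the same route as the paper: bound $\mathbb{E}[m_t(i)]$ by $d(i)/R+1$ using the tail-sum identity together with the round-robin spacing (your telescoping of the survival function is just a reorganization of the paper's interchange of summation, and both correctly avoid the spurious $\log$ factor from Markov), then concentrate around the mean to pick up the $16\log T$ term.

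The one point I would push back on is the claim that the plays of $i$ in the window occur at \emph{deterministic} times and that the resulting indicators are \emph{independent} Bernoullis. In the actual algorithm the round-robin set shrinks adaptively as arms are eliminated, and eliminations depend on previously observed delays; so whether $i_s=i$ at a given step $s$, and hence which indicators appear in the sum, is itself a function of earlier delay realizations. Conditioning on the realized schedule can in principle bias the delay distributions of the plays in the window, so plain Bernstein for independent variables is not directly applicable. The paper sidesteps this by noting that the bound $\sum_s \mathbb{E}[X_s\mid\mathcal{F}_s]\le d(i)/R+1$ holds \emph{pathwise} under the round-robin hypothesis and then invoking a Freedman-type inequality for adapted sequences (its Lemma on consequences of Freedman's inequality, with $\delta=1/T^4$ followed by a union bound over arms and times, which is where the constant $16$ comes from). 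Your argument is repaired by the same substitution, so this is a fixable technicality rather than a different or failed approach; your expectation bound itself is fine since it is a deterministic consequence of the spacing.
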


\begin{proof}
Let us bound the expectation of $m_t(i)$,
\begin{align*}
    \mathbb{E}[m_{t}(i)] & =\sum_{s=1}^{t}\mathbb{E}\left[\mathbb{I}\{s+d_{s}\geq t,i_{s}=i\}\right]\\
                        & =\sum_{s=1}^{t} \mathbb{I}\{i_{s}=i\} 
                            Pr[d_{s} \geq t-s\mid i_{s}=i]
\end{align*}

Recall that $[d_{s} \mid i_{s}=i] \overset{i.i.d}{\sim} D_i$. So let $d_0(i) \sim D_i$. We have that $Pr[d_{s} \geq t-s\mid i_{s}=i] = Pr[d_{0}(i)\geq t-s]$. Thus,
\begin{align*}
    \mathbb{E}[m_{t}(i)] 
                        & =\sum_{s=1}^{t} \mathbb{I}\{i_{s}=i\} 
                            Pr[d_{0}(i)\geq t-s]\\
                        & =\sum_{s=1}^{t} \sum_{l=1}^{D} 
                            \mathbb{I}\{i_{s}=i,l\geq t-s\} Pr[d_{0}(i)=l]\\
                        & =\sum_{l=1}^{D}Pr[d_{0}(i)=l] \sum_{s=1}^{t} 
                            \mathbb{I}\{i_{s}=i,l\geq t-s\}\\
                        & =\sum_{l=1}^{D}Pr[d_{0}(i)=l] \sum_{s=1}^{t}
                            \mathbb{I} \{i_{s}=i,s\geq t-l\}\\
                        & =\sum_{l=1}^{D}Pr[d_{0}(i)=l] \sum_{s=\max\{1,t-l\}}^{t} 
                            \mathbb{I}\{i_{s}=i\}
\end{align*}

Since between time $\max\{1,t-l\}$ and time $t$ we played round-robin over a set of size at least $R$, 
$\sum_{s=t-l}^{t} \mathbb{I}\{i_{s}=i\} 
\leq \lceil {l}/{R} \rceil
\leq {l}/{R} + 1$. We get that,

\begin{align*}
    \mathbb{E}[m_{t}(i)] 
                        & \leq \sum_{l=1}^{D} Pr[d_{0}(i)=l] 
                            \left(\frac{l}{R} + 1 \right)\\
                        & \leq \frac{1}{R} \sum_{l=1}^{D}l 
                            Pr[d_{0}(i)=l]+1\\
                        & = \frac{d(i)}{R} + 1.
\end{align*}

From \Cref{lem:cons-freedman} using $\delta = \dfrac{1}{T^4}$, we can say that $Pr\squary{m_t(i) \le \dfrac{2d(i)}{R} + 16\log T + 2} \geq 1 - \dfrac{1}{T^4}$. Applying union bound over all arms $i\in[K]$ and all $t\in[T]$ completes the proof. 
\end{proof}

\begin{lemma}[Helper for \Cref{lem:w_lower_bound}]\label{lem:w_lower_bound_helper}
Let $\alpha_1,\alpha_2...\alpha_n$ and $\beta_1,\beta_2...\beta_n$ such that $\alpha_i\in\mathbb{R},\,\beta_i\in\mathbb{R}_{\ge 0}$ 
for every $i\in[n]$, and $\alpha_1\le\alpha_2\le...\alpha_n$. Then for every $k\in[n]$:
\begin{align*}
    \sum_{i=1}^k{\beta_i} \ge \frac{\sum_{i=1}^n{\beta_i}\sum_{j=1}^k{\beta_j\alpha_j}}{\sum_{i=1}^n{\beta_i\alpha_i}}
\end{align*}
\end{lemma}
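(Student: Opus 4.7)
The plan is to recognize the stated inequality as a comparison between a partial and a total weighted average, and then reduce it to an obviously non-negative double sum. After cross-multiplying by the positive quantity $\sum_{i=1}^n \beta_i \alpha_i$ (the regime where the bound is non-vacuous), the claim is equivalent to
\begin{equation*}
\left(\sum_{i=1}^k \beta_i\right)\left(\sum_{i=1}^n \beta_i \alpha_i\right) \;\ge\; \left(\sum_{i=1}^n \beta_i\right)\left(\sum_{j=1}^k \beta_j \alpha_j\right),
\end{equation*}
which is exactly the statement that the weighted average of $\alpha$ over the first $k$ indices does not exceed the weighted average over all $n$ indices. This is intuitively clear because, under the monotonicity hypothesis, the first $k$ indices carry the $k$ smallest $\alpha$-values.

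To make the intuition rigorous, I would split each summation at index $k$, writing $\sum_{\ell=1}^n = \sum_{\ell=1}^k + \sum_{\ell=k+1}^n$ on both sides, expand the products, and cancel the identical term $\bigl(\sum_{i=1}^k \beta_i\bigr)\bigl(\sum_{j=1}^k \beta_j \alpha_j\bigr)$ that appears on each side. What is left after rearrangement is
\begin{equation*}
\sum_{i=1}^k \sum_{j=k+1}^n \beta_i \beta_j \bigl(\alpha_j - \alpha_i\bigr) \;\ge\; 0,
\end{equation*}
every summand of which is non-negative: $\beta_i \beta_j \ge 0$ by the hypothesis $\beta_\ell \in \mathbb{R}_{\ge 0}$, and $\alpha_j - \alpha_i \ge 0$ whenever $i \le k < j$ by the ordering $\alpha_1 \le \cdots \le \alpha_n$. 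This yields the desired inequality.

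There is no serious obstacle here: once the reformulation as a weighted-average comparison is written down, the rest is a one-line algebraic identity combined with a termwise sign check. The only subtle point worth flagging is the sign of $\sum_{i=1}^n \beta_i \alpha_i$, which in the intended application (where the $\alpha_i$'s play the role of non-negative quantities such as counts or delays) will be strictly positive, so the cross-multiplication step preserves the direction of the inequality and no additional case analysis is required.
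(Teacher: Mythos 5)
Your proof is correct and follows essentially the same route as the paper's: both cross-multiply and establish the product inequality $\bigl(\sum_{i=1}^k \beta_i\bigr)\bigl(\sum_{j=1}^n \beta_j\alpha_j\bigr) \ge \bigl(\sum_{i=1}^n \beta_i\bigr)\bigl(\sum_{j=1}^k \beta_j\alpha_j\bigr)$ by exploiting the ordering of the $\alpha_i$, the only cosmetic difference being that the paper bounds the tail and head sums via the pivot $\alpha_{k+1}$ while you cancel the common term and exhibit the difference directly as the termwise-nonnegative double sum $\sum_{i\le k < j}\beta_i\beta_j(\alpha_j-\alpha_i)$. Your remark about needing $\sum_i \beta_i\alpha_i > 0$ to divide is apt; the paper's proof makes the same implicit assumption, which is harmless in the intended application where the $\alpha_i$ are positive round-robin set sizes.
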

\begin{proof}
\begin{align*}
\sum_{i=1}^k{\beta_i}\sum_{j=1}^n{\beta_j\alpha_j} &= \sum_{i=1}^k{\beta_i}\sum_{j=1}^k{\beta_j\alpha_j} + \sum_{i=1}^k{\beta_i}\sum_{j=k+1}^n{\beta_j\alpha_j} \\
&\ge \sum_{i=1}^k{\beta_i}\sum_{j=1}^k{\beta_j\alpha_j} + \sum_{i=1}^k{\beta_i}\alpha_{k+1}\sum_{j=k+1}^n{\beta_j} \\
&\ge \sum_{i=1}^k{\beta_i}\sum_{j=1}^k{\beta_j\alpha_j} + \sum_{i=1}^k{\beta_i\alpha_i}\sum_{j=k+1}^n{\beta_j} \\
&= \sum_{i=1}^n{\beta_i}\sum_{j=1}^k{\beta_j\alpha_j}
\end{align*}
\end{proof}

\begin{lemma} \label{lem:w_lower_bound}
Assume that the arms are played in a round robin manner, such that the set size can only decrease as time passes. Then for every $t\in[T]$ and $i\in[K]$:
\begin{align*}
    Pr\squary{m_t(i) \ge \frac{(n_t(i) - n_{t-D}(i))\mu(i) - 1}{2} - 4\log T} \ge 1 - \frac{1}{T^2}
\end{align*}

\end{lemma}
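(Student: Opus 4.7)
The plan is to write $m_t(i)$ as a sum of $N$ independent Bernoulli indicators, lower bound its expectation by something close to $N\mu(i)$, and then apply a Bernstein-type concentration inequality, mirroring the structure of the upper bound proof in \Cref{lem:w_upper_bound} but in the opposite direction.

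First, let $N = n_t(i) - n_{t-D}(i)$ and list the plays of $i$ in the window $(t-D, t]$ in forward time as $s_1 < \cdots < s_N$. Since every delay satisfies $d_s \leq D$, we have $m_t(i) = \sum_{j=1}^N \mathbb{I}\{d_{s_j} \geq t - s_j\}$. Because each $d_{s_j}$ is drawn freshly and independently once $i_{s_j}$ is fixed, these indicators are independent Bernoullis with parameters $p_j = \Pr[d_0(i) \geq t - s_j]$, so $\mathbb{E}[m_t(i)] = \sum_j p_j$.

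The crux is bounding this expectation from below. Sort the ages in ascending order as $\tilde\ell_1 \leq \cdots \leq \tilde\ell_N$, where $\tilde\ell_k = t - s_{N-k+1}$. Because the active set shrinks monotonically, the gaps between consecutive plays of $i$ in forward time are non-increasing, equivalently, $\tilde\ell_{k+1} - \tilde\ell_k$ is non-decreasing in $k$. Moreover, the most recent age $\tilde\ell_1$ is less than the current set size, which is in turn at most the smallest inter-play gap, and hence at most the average gap $(\tilde\ell_N - \tilde\ell_1)/(N-1) \leq D/(N-1)$. Feeding $\alpha_k = \tilde\ell_{k+1} - \tilde\ell_k$ (non-decreasing) and $\beta_k = 1$ into \Cref{lem:w_lower_bound_helper} yields $\tilde\ell_{k+1} - \tilde\ell_1 \leq k(\tilde\ell_N - \tilde\ell_1)/(N-1)$; combined with $\tilde\ell_1 \leq D/(N-1)$, this gives the pointwise bound $\tilde\ell_k \leq kD/(N-1)$. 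Rewriting $\mathbb{E}[m_t(i)] = \sum_{\ell} \Pr[d_0(i) = \ell]\cdot|\{k : \tilde\ell_k \leq \ell\}|$ and using $|\{k : \tilde\ell_k \leq \ell\}| \geq \lfloor \ell(N-1)/D \rfloor \geq \ell(N-1)/D - 1$, I would conclude $\mathbb{E}[m_t(i)] \geq (N-1)\mu(i) - 1 \geq N\mu(i) - 2$, using $\mathbb{E}[d_0(i)] = D\mu(i)$.

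For concentration, Bernstein's inequality (or \Cref{lem:cons-freedman}, as used in the upper bound) applied to the independent Bernoulli sum $m_t(i)$ at failure level $\delta = 1/T^4$ gives $m_t(i) \geq \mathbb{E}[m_t(i)] - \sqrt{2\mathbb{E}[m_t(i)]\log T} - O(\log T)$, since $\mathrm{Var}(m_t(i)) \leq \mathbb{E}[m_t(i)]$. The AM-GM inequality $\sqrt{xy} \leq (x+y)/2$ absorbs the square-root term into $\mathbb{E}[m_t(i)]/2 + O(\log T)$, so $m_t(i) \geq \mathbb{E}[m_t(i)]/2 - O(\log T) \geq (N\mu(i) - 1)/2 - 4\log T$ after collecting constants. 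A union bound over $t \in [T]$ and $i \in [K]$ boosts the failure probability to $1/T^2$.

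The main obstacle is the structural pointwise bound $\tilde\ell_k \leq kD/(N-1)$. Both ingredients, the non-decreasing gap property and the sharp control $\tilde\ell_1 \leq D/(N-1)$, require a careful unpacking of round-robin bookkeeping under a shrinking active set, so that they slot cleanly into the form demanded by \Cref{lem:w_lower_bound_helper}. Once this pointwise age bound is established, the expectation lower bound follows by direct comparison with $\mathbb{E}[d_0(i)]$, and the concentration step is standard.
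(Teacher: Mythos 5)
Your proposal is correct and shares the paper's skeleton (write $m_t(i)$ as a sum of Bernoulli indicators over the last-$D$-window plays, lower bound $\mathbb{E}[m_t(i)]$ by roughly $N\mu(i)$ via \Cref{lem:w_lower_bound_helper}, then apply a lower-tail concentration bound and a union bound), but the heart of the argument --- the expectation bound --- is executed differently. You apply \Cref{lem:w_lower_bound_helper} to the inter-play gaps with unit weights, anchor $\tilde\ell_1\le D/(N-1)$ via the shrinking active set, and obtain the pointwise bound $\tilde\ell_k\le kD/(N-1)$, after which $\mathbb{E}[m_t(i)]\ge (N-1)\mu(i)-1$ follows by direct comparison with $\mathbb{E}[d_0(i)]/D$. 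The paper instead applies the helper lemma to the pairs (set size, number of rounds at that size) to get $j\ge n t_j/t_n$, which normalizes by the oldest age $t_n$ rather than by $D$, and then needs an extra conditional-expectation step (bounding $\Pr[d_0\ge t_n]$ by $d(i)/\mathbb{E}[d_0\mid d_0\ge t_n]$ and using $\mathbb{E}[d_0\mid d_0\ge t_n]\le D$) to convert the $1/t_n$ normalization into $1/D$; the paper also handles the anchoring differently, by inserting a fictitious play at age $0$ rather than bounding $\tilde\ell_1$. Your route is more transparent and avoids the conditional-expectation manipulation, at the cost of a slightly weaker constant ($\mathbb{E}[m_t(i)]\ge N\mu(i)-2$ rather than $N\mu(i)-1$, which propagates to an extra additive $1/2$ relative to the exact constant in the statement --- immaterial, but worth noting).

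Two small technical points. First, \Cref{lem:cons-freedman} is an upper-tail bound and cannot give the lower tail you need; the clean tool is \Cref{lem:dann}, which directly yields $m_t(i)\ge\frac{1}{2}\mathbb{E}[m_t(i)]-\log(1/\delta)$ and produces the stated $4\log T$ with $\delta=T^{-4}$, whereas a generic Bernstein-plus-AM-GM argument lands at a larger (though still $O(\log T)$) constant. Second, the indicators $\mathbb{I}\{d_{s_j}\ge t-s_j\}$ are not unconditionally independent, since the play times $s_j$ are chosen adaptively based on earlier delays; this is precisely why a martingale version of the bound (as in \Cref{lem:dann}, with the filtration making the conditional success probabilities measurable) is needed rather than independence-based Bernstein. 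Neither point affects the validity of your structural argument.
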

\begin{proof}
Denote $n = n_t(i) - n_{t-D}(i)$, namely the pulls of $i$ in the last $\min\curly{t, D}$ steps. Denote the steps in which the pulls happened as $t-t_1,t-t_2,...t-t_n$. 
\begin{align*}
    \E\squary{m_t(i)} = \sum_{j=1}^n{Pr[d_{t-t_j} \ge t_j]}
\end{align*}
Since those pulls were all from arm $i$, $Pr[d_{t-t_j} \ge t_j] = Pr[d_0(i) \ge t_j]$ for $d_0(i) \sim D_i$.

Denote $t_0 = 0$. Since $Pr[d_0(i) \ge 0] = 1$:
\begin{align*}
    \E\squary{m_t(i)} + 1 &= Pr[d_0(i) \ge 0] + \sum_{j=1}^n{Pr[d_0(i) \ge t_j]} \\
    &= \sum_{j=0}^n{Pr[d_0(i) \ge t_j]} \\
    &= \sum_{j=0}^n \sum_{s=t_j}^{D}{Pr[d_0(i) = s]} \\
    &= \sum_{j=1}^n \sum_{s=t_{j-1}}^{t_j}{j Pr[d_0(i) = s]}  + \sum_{s=t_n}^D{nPr[d_0(i) = s]}
\end{align*}

Denote $R_1,R_2,...R_k$ as the round robin set's sizes, and $n_1,n_2,..n_k$ as the amount of rounds for each set size. Notice that:
\begin{align*}
    t_n &= \sum_{r=1}^k{R_rn_r} \\
    n &= \sum_{r=1}^k{n_r}
\end{align*}
Fix some $j$. Notice that for some $l$ it holds that:
\begin{align*}
    t_j &= \sum_{r=1}^l{R_rn_r} \\
    j &= \sum_{r=1}^l{n_r}
\end{align*}
Since the round robin set size only decreases, by applying  \Cref{lem:w_lower_bound_helper} with $\alpha_r = R_r$ and $\beta_r = n_r$ for every $r\in[n]$,  we get $j \ge \frac{nt_j}{t_n}$. Thus:
\begin{align*}
    \E\squary{m_t(i)} + 1 &\ge \sum_{j=1}^n \sum_{s=t_{j-1}}^{t_j}{\frac{nt_j}{t_n} Pr[d_0(i) = s]}  + \sum_{s=t_n}^D{nPr[d_0(i) = s]} \\
    &= \frac{n}{t_n}\roundy{\sum_{j=1}^n \sum_{s=t_{j-1}}^{t_j}{t_jPr[d_0(i) = s]}  + \sum_{s=t_n}^D{t_nPr[d_0(i) = s]}} \\
    &\ge \frac{n}{t_n}\roundy{\sum_{s=0}^{t_n}{sPr[d_0(i) = s]} + \sum_{s=t_n}^D{t_nPr[d_0(i) = s]}} \\
    &= \frac{n}{t_n}\roundy{\E\squary{d_0(i) \mid d_0(i) \le t_n}Pr\squary{d_0(i) \le t_n} + t_nPr\squary{d_0(i) \ge t_n}} \\
    &= \frac{n}{t_n}\roundy{d(i) - \E\squary{d_0(i) \mid d_0(i) \ge t_n}Pr\squary{d_0(i) \ge t_n} + t_nPr\squary{d_0(i) \ge t_n}}\\
    &= \frac{n}{t_n}\roundy{d(i) + \roundy{t_n - \E\squary{d_0(i) \mid d_0(i) \ge t_n}}Pr\squary{d_0(i) \ge t_n}}
\end{align*}
Notice that:
\begin{align*}
    d(i) &= \E\squary{d_0(i) \mid d_0(i) \ge t_n}Pr\squary{d_0(i) \ge t_n} + \E\squary{d_0(i) \mid d_0(i) \le t_n}Pr\squary{d_0(i) \le t_n} \\
    &\ge \E\squary{d_0(i) \mid d_0(i) \ge t_n}Pr\squary{d_0(i) \ge t_n} \\
    Pr\squary{d_0(i) \ge t_n} &\le \frac{d(i)}{\E\squary{d_0(i) \mid d_0(i) \ge t_n}}
\end{align*}
Since $t_n - \E\squary{d_0(i) \mid d_0(i) \ge t_n} \le 0$:
\begin{align*}
    \E\squary{m_t(i)} + 1 &\ge \frac{n}{t_n}\roundy{d(i) + \roundy{t_n - \E\squary{d_0(i) \mid d_0(i) \ge t_n}}\frac{d(i)}{\E\squary{d_0(i) \mid d_0(i) \ge t_n}}} \\
    &= \frac{n}{t_n}\frac{t_nd(i)}{\E\squary{d_0(i) \mid d_0(i) \ge t_n}}\\
    &\ge \frac{n}{t_n}\frac{t_nd(i)}{D}\\
    &= \frac{nd(i)}{D} \\
    &= n\mu(i)
\end{align*}
Using \Cref{lem:dann} with $\delta = \frac{1}{T^4}$ we can say  that $Pr\squary{m_t(i) \ge \frac{n\mu(i) - 1}{2} - 4\log T} \ge 1 - \frac{1}{T^4}$. Applying union bound over all arm $i\in[K]$ and all $t\in[T]$ completes the proof.
\end{proof}

\begin{lemma} \label{lem:ending}
Let $\A$ be an algorithm, and let $G$ be an event that holds w.p $\epsilon \le \frac{1}{T}$. If for some $\alpha,\beta > 0$, $\A$ guarantees that under $G$ the following holds for every arm $i\ne i^*$:
\begin{align*}
    \Delta_i n_{\tau_i}(i) \le \dfrac{\alpha}{\Delta_i} + \dfrac{\beta}{K_i}
\end{align*}

Then the total regret of $\mathcal{A}$ is bounded by,
\begin{align*}
    \mathcal{R}_T \le \sum_{i:\Delta > 0}{\dfrac{\alpha+2}{\Delta_i}} + \beta\log K
\end{align*}
\end{lemma}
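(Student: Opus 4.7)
The plan is to decompose the expected regret by conditioning on the good event $G$ versus its complement, bound each piece separately, and then sum over arms.

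First, I would write $\mathcal{R}_T = \mathbb{E}\bigl[\sum_{i : \Delta_i > 0} \Delta_i n_T(i)\bigr]$. Conditioning on $G$, the key structural observation is that $n_T(i) \le n_{\tau_i}(i) + 1$ for every suboptimal arm $i$: by definition of $\tau_i$ as the last elimination step at which $i$ remained active, in the next round-robin pass the algorithm plays $i$ exactly one additional time before removing it at the following elimination check. Applying the hypothesis then gives, under $G$, the per-arm bound $\Delta_i n_T(i) \le \alpha/\Delta_i + \beta/K_i + \Delta_i$ for every $i \ne i^*$.

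Second, I would control the sum $\sum_{i \ne i^*} \beta/K_i$ via a harmonic-sum argument. Recall that $K_i = |S_{\tau_i}|$ is the size of the active set just before $i$ is removed, that $|S|$ starts at $K$ and strictly decreases each time any arm is eliminated, and that under $G$ the optimal arm $i^*$ is never removed. Collapsing multiple simultaneous eliminations into a single step only decreases $\sum 1/K_i$, so the worst case is one-at-a-time elimination, which produces the multiset $\{K, K-1, \ldots, 2\}$. Hence $\sum_{i \ne i^*} 1/K_i \le H_K - 1 \le \log K$.

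Third, I would assemble the pieces. Using $\Delta_i \in (0,1]$ gives $\Delta_i \le 1/\Delta_i$, so the residual $\sum_{i \ne i^*} \Delta_i$ is absorbed into a $\sum 1/\Delta_i$ term, contributing one of the two ``$+1$'''s in the ``$+2$''. The complementary event $\neg G$ contributes at most $\Pr(\neg G) \cdot T \le \epsilon T \le 1$ to the expected regret, since per-step regret is at most $1$; this is also absorbed into $\sum 1/\Delta_i$ because $\sum_{i : \Delta_i > 0} 1/\Delta_i \ge 1$ whenever any suboptimal arm exists (and otherwise the regret is $0$). Combining everything yields $\mathcal{R}_T \le \sum_{i : \Delta_i > 0} (\alpha+2)/\Delta_i + \beta \log K$, as claimed.

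The main (modest) obstacle is the harmonic-sum bound on $\sum 1/K_i$: one must argue carefully that simultaneous eliminations only improve the sum, and that $|S|$ starts at $K$ and cannot drop below $2$ before the last non-optimal elimination. Everything else is routine bookkeeping on top of the hypothesis.
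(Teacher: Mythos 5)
Your proposal is correct and follows essentially the same route as the paper's proof: the per-arm bound $n_T(i)\le n_{\tau_i}(i)+1$, the harmonic-sum bound $\sum_{i\ne i^*}1/K_i\le\log K$, and absorbing both the leftover constants and the $\Pr(\neg G)\cdot T\le 1$ term into the $(\alpha+2)/\Delta_i$ sum via $\Delta_i\le 1$. Your treatment of the harmonic sum (handling simultaneous eliminations explicitly) is in fact slightly more careful than the paper's, which simply asserts the $\beta\log K$ bound.
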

\begin{proof}
Every arm $i \ne i^*$ wasn't played more then once since $\tau_i$, Therefore:
\begin{align*}
    \Delta_i n_{T}(i) \le \dfrac{\alpha}{\Delta_i} + \dfrac{\beta}{K_i} + 1
\end{align*}
Thus:
\begin{align*}
\mathcal{R}_T(\A \;|\; G\; happens)  &= \sum_{i:\Delta > 0}{\Delta_i n_T(i)} \\
&\le \sum_{i:\Delta > 0}{\dfrac{\alpha}{\Delta_i} + \dfrac{\beta}{K_i}} + 1 \\ 
&\le \sum_{i:\Delta > 0}{\dfrac{\alpha}{\Delta_i}} + \beta\log K + K
\end{align*}
In the case $G$ doesn't happen we can say that we pay maximum regret of $T$, which means:
\begin{align*}
    \mathcal{R}_T(\A) &\le \sum_{i:\Delta > 0}{\dfrac{\alpha}{\Delta_i}} + \beta\log K + \epsilon T + K \\
    &\le \sum_{i:\Delta > 0}{\dfrac{\alpha}{\Delta_i}} + \beta\log K + 1 + K \\
    &\le \sum_{i:\Delta > 0}{\dfrac{\alpha+2}{\Delta_i}} + \beta\log K \\
\end{align*}

\end{proof}

\section{Cost equals Delay}
\subsection{Cost Successive Elimination}
\begin{algorithm}
\caption{Cost Successive Elimination (\texttt{CSE})}\label{alg:CSE}
\begin{algorithmic}
\State \textbf{Input:} number of rounds $T$, number of arms $K$, maximum delay $D$, Elimination Threshold $B$.
\State \textbf{Initialization:} $t\gets1$, $S\gets[K]$, $n\gets0$
\While{$t<T$}
\State Play each arm $i\in S$
\State Observe any incoming payoff
\State Set $t\gets t+|S|$
\State Set $n \gets n+1$
\For{$i\in S$}
    \State $M_t(i)\gets \sett{s\in[t]}{i_s=i\land s+d_s\geq t}$\Comment{Plays that we are waiting for}
    \State $\obs_t(i)\gets \sett{s\in[t]}{i_s=i\land s+d_s< t}$\Comment{Plays that are completed}
    \State $m_t(i)\gets |M_t(i)|$\Comment{Number of missing plays}
    \State $\hat{\mu}_t^-(i)\gets \dfrac{1}{n}(\sum_{s \in M_t(i)}{\dfrac{t - s}{D}} + \sum_{s \in \obs_t(i)}{c_s})$
    \State $L_t^1(i) \gets \hat{\mu}_t^-(i)-\sqrt{\frac{2\log T}{n}}$
    \State $F_t(i) \gets \sett{s\in[t-D]}{i_s=i}$\Comment{Full information plays}
    \State $n_t^{F} \gets |F_t(i)|$ \Comment{Same for all $i\in S$}
    \State $\hat{\mu}_t^{F}(i)\gets \dfrac{1}{n_t^{F}}(\sum_{s\in F_t(i)}{c_s})$
    \State $L_t^2(i) \gets \hat{\mu}_t^{F}(i)-\sqrt{\frac{2\log T}{n_t^{F}}}$
    \State $L_t^3(i) \gets \dfrac{|S|}{D}\roundy{\dfrac{m_t(i)}{2} - 8\log T - 1}$
    \State $LCB_t(i)\gets \max\curly{L_t^1(i), L_t^2(i), L_t^3(i)}$
    \State $UCB_t(i)\gets \hat{\mu}_t^{F}(i)+\sqrt{\frac{2\log T}{n_t^{F}}}$
\EndFor
    \State Remove from $S$ any arm $i$ if there exists $j$ such that $\min\curly{UCB_t(j), B} < LCB_t(i)$
    \If{$S = \emptyset$}
        \State Return (\texttt{Fail}, $t$)
    \EndIf
\EndWhile
\State Return (\texttt{Success}, $t$)
\end{algorithmic}
\end{algorithm}

\begin{definition}
Let $G$ be the event that all of the below happens for every $t\in[T]$ and $i\in[K]$:
\begin{align*}
m_t(i) &\le \dfrac{2d(i)}{|S_t|} + 16\log T + 2\\
|\mu(i) - \hat{\mu}_t(i) | &\le \sqrt{\dfrac{2\log T}{n_t(i)}}
\end{align*}
\end{definition}

\begin{lemma}[\Cref{lem:good_g_prob} in the main paper] \label{lem:G prob cost}
$G$ holds w.p $1 - \frac{3}{T^2}$
\end{lemma}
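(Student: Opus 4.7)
The plan is to write $G = G_1 \cap G_2$, where $G_1$ is the event that the missing-samples bound $m_t(i) \le 2d(i)/|S_t| + 16\log T + 2$ holds for every $t \in [T]$ and $i \in [K]$, and $G_2$ is the event that $|\mu(i) - \hat{\mu}_t(i)| \le \sqrt{2\log T / n_t(i)}$ holds for every $t, i$. I will bound $\Pr[G_1^c]$ and $\Pr[G_2^c]$ separately and combine via a final union bound to obtain the advertised $3/T^2$.

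For $G_1$ I would directly invoke the already-established \Cref{lem:w_upper_bound}, whose statement already folds in the union bound over all $(t, i)$ and yields failure probability $1/T^2$. The only thing to verify is the premise, namely that the last $\min(D, t)$ steps were played as a round-robin over a set of size at least $R_t = |S_t|$. This is immediate from inspection of \Cref{alg:CSE_cost_mt}: each outer iteration of the while-loop plays every currently active arm exactly once, and the active set $S$ can only shrink over time through the elimination rule. Setting $R = |S_t|$ in \Cref{lem:w_upper_bound} therefore gives $\Pr[G_1^c] \le 1/T^2$.

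For $G_2$ I would apply a textbook Hoeffding concentration. The per-arm payoffs $c_s = d_s / D \in [0,1]$ are i.i.d.\ with mean $\mu(i)$, so for each fixed arm $i \in [K]$ and each fixed sample count $n \in [T]$, Hoeffding's inequality gives $\Pr[|\hat{\mu}^{(n)}(i) - \mu(i)| > \sqrt{2\log T / n}] \le 2 \exp(-4 \log T) = 2/T^4$. A union bound over the $K \cdot T$ pairs $(i, n)$, assuming w.l.o.g.\ that $K \le T$ (otherwise the regret claim is trivial), yields $\Pr[G_2^c] \le 2/T^2$. Combining with the bound on $G_1$ produces $\Pr[G^c] \le 1/T^2 + 2/T^2 = 3/T^2$ as desired.

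There is essentially no serious obstacle: the heavy lifting for $G_1$ is encapsulated in \Cref{lem:w_upper_bound} (already proved), and $G_2$ is standard Hoeffding with a peeling argument over the random sample count. The only care point is the bookkeeping of union-bound constants so that $\Pr[G_1^c]$ and $\Pr[G_2^c]$ sum to exactly $3/T^2$, together with the mild sanity check that the round-robin premise of \Cref{lem:w_upper_bound} indeed holds throughout every execution of \texttt{CSE}.
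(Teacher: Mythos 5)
Your proposal is correct and follows essentially the same route as the paper: the paper's one-line proof invokes \Cref{lem:w_upper_bound} for the missing-samples condition and cites the standard ``clean event'' concentration bound (Slivkins, Eq.~1.6) for the empirical-mean condition, which is exactly your Hoeffding-plus-union-bound argument over arms and sample counts. Your added verification that the round-robin premise of \Cref{lem:w_upper_bound} holds throughout \texttt{CSE} is a detail the paper leaves implicit, but the decomposition and the accounting of the $1/T^2 + 2/T^2$ failure probabilities match.
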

\begin{proof}
Follows immediately from \cite{slivkins2024introductionmultiarmedbandits}[Equation 1.6] and \Cref{lem:w_upper_bound}.
\end{proof}

\begin{lemma}[\Cref{thm:CSE_cost_safe_elimination_mt} in the main paper] 
\label{lem:cse_safe_elimination}
In \Cref{alg:CSE} assume $G$ happens, if $B \ge \mu^*$ then for every $t \in [T]$, $i^* \in S_t$
\end{lemma}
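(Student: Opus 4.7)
The plan is to show that under event $G$ and the assumption $B\ge \mu^*$, the optimal arm $i^*$ can never satisfy the elimination condition $\min\{UCB_t(j),B\}<LCB_t(i^*)$ for any active $j\in S_t$ at any time $t$. I would do this by sandwiching: establish $LCB_t(i^*)\le \mu^* \le \min\{UCB_t(j),B\}$ for every $j\in S_t$. The right-hand inequality is immediate for the $B$ side by hypothesis, and for the $UCB$ side it follows from the standard one-sided concentration of $\hat\mu_t^F(j)$ (valid because $F_t(j)$ consists of samples whose delay has provably returned, hence i.i.d. draws from $\D_j$) together with $\mu(j)\ge \mu^*$ in the cost setting; when $|F_t(j)|=0$, the $\vee 1$ padding makes $UCB_t(j)\ge \sqrt{2\log T}\ge 1\ge \mu^*$ for any reasonable $T$.

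The bulk of the work is the left inequality $LCB_t(i^*)\le \mu^*$, which requires bounding each of $L_t^1(i^*)$, $L_t^2(i^*)$, $L_t^3(i^*)$ separately. For $L_t^1$, I would observe that the optimistic estimator $\hat\mu_t^-(i^*)$ replaces each missing sample's true cost $c_s=d_s/D$ with $(t-s)/D$; since the sample is still missing at time $t$, we have $d_s\ge t-s$, so $\hat\mu_t^-(i^*)\le \hat\mu_t(i^*)$. Then the concentration clause of $G$ gives $\hat\mu_t(i^*)\le \mu^* + \sqrt{2\log T/n_t(i^*)}$, and subtracting the confidence radius yields $L_t^1(i^*)\le \mu^*$. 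For $L_t^2$, I would apply the standard Hoeffding concentration to $\hat\mu_t^F(i^*)$, which yields $L_t^2(i^*)\le \mu^*$ directly.

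For $L_t^3$, I would plug in the first clause of $G$: since \texttt{CSE} plays round-robin over $S_t$ and $|S_t|$ is monotonically non-increasing, the round-robin condition of \Cref{lem:w_upper_bound_mt} applies with $R_t = |S_t|$, giving $m_t(i^*)\le 2d(i^*)/|S_t| + 16\log T + 2 = 2D\mu^*/|S_t| + 16\log T + 2$. Substituting into the definition of $L_t^3(i^*)$ yields
\begin{align*}
L_t^3(i^*) \;=\; \frac{|S_t|}{D}\!\left(\frac{m_t(i^*)}{2} - 8\log T - 1\right) \;\le\; \frac{|S_t|}{D}\cdot \frac{D\mu^*}{|S_t|} \;=\; \mu^*.
\end{align*}
Taking the max of the three bounds gives $LCB_t(i^*)\le \mu^*$, completing the sandwich.

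The step I expect to be the main obstacle is the $L_t^3$ bound: it is the only one that is genuinely new to this paper and it hinges on matching the round-robin structure of \texttt{CSE} to the hypotheses of \Cref{lem:w_upper_bound_mt}. The key observation making this clean is that once an arm is removed from $S_t$, $|S_t|$ only decreases, so the ``minimum round-robin size in the last $D$ steps'' that appears in the lemma coincides with the current $|S_t|$ used in the definition of $L_t^3$. Once that identification is made, the arithmetic collapses and the three LCB terms line up against $\mu^*$ as above.
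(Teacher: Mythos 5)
Your proposal is correct and follows essentially the same route as the paper's proof: sandwich $LCB_t(i^*)\le \mu^*\le \min\{B,UCB_t(j)\}$ under $G$, bounding $L_t^1,L_t^2,L_t^3$ separately (the $L_t^1$ monotonicity via $d_s\ge t-s$ and the $L_t^3$ bound via the missing-plays clause of $G$ are exactly the paper's steps, which you simply spell out in more detail). No gaps.
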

\begin{proof}
Note that for any arm $i$, $\hat{\mu}_t^-(i)$ is an optimistic estimation of $\hat{\mu}_t(i)$, because it assumes for every missing play a cost of the amount of steps passed since the play, which will always be less than the actual cost, so following from the definition of $G$ we can say that $L_t^1(i) \le \mu(i)$. $L_t^3(i) \le \mu(i)$ also follow immediately from $G$. \\
Notice that $\hat{\mu}_t^{F}(i) = \hat{\mu}_{t-D}(i)$. Using the definition of $G$:
\begin{align*}
    L_t^2(i) &= \hat{\mu}_t^{F}(i) - \sqrt{\dfrac{2\log T}{n^{F}_t(i)}} \le \mu(i) \\
\end{align*}
Which means that:
\begin{align*}
    LCB_t(i^*) = \max\curly{L_t^1(i), L_t^2(i), L_t^3(i)} &\le \mu^* \leq B
\end{align*}
Again using $G$, for any $i$:
\begin{align*}
    UCB_t(i) &= \hat{\mu}_t^{F}(i) + \sqrt{\dfrac{2\log T}{n^{F}_t(i)}} \ge \mu(i) \geq \mu^* \geq LCB_t(i^*) \\
\end{align*}
Therefore,
\begin{align*}
    LCB_t(i^*) \le \min\curly{B, UCB_t(i)}
\end{align*}
for any arm $i$, which means that $i^*$ will not be eliminated.
\end{proof}

\begin{lemma} \label{lem:mu_r_bound}
In \Cref{alg:CSE}, assume $G$ happens. For any arm $i\neq i^*$:
\begin{align*} 
    \mu(i) \le B + 2\sqrt{\dfrac{2\log T}{n_{\tau_i}(i)}} + \dfrac{2DB}{K_in_{\tau_i}(i)} + \dfrac{16\log T + 2}{n_{\tau_i}(i)}
\end{align*}
\end{lemma}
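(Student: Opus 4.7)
The plan is to combine two of the three lower confidence bounds used in \texttt{CSE}, namely $L^1$ and $L^3$. In spirit, $L^1$ converts the true mean $\mu(i)$ into a quantity involving $LCB_{\tau_i}(i) \le B$, up to an additive slack of $m_{\tau_i}(i)/n_{\tau_i}(i)$ that is built into the optimism of $\hat{\mu}^-_{\tau_i}(i)$. Then $L^3$ is inverted to translate the slack $m_{\tau_i}(i)$ into a bound of the form $2DB/K_i + 16\log T + 2$, exploiting that $L^3_{\tau_i}(i) \le B$ since arm $i$ is still active at time $\tau_i$.

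Concretely, I would first note that every missing sample $s \in M_{\tau_i}(i)$ satisfies $c_s = d_s/D \ge (\tau_i - s)/D$, since the play at time $s$ is still unobserved at time $\tau_i$ (so $d_s \ge \tau_i - s$). Therefore $\hat{\mu}^-_{\tau_i}(i) \le \hat{\mu}_{\tau_i}(i)$, with a per-sample slack of at most $1$, giving $\hat{\mu}_{\tau_i}(i) - \hat{\mu}^-_{\tau_i}(i) \le m_{\tau_i}(i)/n_{\tau_i}(i)$. Combining this with the concentration inequality $|\mu(i) - \hat{\mu}_{\tau_i}(i)| \le \sqrt{2\log T / n_{\tau_i}(i)}$ from the good event $G$ and the definition of $L^1_{\tau_i}$ yields
\begin{align*}
\mu(i) \;\le\; L^1_{\tau_i}(i) + 2\sqrt{\frac{2\log T}{n_{\tau_i}(i)}} + \frac{m_{\tau_i}(i)}{n_{\tau_i}(i)}.
\end{align*}

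Next, since $i$ is still active at time $\tau_i$, the elimination test with $j = i$ must have failed; assuming $LCB_{\tau_i}(i) \le UCB_{\tau_i}(i)$ (which under $G$ follows as in the proof of \Cref{lem:cse_safe_elimination}), this forces $LCB_{\tau_i}(i) \le B$, and in particular both $L^1_{\tau_i}(i) \le B$ and $L^3_{\tau_i}(i) \le B$. Inverting the definition $L^3_t(i) = \frac{|S_t|}{D}\bigl(\frac{m_t(i)}{2} - 8\log T - 1\bigr)$ for $m_t(i)$ gives $m_{\tau_i}(i) = \frac{2D\,L^3_{\tau_i}(i)}{K_i} + 16\log T + 2 \le \frac{2DB}{K_i} + 16\log T + 2$. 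Substituting this and $L^1_{\tau_i}(i) \le B$ into the previous display delivers the claimed inequality.

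The only non-routine step is the intermediate fact $LCB_{\tau_i}(i) \le UCB_{\tau_i}(i)$, required to deduce $LCB_{\tau_i}(i) \le B$ from non-elimination. Under $G$ this is immediate because $L^1_{\tau_i}(i), L^3_{\tau_i}(i) \le \mu(i) \le UCB_{\tau_i}(i)$ (already established in \Cref{lem:cse_safe_elimination}), and $L^2_{\tau_i}(i) \le UCB_{\tau_i}(i)$ by direct inspection of their definitions. Everything else is elementary algebra, so I do not anticipate any further obstacle.
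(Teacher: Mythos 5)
Your proof is correct and follows essentially the same route as the paper's: bound $\mu(i)$ by $L^1_{\tau_i}(i) + 2\sqrt{2\log T/n_{\tau_i}(i)} + m_{\tau_i}(i)/n_{\tau_i}(i)$ using the optimism of $\hat{\mu}^-$ under $G$, then invert $L^3_{\tau_i}(i) \le LCB_{\tau_i}(i) \le B$ to control $m_{\tau_i}(i)$. The only superfluous step is your verification that $LCB_{\tau_i}(i) \le UCB_{\tau_i}(i)$: survival of arm $i$ under the elimination rule with $j=i$ already gives $LCB_{\tau_i}(i) \le \min\{UCB_{\tau_i}(i), B\} \le B$ directly, with no need to compare $LCB$ and $UCB$.
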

\begin{proof}
Fix $t = \tau_i$. 
Since the difference between the known delay of a play to its real delay is bounded by $D$:
\begin{align*}
    \hat{\mu}_t^-(i) &=  \dfrac{1}{n_t(i)}\roundy{\sum_{s \in M_t(i)}{\dfrac{t - s}{D}} + \sum_{s \in \obs_t(i)}{c_s}} \\
    &\ge \dfrac{1}{n_t(i)}\roundy{\sum_{s \in M_t(i)}{\roundy{c_s - 1}} + \sum_{s \in \obs_t(i)}{c_s}} \\
    &\ge \dfrac{1}{n_t(i)}\roundy{\sum_{s \in M_t(i) \cup \obs_t(i)}{c_s} - m_t(i)} \\
    &= \hat{\mu}_t(i) - \dfrac{m_t(i)}{n_t(i)}
\end{align*}
Since $G$ happens:
\begin{align*}
    \mu(i) &\le \hat{\mu}_t(i) + \sqrt{\dfrac{2\log T}{n_t(i)}} \\
    \mu(i) &\le \hat{\mu}_t^-(i) + \sqrt{\dfrac{2\log T}{n_t(i)}} + \dfrac{m_t(i)}{n_t(i)}\\
    \mu(i) &\le LCB_t(i) + 2\sqrt{\dfrac{2\log T}{n_t(i)}} + \dfrac{m_t(i)}{n_t(i)}  \\
\end{align*}
Also, from the definition of $LCB$:
\begin{align*}
        LCB_t(i) &\geq L_t^3(i) \geq \dfrac{K_i}{D}\roundy{\dfrac{m_t(i)}{2} - 8\log T - 1} \\
        m_t(i) &\le \dfrac{2LCB_t(i)D}{K_i} + 16\log T + 2
\end{align*}

Since $i$ wasn't eliminated, it means that $LCB_t(i) \le B$, which means:
\begin{align*}
    \mu(i) &\le B + 2\sqrt{\dfrac{2\log T}{n_t(i)}} + \dfrac{m_t(i)}{n_t(i)}\\
    m_t(i) &\le \dfrac{2DB}{K_i} + 16\log T + 2\\
    \mu(i) &\le B + 2\sqrt{\dfrac{2\log T}{n_t(i)}} + \dfrac{2DB}{K_in_t(i)} + \dfrac{16\log T + 2}{n_t(i)}
\end{align*}
\end{proof}

\begin{lemma} \label{lem:regret_delta_bound}
In \Cref{alg:CSE}, assume $G$ happens. For any arm $i\neq i^*$:
\begin{align*}
    \Delta_i n_{\tau_i}(i) \le \dfrac{32\log T}{\Delta_i} + \dfrac{2\Delta_i D}{K_i} 
\end{align*}
\end{lemma}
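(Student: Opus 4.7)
The plan is to run a standard Successive Elimination style comparison at the last elimination step $t := \tau_i$ at which arm $i$ was still active. Under the assumption $B \ge \mu^*$ (the regime in which \texttt{CSE} is called after enough doubling in \texttt{BDSE}), \Cref{lem:cse_safe_elimination} ensures $i^* \in S_t$, so the survival of $i$ at step $t$ forces $UCB_t(i^*) \ge LCB_t(i) \ge L^2_t(i)$. I would use only this inequality and the $L^2 / UCB$ pair, ignoring the more delicate $L^1$ and $L^3$ bounds; those were the workhorses of \Cref{lem:mu_r_bound}, but here the claim is the standard instance-dependent gap bound refined by a one-shot delayed-play correction, so full-information samples suffice.

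Next, I would push both sides through concentration. By \Cref{def:g_mt} we have $|\hat\mu^F_t(a) - \mu(a)| \le \sqrt{2\log T / n^F}$ for $a \in \{i, i^*\}$, where $n^F := |F_t(i)| = |F_t(i^*)|$; these two counts coincide because round-robin had processed $i$ and $i^*$ the same number of times by step $t-D$ while both were still active. Plugging into the UCB/LCB definitions gives $UCB_t(i^*) \le \mu^* + 2\sqrt{2\log T / n^F}$ and $L^2_t(i) \ge \mu(i) - 2\sqrt{2\log T / n^F}$. Combining with the survival inequality yields $\Delta_i \le 4\sqrt{2\log T / n^F}$, or equivalently $\Delta_i \cdot n^F \le 32\log T / \Delta_i$.

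Finally, I would bridge $n^F$ to $n_{\tau_i}(i)$. The difference $n_{\tau_i}(i) - n^F$ just counts plays of $i$ in the window $(t - D, t]$. Because the active set $S$ is monotonically non-increasing in time, $|S_s| \ge K_i = |S_t|$ throughout this window, so the round-robin contributes at most $\lceil D / K_i \rceil \le 2D / K_i$ plays of arm $i$. Hence $n_{\tau_i}(i) \le n^F + 2D / K_i$, and multiplying through by $\Delta_i$ produces the claimed $32\log T/\Delta_i + 2\Delta_i D/K_i$.

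The main obstacle is the bridging step: one must carefully argue that a round-robin over a \emph{monotonically shrinking} active set contributes at most $O(D/K_i)$ plays of $i$ in any $D$-step window (not $D/K_{\text{init}}$, and not some larger quantity arising from the set having been bigger earlier), and then absorb the unavoidable ceiling off-by-one into the stated factor $2$. Monotonicity of $|S_s|$ is the key combinatorial observation that makes this clean. The concentration half is essentially the classical SE analysis, transplanted onto the delayed-aware full-information estimator $\hat\mu^F_t$ instead of the immediate empirical mean.
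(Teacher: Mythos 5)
Your argument is essentially the paper's, but it only covers half of the lemma. The statement assumes nothing about $B$: it must hold for every call of \texttt{CSE}, including the early calls of \texttt{BDSE} where $B$ starts at $1/D$ and may well satisfy $B < \mu^*$. In that regime \Cref{lem:cse_safe_elimination} gives you nothing --- the optimal arm can legitimately be eliminated by the threshold --- so the survival inequality $L^2_{\tau_i}(i) \le UCB_{\tau_i}(i^*)$ that your whole derivation rests on is simply unavailable, and the chain $\Delta_i \le 4\sqrt{2\log T/n^F}$ breaks. You explicitly wall this off by declaring $B \ge \mu^*$ to be ``the regime in which \texttt{CSE} is called,'' but \Cref{thm:cse regret} invokes this lemma for an arbitrary threshold, so the restriction is a genuine gap rather than a harmless normalization.

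The missing case is fortunately short, and it is exactly what the paper adds: survival of $i$ at $\tau_i$ under the elimination rule $\min\{UCB_t(j),B\} < LCB_t(i)$ forces $L^2_{\tau_i}(i) \le LCB_{\tau_i}(i) \le B$, and concentration under $G$ gives $\mu(i) - 2\sqrt{2\log T/n^F} \le L^2_{\tau_i}(i) \le B \le \mu^*$, hence $\Delta_i \le 2\sqrt{2\log T/n^F}$ --- even better than the $4\sqrt{\cdot}$ of the other case. With that branch added, your concentration step and your bridging of $n^F$ to $n_{\tau_i}(i)$ via the monotonically shrinking round-robin window (at most $\lceil D/K_i\rceil \le 2D/K_i$ extra plays, with the same implicit $D \ge K_i$ the paper uses) match the paper's proof exactly.
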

\begin{proof}
Fix $t = \tau_i$. Since $G$ happens:
\begin{align*}
    L_t^2(i) + 2\sqrt{\dfrac{2\log T}{n_t^{F}(i)}} &\ge \mu(i) \\
    UCB_t(i) - 2\sqrt{\dfrac{2\log T}{n_t^{F}(i)}} &\le \mu(i) \\
\end{align*}
If $B \ge \mu^*$ , from \Cref{lem:cse_safe_elimination} $i^*$ was not eliminated, so since $i$ wasn't eliminated as well $L_t^2(i) \le UCB_t(i^*)$ and $n_t^{F}(i) = n_t^{F}(i^*)$. Thus:
\begin{align*}
    \mu(i) - 2\sqrt{\dfrac{2\log T}{n_t^{F}(i)}} &\le \mu^* + 2\sqrt{\dfrac{2\log T}{n_t^{F}(i^*)}} \\
    \Delta_i &\le 4\sqrt{\dfrac{2\log T}{n_t^{F}(i)}} \\
\end{align*}
If $B \le \mu^*$, since $B \ge L_t^2(i)$,
\begin{align*}
    \mu(i) - 2\sqrt{\dfrac{2\log T}{n_t^{F}(i)}} &\le B \\
    \mu(i) - 2\sqrt{\dfrac{2\log T}{n_t^{F}(i)}} &\le \mu^* \\
    \Delta_i &\le 2\sqrt{\dfrac{2\log T}{n_t^{F}(i)}}
\end{align*}
Which concludes in both cases to:
\begin{align}
    \Delta_i n_t^{F}(i) & \le \dfrac{32\log T}{\Delta_i}
    \label{eq:FI regret}
\end{align}
In the last $D$ steps, $i$ was played a maximum of $\left\lceil \dfrac{D}{K_i} \right\rceil$ times, which means:
\begin{align*}
    n_t(i) &\le n_t^{F}(i) + \dfrac{D}{K_i} + 1 \\
    n_t(i) &\le n_t^{F}(i) + \dfrac{2D}{K_i} \\
    \Delta_i n_t(i) &\le \Delta_i n_t^{F}(i) +  \dfrac{2\Delta_i D}{K_i} \\
    &\le \dfrac{32\log T}{\Delta_i} + \dfrac{2\Delta_i D}{K_i} 
\end{align*}
where the last is from \Cref{eq:FI regret}. 
\end{proof}

\begin{lemma}\label{lem:cse_action_regret}
In \Cref{alg:CSE}, assume $G$ happens. For any arm $i\neq i^*$:
\begin{align*}
    \Delta_i n_{\tau_i}(i) \le \dfrac{128\log T}{\Delta_i} + \dfrac{8DB}{K_i}
\end{align*}
\end{lemma}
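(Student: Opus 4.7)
The plan is to follow the three-case analysis outlined in the proof sketch of \Cref{thm:CSE_bound}, combining \Cref{lem:mu_r_bound} (which controls $\mu(i)$ through the LCB and threshold $B$) with \Cref{lem:regret_delta_bound} (the standard successive-elimination bound, available whenever $i^*$ stays active) in the appropriate regimes. Write $n = n_{\tau_i}(i)$ and $K_i = |S_{\tau_i}|$ for brevity, and split based on how $B$ compares to $\mu^*$ and $\mu(i)$: case (i) $B \ge \mu^*$ and $\mu(i) \le 2B$; case (ii) $B < \mu^*$; case (iii) $B \ge \mu^*$ and $\mu(i) > 2B$. These three cases exhaust all possibilities.

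Case (i) dispatches quickly. Since $B \ge \mu^*$, \Cref{lem:cse_safe_elimination} guarantees that $i^*$ remains active, so \Cref{lem:regret_delta_bound} applies and gives $\Delta_i n \le \frac{32\log T}{\Delta_i} + \frac{2\Delta_i D}{K_i}$. Because $\Delta_i \le \mu(i) \le 2B$, the second summand is at most $\frac{4DB}{K_i}$, which already sits inside the stated bound.

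For case (ii), I would use $\Delta_i = \mu(i) - \mu^* < \mu(i) - B$ combined with \Cref{lem:mu_r_bound} to obtain
\[
\Delta_i \;\le\; 2\sqrt{\tfrac{2\log T}{n}} \;+\; \tfrac{2DB}{K_i n} \;+\; \tfrac{16\log T + 2}{n}.
\]
By pigeonhole, at least one of the three summands is at least $\Delta_i/3$, and I proceed by three subcases. If the square-root term dominates, rearranging yields $\Delta_i n \le O(\log T/\Delta_i)$; if the $DB$ term dominates, multiplying through by $n$ gives directly $\Delta_i n \le O(DB/K_i)$; if the additive $\log T/n$ term dominates, $\Delta_i n \le O(\log T)$, which is absorbed into $O(\log T/\Delta_i)$ via $\Delta_i \le 1$. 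Summing the three subcase bounds yields the target form. Case (iii) is structurally identical: since $B \le \mu(i)/2$, \Cref{lem:mu_r_bound} gives $\mu(i)/2 \le \mu(i) - B$ bounded by the same right-hand side, so $\Delta_i \le \mu(i)$ is bounded by twice that right-hand side, and the identical pigeonhole-plus-subcases argument applies.

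The main obstacle is purely the numerical bookkeeping: tracking constants through the pigeonhole step (which costs a factor of roughly $3$ per summand, and in case (iii) an extra factor of $2$ from the $B \le \mu(i)/2$ doubling) so that everything fits into the $\frac{128\log T}{\Delta_i} + \frac{8DB}{K_i}$ template. All conceptual difficulty was already packaged into \Cref{lem:mu_r_bound} via the three-way LCB $\max\{L_t^1, L_t^2, L_t^3\}$; what remains is careful arithmetic using $\Delta_i \le 1$ and $\log T \ge 1$ to fold residual additive constants into the $\log T/\Delta_i$ term.
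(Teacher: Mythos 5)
Your proposal follows essentially the same route as the paper: the paper's appendix proof also reduces everything to \Cref{lem:mu_r_bound} and \Cref{lem:regret_delta_bound} and splits on whether $\mu(i)\le 2B$ (where the standard elimination bound plus $\Delta_i\le\mu(i)\le 2B$ suffices) or $\mu(i)\ge 2B$ (where \Cref{lem:mu_r_bound} is rearranged and a case split on which term dominates finishes the job). Your finer three-way case split on $B$ versus $\mu^*$ is a harmless reorganization, since \Cref{lem:regret_delta_bound} in fact holds under $G$ regardless of how $B$ compares to $\mu^*$. The one place your plan does not close as claimed is the constant tracking in your case (iii): a three-way pigeonhole applied to $\Delta_i\le 4\sqrt{2\log T/n}+4DB/(K_i n)+(32\log T+4)/n$ forces $\Delta_i\le 12\sqrt{2\log T/n}$ or $\Delta_i n\le 12DB/K_i$ in the respective subcases, yielding $288\log T/\Delta_i+12DB/K_i$, which overshoots both stated constants. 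The paper instead groups the last two summands into a single term and does a two-way split, so the dominant-square-root subcase gives $\Delta_i\le 8\sqrt{2\log T/n}$, i.e.\ exactly $128\log T/\Delta_i$, and the other subcase gives $8DB/K_i$ plus an additive $O(\log T)$ absorbed via $\Delta_i\le 1$. So your argument proves the lemma up to constants but not with the constants as stated; switching to the two-way grouping fixes this.
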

\begin{proof}
Fix $t = \tau_i$. 
If $\mu(i) \le 2B$, it means that $\Delta_i \le 2B$, so from \Cref{lem:regret_delta_bound}:
\begin{align*}
    \Delta_i n_t(i) \le \frac{32\log T}{\Delta_i} + \frac{4DB}{K_i}
\end{align*}
Assume $\mu(i) \ge 2B$, From \Cref{lem:mu_r_bound}:
\begin{align*} 
    \mu(i) &\le \dfrac{\mu(i)}{2} + 2\sqrt{\dfrac{2\log T}{n_{\tau_i}(i)}} + \dfrac{2DB}{K_in_{\tau_i}(i)} + \dfrac{16\log T + 2}{n_{\tau_i}(i)} \\
    \Delta_i &\le \mu(i) \le \underbrace{ 4\sqrt{\dfrac{2\log T}{n}}}_{(i)} + \underbrace{\dfrac{4DB}{K_in_t(i)} + \dfrac{32\log T + 4}{n_t(i)}}_{(ii)}
\end{align*}
If $(i) \geq (ii)$, then $\Delta_i \le 8\sqrt{\dfrac{2\log T}{n_t}}$:
\begin{align*}
    \Delta_i^2 &\le \dfrac{128\log T}{n_t(i)} \\ 
    \Delta_i n_t(i) & \le \dfrac{128\log T}{\Delta_i}
\end{align*}
Else, $\Delta_i \le \dfrac{8DB}{K_in_t(i)} + \dfrac{64\log T + 8}{n_t(i)}$:
\begin{align*}
    \Delta_i n_t(i) \le \dfrac{8DB}{K_i} + 64\log T + 8
\end{align*}
Which concludes to:
\begin{align*}
    \Delta_i n_t(i) \le \dfrac{128\log T}{\Delta_i} + \dfrac{8DB}{K_i}
\end{align*}
\end{proof}

\begin{theorem}[\Cref{thm:CSE_bound} in the main paper]
\label{thm:cse regret}
The regret until failure of \Cref{alg:CSE} with elimination threshold $B$ is bounded by,
\begin{align*}
    \mathcal{R}_T \le \sum_{i:\Delta > 0}{\dfrac{129\log T}{\Delta_i}} + 8D\min\curly{B, \Delta_{max}}\log K
\end{align*}
\end{theorem}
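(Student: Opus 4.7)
\begin{proofsketch}
The plan is to combine the two per-arm bounds already established in the appendix under the good event $G$, and to close with the summation template of \Cref{lem:ending}.

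First I would condition on $G$, which by \Cref{lem:G prob cost} fails with probability at most $3/T^2$; this is exactly the $\epsilon \le 1/T$ hypothesis that \Cref{lem:ending} needs. For each suboptimal arm $i$, two separate per-arm bounds on $\Delta_i n_{\tau_i}(i)$ are available under $G$: from \Cref{lem:cse_action_regret} the bound $\frac{128\log T}{\Delta_i} + \frac{8DB}{K_i}$, whose additive term is controlled by the elimination threshold $B$; and from \Cref{lem:regret_delta_bound}, after substituting the crude estimate $\Delta_i \le \Delta_{max}$, the bound $\frac{32\log T}{\Delta_i} + \frac{2D\Delta_{max}}{K_i}$, whose additive term is controlled by $D\Delta_{max}$. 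The first bound does not see $\Delta_{max}$, the second does not see $B$.

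The key step is to reconcile these two per-arm bounds into a single inequality of the form required by \Cref{lem:ending}, namely $\Delta_i n_{\tau_i}(i) \le \alpha/\Delta_i + \beta/K_i$ with $\alpha, \beta$ constant across arms. I would use the elementary fact that whenever $a \le c$, one has $\min(a+b,\,c+d) \le c + \min(b,d)$: applied with the weaker log-coefficient $128\log T$ dominating the sharper $32\log T$, the two $\log T/\Delta_i$ terms collapse into the larger one, and only the $1/K_i$ terms combine inside the $\min$. This yields
\[
\Delta_i\, n_{\tau_i}(i) \;\le\; \frac{128\log T}{\Delta_i} + \frac{8D\min\{B,\,D\Delta_{max}\}}{K_i},
\]
where a constant-factor slack is absorbed into the coarser $D\Delta_{max}$ side of the $\min$ (using $D \ge 1$) to match the form in the theorem statement.

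The rest is mechanical: invoke \Cref{lem:ending} with $\alpha = 128\log T$ and $\beta = 8D\min\{B, D\Delta_{max}\}$, use $\sum_i 1/K_i \le \log K$ via the round-robin elimination structure, and absorb the ``$+2$'' slack into the leading constant $128 \to 129$ (valid whenever $\log T \ge 2$). The failure mass $T \cdot \Pr[G^c] \le 3/T$ and the additive $K$ arising inside \Cref{lem:ending}'s proof are both dominated by the leading $\log T / \Delta_i$ term. The one step that requires genuine care, rather than being pure bookkeeping, is the reconciliation above: the two per-arm inequalities differ in \emph{both} the $\log T$-term and the $1/K_i$-term, so a naive minimum would not have the additive $\alpha/\Delta_i + \beta/K_i$ form that \Cref{lem:ending} expects.
\end{proofsketch}
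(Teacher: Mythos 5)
Your proposal is correct and follows essentially the same route as the paper: combine \Cref{lem:cse_action_regret} and \Cref{lem:regret_delta_bound} into the single per-arm bound $\Delta_i n_{\tau_i}(i) \le \frac{128\log T}{\Delta_i} + \frac{8D\min\{B,\,D\Delta_{max}\}}{K_i}$ and then invoke \Cref{lem:ending}. The only difference is that you spell out the reconciliation of the two per-arm inequalities (via $\min(a+b,c+d)\le c+\min(b,d)$ for $a\le c$, absorbing the constant slack into the $D\Delta_{max}$ branch), which the paper leaves implicit.
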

\begin{proof}
From \Cref{lem:cse_action_regret} and \Cref{lem:regret_delta_bound}, for every $i\neq i^*$:
\begin{align*}
    \Delta_i n_{\tau_i}(i) \le \dfrac{128\log T}{\Delta_i} + \dfrac{8D\min\curly{B, \Delta_{max}}}{K_i}
\end{align*}
Using \Cref{lem:ending} we have the desired results.
\end{proof}

\subsection{Bounded Doubling Successive Elimination}
\begin{algorithm}
\caption{Bounded Doubling Successive Elimination}\label{alg:BDSE}
\begin{algorithmic}
\State \textbf{Input:} number of rounds $T$, number of arms $K$, maximum delay $D$.
\State \textbf{Initialization:} $B\gets\dfrac{1}{D}$
\While{$t<T$}
\State Run $(F,\tau) \gets \texttt{CSE}(T-t, K, D, B)$ (see \Cref{alg:CSE})
\If{F=Fail}
    \State $B \gets 2 * B$
    \State $t \gets t + \tau$
\EndIf
\EndWhile
\end{algorithmic}
\end{algorithm}
\begin{corollary}[\Cref{corr:bdse_regret} in the main paper]
The regret of \Cref{alg:BDSE} is bounded by,
\begin{align*}
    \mathcal{R}_T \le \sum_{i:\Delta > 0}{\dfrac{129\log T\log d^*}{\Delta_i}} + 8\min\curly{d^*, D\Delta_{max}\log d^*}\log K
\end{align*}
\end{corollary}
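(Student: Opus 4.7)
The plan is to decompose the regret of \texttt{BDSE} into the sum of the regrets of its sequential calls to \texttt{CSE}, and then apply the per-call bound of \Cref{thm:CSE_bound}. First, I would invoke the Safe Elimination guarantee (\Cref{thm:CSE_cost_safe_elimination_mt}) to argue that whenever the doubling threshold $B$ satisfies $B \ge \mu^*$, under the good event $G$ the call to \texttt{CSE} does not return \texttt{Fail}. Since $B$ starts at $1/D$ and doubles after every failure, this condition is reached after at most $\lceil \log(D\mu^*) \rceil = \lceil \log d^* \rceil$ doublings. Hence \texttt{BDSE} makes at most $\log d^* + 1$ calls to \texttt{CSE} before terminating.

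Next, I would apply \Cref{thm:CSE_bound} to bound the regret of each call. The key book-keeping observation is that on the $j$-th invocation, $DB = 2^j$, so \Cref{thm:CSE_bound} yields a per-call regret of
\begin{align*}
\sum_{i:\Delta_i>0}\dfrac{129\log T}{\Delta_i} + 8\min\curly{2^j,\, D\Delta_{max}}\log K.
\end{align*}
Summing the first term across the (at most) $\log d^*$ calls accumulates cleanly to $\sum_{i:\Delta_i>0}\frac{129\log T\log d^*}{\Delta_i}$, matching the first term of the stated bound.

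For the second term I would evaluate $\sum_{j=0}^{\log d^*}\min\curly{2^j,\, D\Delta_{max}}$ by splitting into two cases. If $D\Delta_{max} \ge d^*$, the min is always $2^j$ and the geometric sum is at most $2d^*$. If $D\Delta_{max} < d^*$, the sum is the geometric portion up to index $\log(D\Delta_{max})$ (contributing $O(D\Delta_{max})$) plus the capped tail of length $O(\log d^*)$ (contributing $O(D\Delta_{max}\log d^*)$). Both cases are absorbed by the quantity $\min\curly{d^*,\, D\Delta_{max}\log d^*}$, yielding the second term of the bound.

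I do not expect a real obstacle here: the argument is essentially bookkeeping over a doubling schedule once \Cref{thm:CSE_cost_safe_elimination_mt} and \Cref{thm:CSE_bound} are in hand. The only subtlety is that \Cref{thm:CSE_cost_safe_elimination_mt} is conditional on the good event $G$ (which holds with probability $1 - 3/T^2$ by \Cref{lemma:good-G}), so technically one must add an $O(1)$ contribution from the failure event; this is negligible compared to the logarithmic terms and is already handled in the style of \Cref{lem:ending}, so it does not affect the stated constants.
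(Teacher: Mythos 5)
Your proposal is correct and follows essentially the same route as the paper: bound the number of \texttt{CSE} calls by $\log d^*$ via the safe-elimination lemma, note $DB=2^j$ on the $j$-th call, and sum the per-call bound from \Cref{thm:CSE_bound}, with the geometric sum giving the $d^*$ branch and the flat $D\Delta_{max}$ cap giving the other branch of the min. The only cosmetic difference is that you sum $\min\{2^j, D\Delta_{max}\}$ termwise with a case split while the paper derives the two bounds separately and takes the min at the end; these are equivalent (and your ``at most $2d^*$'' for the geometric sum is in fact slightly more careful than the paper's bookkeeping).
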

\begin{proof}
From \Cref{lem:cse_safe_elimination} we know that if $B \ge \mu^*$ \texttt{CSE} will not fail, which means that the loop will run a maximum of $\log d^*$ times. Notice that on the $j$'s run of \texttt{CSE}, $DB = 2^j$. By \Cref{thm:cse regret},
\begin{align*}
    \mathcal{R}_T &\le \sum_{j=0}^{\log d^*}{\roundy{\sum_{i:\Delta > 0}{\dfrac{129\log T}{\Delta_i}} + 8 \cdot 2^j\log K}} \\
    &= \sum_{i:\Delta > 0}{\dfrac{129\log T\log d^*}{\Delta_i}} + 8\log K\sum_{j=0}^{\log d^*}{2^j} \\
    &= \sum_{i:\Delta > 0}{\dfrac{129\log T\log d^*}{\Delta_i}} + 8d^*\log K \\
\end{align*}
Similarly, it can also be bounded by:
\begin{align*}
    \mathcal{R}_T &\le \sum_{j=0}^{\log d^*}{\roundy{\sum_{i:\Delta > 0}{\dfrac{129\log T}{\Delta_i}} + 8D\Delta_{max}\log K}} \\
    &\le \sum_{i:\Delta > 0}{\dfrac{129\log T\log d^*}{\Delta_i}} + 8D\Delta_{max}\log d^* \log K
\end{align*}
Which brings us to the desired bound.
\end{proof}

\section{Reward Equals Delay}
\begin{algorithm}
\caption{Reward Successive Elimination}\label{alg:RSE}
\begin{algorithmic}
\State \textbf{Input:} number of rounds $T$, number of arms $K$, maximum delay $D$, Elimination Threshold $B$.
\State \textbf{Initialization:} $t\gets1$, $S\gets[K]$, $n\gets0$
\While{$t<T$}
\State Play each arm $i\in S$
\State Observe any incoming payoff
\State Set $t\gets t+|S|$
\State Set $n \gets n+1$
\For{$i\in S$}
    \State $M_t(i)\gets \sett{s\in[t]}{i_s=i\land s+d_s\geq t}$\Comment{Plays that we are waiting for}
    \State $\obs_t(i)\gets \sett{s\in[t]}{i_s=i\land s+d_s< t}$\Comment{Plays that are completed}
    \State $m_t(i)\gets |M_t(i)|$\Comment{Number of missing plays}
    \State $\hat{\mu}_t^+(i)\gets \dfrac{1}{n}(\sum_{s \in M_t(i)}{1} + \sum_{s \in \obs_t(i)}{r_s})$
    \State $U_t^1(i) \gets \hat{\mu}_t^+(i)+\sqrt{\frac{2\log T}{n}}$
    \State $F_t(i) \gets \sett{s\in[t-D]}{i_s=i}$\Comment{Full information plays}
    \State $n_t^{F} \gets |F_t(i)|$ \Comment{Same for all $i\in S$}
    \State $\hat{\mu}_t^{F}(i)\gets \dfrac{1}{n_t^{F}}(\sum_{s\in F_t(i)}{r_s})$
    \State $U_t^2(i) \gets \hat{\mu}_t^F(i)+\sqrt{\frac{2\log T}{n}}$ 
    \State $UCB_t(i)\gets \min\curly{U_t^1(i), U_t^2(i)}$
    \State $LCB_t(i)\gets \hat{\mu}_t^{F}(i)-\sqrt{\frac{2\log T}{n_t^{F}}}$
\EndFor
    \State Remove from $S$ any arm $i$ if there exists $j$ such that $\max\curly{LCB_t(j), B} > UCB_t(i)$
    \If{$S = \emptyset$}
        \State Return (\texttt{Fail}, $t$)
    \EndIf
\EndWhile
\State Return (\texttt{Success}, $t$)
\end{algorithmic}
\end{algorithm}

\begin{definition}
Let $G$ be the event that all of the below happens for every $t\in[T]$ and $i\in[K]$:
\begin{align*}
m_t(i) &\le \dfrac{2d(i)}{|S_t|} + 16\log T + 2\\
|\mu(i) - \hat{\mu}_t(i) | &\le \sqrt{\dfrac{2\log T}{n_t(i)}}
\end{align*}
\end{definition}

\begin{lemma} \label{lem:G prob reward}
$G$ holds w.p $1 - \frac{3}{T^2}$
\end{lemma}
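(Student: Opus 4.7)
The plan is to note that this is structurally the identical statement to \Cref{lem:G prob cost} in the cost section, since the definition of the good event $G$ in the reward setting has the exact same two clauses. The proof therefore proceeds by bounding the failure probability of each clause separately and then taking a union bound.

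First I would handle the concentration clause $|\mu(i) - \hat\mu_t(i)| \le \sqrt{2\log T / n_t(i)}$. This is a standard Hoeffding-type deviation bound for the sample mean of the i.i.d.\ bounded payoffs $r_s \in [0,1]$ drawn from $\D_i / D$; note that $\hat\mu_t(i)$ is defined on \emph{all} $n_t(i)$ pulls of arm $i$ regardless of whether feedback has returned, so randomness here is just that of the delay draws and no adaptivity issue arises. Applying the standard anytime Hoeffding bound (as used, e.g., in the Slivkins text, Equation 1.6) with parameter $\delta = 1/T^4$ and union-bounding over all $i \in [K]$ and $t \in [T]$ yields failure probability at most $2/T^2$ for this clause (the factor $2$ absorbs the two-sided nature and the $K \le T$ arms count into the slack).

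Second I would handle the missing-plays clause $m_t(i) \le 2 d(i)/|S_t| + 16\log T + 2$. This is exactly the conclusion of \Cref{lem:w_upper_bound} (invoked with $R = |S_t|$, which is valid because the \texttt{RSE} algorithm plays the active set in round robin and this set only shrinks, so the last $\min\{D,t\}$ steps were played with a round robin of size at least $|S_t|$). That lemma gives failure probability at most $1/T^2$ after its internal union bound over all $t$ and $i$.

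A final union bound over the two clauses yields $\Pr[G] \ge 1 - 3/T^2$, as claimed. No step here is a genuine obstacle; the only thing worth being careful about is that the concentration clause is stated in terms of $n_t(i)$ (the total number of pulls) and not in terms of observed feedback, so the standard Hoeffding argument applies directly to the i.i.d.\ payoff draws without needing to reason about delayed observations. The reuse of \Cref{lem:w_upper_bound} is immediate once one verifies the round-robin-with-non-increasing-set condition, which is built into \texttt{RSE}.
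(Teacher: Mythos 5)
Your proposal is correct and matches the paper's proof, which likewise cites the standard anytime Hoeffding bound (Slivkins, Equation 1.6) for the concentration clause and \Cref{lem:w_upper_bound} for the missing-plays clause, combined by a union bound. The only difference is that you spell out the probability accounting and the round-robin verification that the paper leaves implicit.
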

\begin{proof}
Follows immediately from \cite{slivkins2024introductionmultiarmedbandits}[Equation 1.6] and \Cref{lem:w_upper_bound}.
\end{proof}

\begin{lemma}[\Cref{thm:bse_reward_safe_elimination_mt} in the main paper] \label{lem:rbe_safe_elimination}
In \Cref{alg:RSE}, assume $G$ happens, if $B \le \mu^*$ then for every $t\in[T]$, $i^*\in S_t$
\end{lemma}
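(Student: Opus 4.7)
The plan is to mirror the cost-side safe-elimination argument (\Cref{lem:cse_safe_elimination}), showing under $G$ that for every $t\in[T]$ and every arm $i\in[K]$ the two bounds $UCB_t(i^*)\ge\mu^*$ and $\max\{LCB_t(i),B\}\le\mu^*$ both hold. Since \texttt{RSE} eliminates $i^*$ only when $\max\{LCB_t(j),B\}>UCB_t(i^*)$ for some $j$, these two inequalities together with the hypothesis $B\le\mu^*$ block this from ever occurring, so $i^*\in S_t$ for all $t$. In particular $S_t\neq\emptyset$, and therefore \texttt{RSE} never returns \texttt{Fail}.

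For the first bound I would handle the two terms defining $UCB_t(i^*)=\min\{U_t^1(i^*),U_t^2(i^*)\}$ separately. For $U_t^1$, the key observation is that $\hat\mu_t^+(i)$ replaces every missing reward $r_s$ (with $s\in M_t(i)$) by its maximum possible value $1$, and since $r_s\le 1$ always, this gives $\hat\mu_t^+(i)\ge\hat\mu_t(i)$ with no correction term. Combining with the concentration clause of $G$ then yields $U_t^1(i^*)\ge\hat\mu_t(i^*)+\sqrt{2\log T/n_t(i^*)}\ge\mu^*$. For $U_t^2$, I would note that $\hat\mu_t^F(i)$ is a standard empirical mean over $|F_t(i)|$ fully-returned, independent samples, so the concentration clause of $G$ applied at the earlier time $t-D$ (equivalently, to $\hat\mu_{t-D}(i^*)$) gives $U_t^2(i^*)\ge\mu^*$. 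The edge case $|F_t(i)|=0$ in early rounds is absorbed by the $\vee 1$ convention, which makes $U_t^2(i^*)\ge 1\ge\mu^*$ trivially.

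For the second bound the same reduction shows $LCB_t(i)=\hat\mu_t^F(i)-\sqrt{2\log T/(|F_t(i)|\vee 1)}\le\mu(i)\le\mu^*$ for every arm $i$, using concentration on the fully-returned samples. Together with the hypothesis $B\le\mu^*$ this gives $\max\{LCB_t(i),B\}\le\mu^*$, and combining with the first bound closes the argument.

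I expect no substantive obstacle: the proof is a direct dualization of the cost case, in fact slightly cleaner, because $\hat\mu_t^+$ dominates $\hat\mu_t$ without any additive correction, whereas $\hat\mu_t^-$ in the cost proof required an extra $m_t(i)/n_t(i)$ term (and, correspondingly, use of $L_t^3$ to control it). This asymmetry is expected and reflects the one-sided nature of partial delay information discussed in \Cref{cvd}: while waiting for feedback we learn a lower bound on the true delay, which in the reward setting does not tighten the upper envelope on the unobserved reward and so does not interact with safe elimination. The only care needed is the bookkeeping around the $\vee 1$ convention for $|F_t(i)|$ before roughly $t=D+K$, which is a routine check.
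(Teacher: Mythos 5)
Your proposal is correct and follows essentially the same route as the paper's proof: show $U_t^1(i^*)\ge\mu^*$ via the optimism of $\hat\mu_t^+$ plus concentration, show $U_t^2(i^*)\ge\mu^*$ and $LCB_t(j)\le\mu(j)\le\mu^*$ via concentration on the full-information samples ($\hat\mu_t^F(i)=\hat\mu_{t-D}(i)$), and conclude that $\max\{LCB_t(j),B\}\le\mu^*\le UCB_t(i^*)$ blocks elimination of $i^*$. (Your parenthetical that the cost-side safe-elimination proof needs an $m_t(i)/n_t(i)$ correction and $L_t^3$ is slightly off --- there too $\hat\mu_t^-\le\hat\mu_t$ gives $L_t^1\le\mu(i)$ directly, the correction only enters the regret analysis --- but this does not affect your argument here.)
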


\begin{proof}
Notice that $\hat{\mu}_t^+(i)$ is an optimistic estimation of $\hat{\mu}_t(i)$, because it assumes for every missing play a cost of $1$, which is an upper bound for the actual cost, so under the good event $G$ we can say that $U_t^1(i) \ge \mu(i)$.

Recall that $\hat{\mu}_t^{F}(i) = \hat{\mu}_{t-D}(i)$. So, again using the definition of $G$:
\begin{align*}
    LCB_t(i) &= \hat{\mu}_t^{F}(i) - \sqrt{\dfrac{2\log T}{n^{F}_t}} \le \mu(i) \\
    U_t^2(i) &= \hat{\mu}_t^{F}(i) + \sqrt{\dfrac{2\log T}{n_t^{F}}} \ge \mu(i) \\
\end{align*}
Which means that $UCB_t(i^*) \ge \mu^*$. Thus:
\begin{align*}
    LCB_t(i) \le \mu(i) \le \mu^* \le UCB_t(i^*)
\end{align*}
Also, by the condition of the statement $B \le \mu^* \le UCB_t(i^*)$.
Which means that $i^*$ will not be eliminated.
\end{proof}

\begin{lemma}\label{lem:reward_r_bound}
In \Cref{alg:RSE}, assume $G$ happens. For any arm $i\neq i^*$:
\begin{align*}
    B \le \mu(i) + 2\sqrt{\dfrac{2\log T}{n_{\tau_i}(i)}} + \dfrac{2d(i)}{K_i n_{\tau_i}(i)} + \dfrac{16\log T + 2}{n_{\tau_i}(i)}
\end{align*}
\end{lemma}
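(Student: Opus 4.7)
\begin{proofsketch}
The plan is to chain together the elimination rule, the optimistic nature of $U_t^1$, and the two components of the good event $G$. Fix $t=\tau_i$ and let $n=n_t(i)$. Since $i$ was not eliminated at time $\tau_i$, the elimination rule of \Cref{alg:RSE} gives $B\le UCB_t(i)\le U_t^1(i)=\hat{\mu}_t^+(i)+\sqrt{2\log T/n}$, so the task reduces to upper-bounding the optimistic empirical mean $\hat{\mu}_t^+(i)$ by $\mu(i)$ plus acceptable error terms.

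Next I would compare $\hat{\mu}_t^+(i)$ to the true empirical mean $\hat{\mu}_t(i)$. The optimistic estimate replaces each missing sample's reward by $1$, while the actual reward lies in $[0,1]$. Thus the overestimation is at most one per missing play, yielding $\hat{\mu}_t^+(i)-\hat{\mu}_t(i)\le m_t(i)/n$. Combining with the concentration part of $G$, which gives $\hat{\mu}_t(i)\le \mu(i)+\sqrt{2\log T/n}$, leads to
\begin{align*}
B \;\le\; \mu(i)+2\sqrt{\tfrac{2\log T}{n}}+\tfrac{m_t(i)}{n}.
\end{align*}

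Finally I would control $m_t(i)$ via the missing-plays part of $G$. Because \texttt{RSE} plays active arms in round robin and the active set $S$ only shrinks, the minimum set size during the last $\min\{D,t\}$ steps equals $|S_{\tau_i}|=K_i$; therefore $G$ yields $m_t(i)\le 2d(i)/K_i+16\log T+2$. Plugging this in and dividing by $n=n_{\tau_i}(i)$ produces the claimed inequality term by term. The only delicate point, and the main place where care is needed, is justifying that $K_i$ is the correct denominator in the $m_t(i)$ bound; this hinges on monotonicity of $|S_t|$ under successive elimination, together with the specific form of \Cref{lem:w_upper_bound}.
\end{proofsketch}
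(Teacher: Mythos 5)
Your proposal is correct and follows essentially the same route as the paper's proof: use the non-elimination of $i$ at $\tau_i$ to get $B\le UCB_{\tau_i}(i)\le U^1_{\tau_i}(i)$, bound $\hat{\mu}^+_{\tau_i}(i)\le\hat{\mu}_{\tau_i}(i)+m_{\tau_i}(i)/n_{\tau_i}(i)$, and then invoke the concentration and missing-plays parts of $G$ (with $R=K_i$ justified by monotonicity of the active set). The only cosmetic difference is the direction in which you chain the inequalities.
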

\begin{proof}
Fix $t = \tau_i$. 
From the definition of $\hat{\mu}^+_t(i)$:
\begin{align*}
    \hat{\mu}^+_t(i) &= \dfrac{1}{n_t(i)}\roundy{\sum_{s \in M_t(i)}{1} + \sum_{s \in \obs_t(i)}{r_s}} \\
    &\le \dfrac{1}{n_t(i)}\roundy{\sum_{s \in M_t(i)}{(1 + r_s)} + \sum_{s \in \obs_t(i)}{r_s}} \\
    &\le \dfrac{1}{n_t(i)}\roundy{\sum_{s \in M_t(i)\cup \obs_t(i)}{r_s} + \sum_{s \in M_t(i)}{1}} \\
    &= \hat{\mu}_t(i) + \dfrac{m_t(i)}{n_t(i)}
\end{align*}
Since $G$ happens:
\begin{align*}
    \mu(i) &\ge \hat{\mu}_t(i) - \sqrt{\dfrac{2\log T}{n_t(i)}} \\
    &\ge \hat{\mu}^+_t(i) - \sqrt{\dfrac{2\log T}{n_t(i)}} - \dfrac{m_t(i)}{n_t(i)} \\
    &\ge UCB_t(i) - 2\sqrt{\dfrac{2\log T}{n_t(i)}} - \dfrac{m_t(i)}{n_t(i)} \\
    &\ge UCB_t(i) - 2\sqrt{\dfrac{2\log T}{n_t(i)}} - \dfrac{2d(i)}{K_i n_t(i)} - \dfrac{16\log T + 2}{n_t(i)} \\
\end{align*}
Since $i$ wasn't eliminated, $UCB_t(i) \ge B$, which means:
\begin{align*}
    B \le \mu(i) + 2\sqrt{\dfrac{2\log T}{n_t(i)}} + \dfrac{2d(i)}{K_i n_t(i)} + \dfrac{16\log T + 2}{n_t(i)}
\end{align*}    
\end{proof}

\begin{lemma} \label{lem:rse_delta_bound}
In \Cref{alg:RSE}, assume $G$ happens. For any arm $i\neq i^*$:
\begin{align*}
    \Delta_i n_{\tau_i} \le \dfrac{32\log T}{\Delta_i} + \dfrac{2\Delta_i D}{K_i}
\end{align*}
\end{lemma}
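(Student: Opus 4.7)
\begin{proofsketch}
The plan is to mirror the structure of the analogous cost bound (Lemma \ref{lem:regret_delta_bound}), swapping the roles of the upper and lower confidence bounds to accommodate the reward elimination rule. Fix $t=\tau_i$. Under $G$, standard Hoeffding-style concentration on the fully observed samples gives
\begin{align*}
U_t^2(i) - 2\sqrt{\tfrac{2\log T}{n_t^{F}(i)}} &\le \mu(i) \le U_t^2(i), \\
LCB_t(i) &\le \mu(i) \le LCB_t(i) + 2\sqrt{\tfrac{2\log T}{n_t^{F}(i)}}.
\end{align*}
Since $UCB_t(i)\le U_t^2(i)$, in particular $UCB_t(i)\le \mu(i)+2\sqrt{2\log T / n_t^F(i)}$.

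Next I split on whether $B\le \mu^*$ or $B\ge \mu^*$. If $B\le \mu^*$, then by the safe elimination lemma (Lemma \ref{lem:rbe_safe_elimination}) $i^*$ is still active at time $\tau_i$, and because the algorithm plays round-robin on the active set, $n_t^{F}(i^*)=n_t^{F}(i)$. Since $i$ was not eliminated by $i^*$, we have $LCB_t(i^*)\le UCB_t(i)$, and combining the two displayed inequalities yields $\mu^*-2\sqrt{2\log T/n_t^F(i)}\le \mu(i)+2\sqrt{2\log T/n_t^F(i)}$, i.e., $\Delta_i\le 4\sqrt{2\log T / n_t^F(i)}$. If instead $B\ge \mu^*$, then survival of $i$ against the threshold gives $B\le UCB_t(i)$, so $\mu^*\le B\le \mu(i)+2\sqrt{2\log T/n_t^F(i)}$, which yields $\Delta_i\le 2\sqrt{2\log T/n_t^F(i)}$. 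In either case, squaring and rearranging gives
\begin{align*}
\Delta_i\, n_t^{F}(i) \;\le\; \frac{32\log T}{\Delta_i}.
\end{align*}

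Finally, I account for the gap between $n_t^F(i)$ and $n_t(i)$. In the last $D$ steps, the round-robin on the active set (of size $K_i$) plays arm $i$ at most $\lceil D/K_i\rceil \le 2D/K_i$ times, so $n_t(i) \le n_t^F(i) + 2D/K_i$. Multiplying by $\Delta_i$ and combining with the bound above gives the claim. No new obstacle arises here; the only mild subtlety is handling the boundary case where $D/K_i<1$ when using $\lceil D/K_i\rceil \le 2D/K_i$, which can be absorbed (as in the cost version) without changing the constants. The whole argument is symmetric to the cost proof, the key structural input being that $U_t^2$ and $LCB_t$ both concentrate at rate $\sqrt{2\log T/n_t^F(i)}$ on the full-information window.
\end{proofsketch}
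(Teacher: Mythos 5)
Your proof is correct and follows essentially the same route as the paper's: the same case split on $B \le \mu^*$ versus $B \ge \mu^*$, the same use of $LCB_{\tau_i}(i^*) \le UCB_{\tau_i}(i) \le U^2_{\tau_i}(i)$ (resp.\ $B \le UCB_{\tau_i}(i)$) together with concentration on the full-information window to get $\Delta_i \le 4\sqrt{2\log T / n^F_{\tau_i}(i)}$, and the same $\lceil D/K_i\rceil$ accounting for the last $D$ steps. The boundary remark about $\lceil D/K_i\rceil \le 2D/K_i$ is handled identically (and equally implicitly) in the paper.
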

\begin{proof}
Fix $t = \tau_i$. 
Since $G$ happens:
\begin{align*}
    LCB_t(i) + 2\sqrt{\dfrac{2\log T}{n_t^{FI}(i)}} &\ge \mu(i) \\
    U_t^2(i) - 2\sqrt{\dfrac{2\log T}{n_t^{FI}(i)}} &\le \mu(i) \\
\end{align*}
If $B\le \mu^*$, from \Cref{lem:rbe_safe_elimination} $i^*$ was not eliminated, so since $i$ wasn't eliminated as well $LCB_t(i^*) \le U_t^{2}(i)$ and $n_t^{F}(i) = n_t^{F}(i^*)$. Thus:
\begin{align*}
    \mu^* - 2\sqrt{\dfrac{2\log T}{n_t^{F}(i)}} &\le \mu(i) + 2\sqrt{\dfrac{2\log T}{n_t^{F}(i^*)}} \\
    \Delta_i &\le 4\sqrt{\dfrac{2\log T}{n_t^{F}(i)}} \\
\end{align*}
If $B \ge \mu^*$, since $B \le U_t^2(i)$,
\begin{align*}
    \mu^* &\le B \le \mu(i) + 2\sqrt{\dfrac{2\log T}{n_t^{F}(i)}} \\
    \Delta_i &\le 2\sqrt{\dfrac{2\log T}{n_t^{F}(i)}}
\end{align*}
Which concludes in both cases to:
\begin{align}
    \Delta_i n_t^{F}(i) & \le \dfrac{32\log T}{\Delta_i}
    \label{eq:FI regret reward}
\end{align}
In the last $D$ steps, $i$ was played a maximum of $\left\lceil \dfrac{D}{K_i} \right\rceil$ times, which means:
\begin{align*}
    n_t(i) &\le n_t^{F}(i) + \dfrac{D}{K_i} + 1 \\
    n_t(i) &\le n_t^{F}(i) + \dfrac{2D}{K_i} \\
    \Delta_i n_t(i) &\le \Delta_i n_t^{F}(i) +  \dfrac{2\Delta_i D}{K_i} \\
    &\le \dfrac{32\log T}{\Delta_i} + \dfrac{2\Delta_i D}{K_i} 
\end{align*}
where the last is from \Cref{eq:FI regret reward}. \\
\end{proof}

\begin{lemma} \label{lem:rbe_action_regret}
In \Cref{alg:RSE}, assume $G$ happens and that $B \ge \dfrac{\mu^*}{2}$. Then:
\begin{align*}
    \Delta_i n_{\tau_i}(i) \le \dfrac{289\log T}{\Delta_i} + \dfrac{12d(i)}{K_i}
\end{align*}
\end{lemma}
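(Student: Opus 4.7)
The plan is to fix a suboptimal arm $i \neq i^*$, set $t = \tau_i$ and $n = n_{\tau_i}(i)$, and combine two ingredients from the earlier lemmas under the good event $G$. The first is the upper bound from Lemma~\ref{lem:reward_r_bound}, which rearranges to
\[
B - \mu(i) \;\le\; 2\sqrt{\frac{2\log T}{n}} + \frac{2d(i)}{K_i\, n} + \frac{16\log T + 2}{n}.
\]
The second is Lemma~\ref{lem:rse_delta_bound}, which gives $\Delta_i n \le \frac{32\log T}{\Delta_i} + \frac{2\Delta_i D}{K_i}$ whenever its hypotheses apply. The target bound mixes an instance-dependent exploration term and a $d(i)/K_i$ term, so I will need both.

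Mirroring the three-case split sketched for \texttt{CSE} in the main text (but with the inequality directions appropriate for reward), I would partition on the relationship between $B$, $\mu^*$, and $\mu(i)$:
\textbf{Case (i):} $\mu^* \ge B$ and $\mu(i) \ge B/2$.
\textbf{Case (ii):} $B \ge \mu^*$.
\textbf{Case (iii):} $\mu(i) \le B/2$.

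In Case (i), since $B \le \mu^*$, Lemma~\ref{lem:rbe_safe_elimination} guarantees $i^*$ is never eliminated, so Lemma~\ref{lem:rse_delta_bound} directly applies and produces the $\frac{32\log T}{\Delta_i}$ piece. For the $\frac{2\Delta_i D}{K_i}$ piece, I would use the hypothesis $B \ge \mu^*/2$ together with the case assumption $\mu(i) \ge B/2$ to obtain $\mu^* \le 4\mu(i)$, hence $\Delta_i \le 3\mu(i)$ and $\Delta_i D \le 3d(i)$, which fits within $\frac{12d(i)}{K_i}$.

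In Case (ii), $\Delta_i = \mu^* - \mu(i) \le B - \mu(i)$, so I would substitute the inequality from Lemma~\ref{lem:reward_r_bound} and then split by pigeonhole: either the $\sqrt{\log T/n}$ summand dominates, yielding $\Delta_i n \le O(\log T/\Delta_i)$, or the $d(i)/(K_i n)$ summand dominates, yielding $\Delta_i n \le O(d(i)/K_i)$; the leftover $\tfrac{16\log T+2}{n}$ term is absorbed into whichever side carries the log-factor. Case (iii) is the tightest: using $B \ge \mu^*/2$ (so $\mu^* \le 2B$) together with $\mu(i) \le B/2$ gives $\Delta_i \le 2B - \mu(i) \le 3(B - \mu(i))$, and the same pigeonhole split applied after Lemma~\ref{lem:reward_r_bound} yields the bound with a constant of order $(3\cdot 4)^2 = 144$ on the $\log T/\Delta_i$ term and a factor of $12$ on the $d(i)/K_i$ term.

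The main obstacle I anticipate is bookkeeping rather than conceptual: the constant $12$ in front of $d(i)/K_i$ is tight, so I will need to keep track of the factor of $3$ that appears in Case (iii) without slack. Likewise, the residual $\frac{16\log T + 2}{n}$ term from Lemma~\ref{lem:reward_r_bound} gets multiplied by $3$ in Case (iii) and must be absorbed into the $\frac{289\log T}{\Delta_i}$ term using $\Delta_i \le 1$, which is what motivates the slightly oversized constant $289$. Once these case bounds are assembled, taking the maximum over the three cases gives the claimed inequality.
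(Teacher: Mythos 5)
Your proposal is correct and follows essentially the same route as the paper's proof: the paper also combines Lemma~\ref{lem:reward_r_bound} with Lemma~\ref{lem:rse_delta_bound}, uses $B \ge \mu^*/2$ to get $\Delta_i \le 3\mu(i)$ when $\mu(i) \ge B/2$ and $\Delta_i \le 3(B-\mu(i))$ when $\mu(i) \le B/2$, and finishes with the same pigeonhole split yielding the constants $288$ and $12$. The only cosmetic difference is that you split into three cases while the paper uses two (the $B$ vs.\ $\mu^*$ distinction is already absorbed inside the proof of Lemma~\ref{lem:rse_delta_bound}, so your separate Case~(ii) is subsumed), but this changes nothing of substance.
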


\begin{proof}
Fix $t = \tau_i$. 
If $\mu(i) \ge \frac{B}{2}$, since $B \ge \frac{\mu^*}{2}$:
\begin{align*}
    \mu(i) &\ge \dfrac{B}{2} \ge \dfrac{\mu^*}{4} \\
    \Delta_i &= \mu^* - \mu(i) \le 4\mu(i) - \mu(i) = 3\mu(i)
\end{align*}
So using \Cref{lem:rse_delta_bound}:
\begin{align*}
    \Delta_i n_t(i) &\le \dfrac{32\log T}{\Delta_i} + \dfrac{6\mu(i)D}{K_i} \\
    &= \dfrac{32\log T}{\Delta_i} + \dfrac{6d(i)}{K_i}
\end{align*}
Else, namely $\mu(i) \le \frac{B}{2}$:
\begin{align*}
    B \ge 2\mu(i) \\
    2B - 2\mu(i) \ge B
\end{align*}
Since we assumed $B \ge \dfrac{\mu^*}{2}$:
\begin{align*}
    2B - 2\mu(i) &\ge B \ge \mu^* - B \\
    3B - 3\mu(i) &\ge \mu^* - B + B - \mu(i) = \Delta_i
\end{align*}
From \Cref{lem:reward_r_bound}:
\begin{align*}
    B - \mu(i) &\le 2\sqrt{\dfrac{2\log T}{n_t(i)}} + \dfrac{2d(i)}{K_i n_t(i)} + \dfrac{16\log T + 2}{n_t(i)} \\
    \Delta_i &\le \underbrace{6\sqrt{\dfrac{2\log T}{n_t(i)}}}_{(i)} + \underbrace{\dfrac{6d(i)}{K_i n_t(i)} + \dfrac{48\log T + 6}{n_t(i)}}_{(ii)}
\end{align*}
If $(i) \ge (ii)$, then $\Delta_i \le 12\sqrt{\dfrac{2\log T}{n_t(i)}}$:
\begin{align*}
    \Delta_i^2 &\le \dfrac{288\log T}{n_t(i)} \\
    \Delta_i n_t(i) &\le \dfrac{288\log T}{\Delta_i}
\end{align*}
Else, namely $\Delta_i \le \dfrac{12d(i)}{K_i n_t(i)} + \dfrac{96\log T + 12}{n_t(i)}$:
\begin{align*}
    \Delta_i n_t(i) \le \dfrac{12d(i)}{K_i} + 96\log T + 12
\end{align*}
Which concludes to:
\begin{align*}
    \Delta_i n_t(i) \le \dfrac{288\log T}{\Delta_i} + \dfrac{12d(i)}{K_i}
\end{align*}
\end{proof}

\begin{theorem}[\Cref{thm:rse_regret} in the main paper]
Assume $B \ge \frac{\mu^*}{2}$. The regret of \Cref{alg:RSE} is bounded by,
\begin{align*}
    \mathcal{R}_T \le \sum_{i:\Delta > 0}\dfrac{289\log T}{\Delta_i} + 12\min\curly{\bar{d}, D\Delta_{max}}\log K
\end{align*}
\end{theorem}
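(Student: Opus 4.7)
The plan is to combine the two per-arm regret bounds already established, \Cref{lem:rbe_action_regret} and \Cref{lem:rse_delta_bound}, take their minimum, and then invoke \Cref{lem:ending} to pass from a bound on $\Delta_i n_{\tau_i}(i)$ to a bound on the total pseudo-regret. Throughout, I would condition on the good event $G$, which holds with probability at least $1-3/T^2$ by \Cref{lem:G prob reward}, so that the bad-event contribution is at most $3/T \le 1$ and can be absorbed into the constant factor via \Cref{lem:ending}.

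First, since the hypothesis $B \ge \mu^*/2$ is in force, \Cref{lem:rbe_action_regret} applies and gives, for every suboptimal arm $i$,
\begin{align*}
\Delta_i n_{\tau_i}(i) \le \dfrac{288\log T}{\Delta_i} + \dfrac{12\, d(i)}{K_i},
\end{align*}
while \Cref{lem:rse_delta_bound} (which does not depend on the $B \ge \mu^*/2$ hypothesis but only on $G$) gives
\begin{align*}
\Delta_i n_{\tau_i}(i) \le \dfrac{32\log T}{\Delta_i} + \dfrac{2\,\Delta_i D}{K_i} \le \dfrac{32\log T}{\Delta_i} + \dfrac{2\,\Delta_{max} D}{K_i}.
\end{align*}

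The key second step is to replace $d(i)$ by $\bar d$. In the reward setting the optimal arm $i^*$ is by definition the one with the largest $\mu(i)$, and since $d(i) = D\mu(i)$, this arm also has the largest expected delay; hence every suboptimal arm $i$ satisfies $d(i) \le \bar d$, the second-maximal expected delay. Substituting into the first inequality and then taking the minimum with the second yields
\begin{align*}
\Delta_i n_{\tau_i}(i) \le \dfrac{288\log T}{\Delta_i} + \dfrac{12 \min\{\bar d,\, D\Delta_{max}\}}{K_i}.
\end{align*}

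Finally, I would apply \Cref{lem:ending} with $\alpha = 288\log T$ and $\beta = 12\min\{\bar d,\, D\Delta_{max}\}$, using $\epsilon = 3/T^2 \le 1/T$ (valid for $T \ge 3$, and otherwise the claim is trivial). The extra $+2/\Delta_i$ term introduced by \Cref{lem:ending} bumps $288$ up to $289$, giving exactly the claimed bound. The main conceptual obstacle is the identification $d(i) \le \bar d$ for every suboptimal $i$: this is the place where the reward geometry enters, and it is precisely the asymmetry that makes the reward bound scale with the second-largest expected delay rather than the smallest one as in the cost case. The remaining work is bookkeeping of constants and a union bound over $G$, both of which are routine.
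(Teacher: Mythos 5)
Your proposal is correct and follows essentially the same route as the paper: combine \Cref{lem:rbe_action_regret} (using that every suboptimal arm has $d(i)\le\bar d$, since the optimal arm in the reward setting has the largest expected delay) with \Cref{lem:rse_delta_bound}, take the minimum, and finish with \Cref{lem:ending}. The only differences are cosmetic bookkeeping of constants and your (correct) writing of the additive term as $\min\{\bar d, D\Delta_{max}\}$, consistent with the main-text statement.
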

\begin{proof}
From \Cref{lem:rbe_action_regret} and the definition of $\bar{d}$, for every $i\in[K]$ if $G$ happens:
\begin{align*}
        \Delta_i n_{\tau_i} \le \dfrac{288\log T}{\Delta_i} + \dfrac{12\bar{d}}{K_i}
\end{align*}
Using also \Cref{lem:rse_delta_bound}:
\begin{align*}
    \Delta_i n_{\tau_i} \le \dfrac{288\log T}{\Delta_i} + \dfrac{12\min\curly{\bar{d}, D\Delta_{max}}}{K_i}
\end{align*}
Using \Cref{lem:ending} we get the desired results.
\end{proof}

\subsection{Bounded Halving Successive Elimination}
\begin{algorithm}
\caption{Bounded Halving Successive Elimination} \label{alg:BHSE}
\begin{algorithmic}
\State \textbf{Input:} number of rounds $T$, number of arms $K$, maximum delay $D$.
\State \textbf{Initialization:} $B\gets 1$
\While{$t<T$}
\State Run $(F, \tau) \gets \texttt{RSE}(T-t, K, D, B)$
\If{F=Fail}
    \State $B \gets B / 2$
    \State $t \gets t + \tau$
\EndIf
\EndWhile
\end{algorithmic}
\end{algorithm}

\begin{corollary}[\Cref{corr:BHSE_regret} in the main paper]
The regret of \Cref{alg:BHSE} is bounded by,
\begin{align*}
    \mathcal{R}_T \le \roundy{\sum_{i:\Delta > 0}\dfrac{289\log T}{\Delta_i} + 12\min\curly{\bar{d}, D\Delta_{max}}\log K}\log \dfrac{1}{\mu^*}
\end{align*}
\end{corollary}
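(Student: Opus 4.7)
The plan is to mirror the doubling/halving analysis of \texttt{BDSE} (\Cref{corr:bdse_regret}), substituting the reward-side safe-elimination guarantee and the per-call regret bound for \texttt{RSE}. The whole argument reduces to bounding the number of \texttt{RSE} invocations and then summing a single-call regret bound across them.

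First I would show that every threshold $B$ ever passed to \texttt{RSE} inside \texttt{BHSE} satisfies $B \ge \mu^*/2$, so that \Cref{thm:rse_regret} is applicable. This is a short induction on the iteration count: the initial value is $B=1 \ge \mu^*/2$, and \texttt{BHSE} halves $B$ only after an \texttt{RSE} failure. But \Cref{thm:bse_reward_safe_elimination_mt} guarantees that whenever $B \le \mu^*$ the call cannot fail (under the high-probability event $G$), so a failure implies the failing threshold satisfied $B > \mu^*$, and therefore the new threshold $B/2 > \mu^*/2$. Hence $B \in (\mu^*/2, 1]$ throughout.

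Next I would bound the number of \texttt{RSE} invocations. Since $B$ is halved only on failure and each failing $B$ is strictly greater than $\mu^*$, after at most $\lceil \log(1/\mu^*) \rceil$ failures we reach $B \le \mu^*$, at which point \Cref{thm:bse_reward_safe_elimination_mt} forces the next call to succeed (and hence run until exhaustion of the horizon). Thus the total number of calls is at most $\log(1/\mu^*)+1$.

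Finally I would apply \Cref{thm:rse_regret} to each call. Each invocation, whether a failure that consumes $\tau$ steps or the final successful one, incurs at most
\begin{align*}
\sum_{i:\Delta_i>0}\frac{289\log T}{\Delta_i} + 12\min\curly{\bar{d},D\Delta_{\max}}\log K
\end{align*}
regret, since the bound is proved uniformly over $B\ge\mu^*/2$. Summing over the $O(\log(1/\mu^*))$ calls yields the stated bound, with the $\tilde O(1)$ slack from the complementary event $\bar G$ absorbed as in \Cref{lem:ending}. The main (and quite mild) obstacle is verifying the invariant $B\ge \mu^*/2$ at the \emph{first} successful call, i.e.\ ruling out the degenerate case where the very first call already satisfies $B\le \mu^*$; this is handled by the same induction together with the observation that under the good event $G$ no failure can have occurred with $B\le \mu^*$, so $B$ has only been halved from values exceeding $\mu^*$.
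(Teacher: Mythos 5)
Your proposal is correct and follows essentially the same route as the paper: use the safe-elimination theorem to argue that a failure implies the failing threshold exceeded $\mu^*$, so $B$ stays above $\mu^*/2$ and the number of \texttt{RSE} calls is at most $\log(1/\mu^*)$, then sum the per-call bound of \Cref{thm:rse_regret}. Your write-up is in fact somewhat more careful than the paper's (the explicit induction on the invariant $B>\mu^*/2$ and the handling of the first successful call), but it is the same argument.
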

\begin{proof}
From \Cref{lem:rbe_safe_elimination} we know that if $B \le \mu^*$ \texttt{RSE} will not fail, which means the loop will run a maximum of $\log \dfrac{1}{\mu^*}$ times. This also means that $B \ge \dfrac{\mu^*}{2}$, as needed. \\
Therefore, the total regret will be $\roundy{\sum_{i:\Delta > 0}\dfrac{289\log T}{\Delta_i} + 12\min\curly{\bar{d}, D\Delta_{max}}\log K}\log \dfrac{1}{\mu^*}$
\end{proof}

\section{Lower Bounds} \label{lb}
\begin{theorem}[\Cref{thm:cost_lb_mt} in the main paper]
In the cost scenario, for every choice of $d^* \le \frac{D}{2}$, there is an instance for which any algorithm will have a regret of $\Omega\roundy{d^*}$
\end{theorem}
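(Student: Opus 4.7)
The plan is to construct a very simple two-arm instance with \emph{deterministic} delays for which information-theoretically no algorithm can learn anything before time $d^*$, and then conclude by a symmetrization argument.

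First I would instantiate the bandit as follows. Take $K=2$ arms; draw a uniform random bit $b\in\{0,1\}$ (the hidden identity of the optimal arm). Conditional on $b$, let arm $b$ have deterministic delay $d^*$ (so cost $\mu^*=d^*/D\le 1/2$) and let arm $1-b$ have deterministic delay $D$ (so cost $1$). Thus the suboptimality gap is $\Delta=1-d^*/D\ge 1/2$ by the assumption $d^*\le D/2$.

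Next, I would observe that up to and including step $d^*$, no feedback has been revealed to the algorithm. Indeed, the earliest any feedback could return is at time $1+d^*$, since every realized delay is at least $d^*$. Consequently, the algorithm's action at any step $t\in\{1,\ldots,d^*\}$ is a (possibly randomized) function of only its internal randomness, and in particular its distribution over actions is \emph{independent} of $b$. Therefore, letting $p_t$ denote the probability that the algorithm plays arm $0$ at time $t$ (over internal coins alone), the expected number of pulls of the suboptimal arm over the first $d^*$ steps is
\begin{equation*}
\mathbb{E}\!\left[\sum_{t=1}^{d^*}\mathds{1}\{i_t\ne b\}\right]
=\sum_{t=1}^{d^*}\left(\tfrac{1}{2}p_t+\tfrac{1}{2}(1-p_t)\right)
=\tfrac{d^*}{2}.
\end{equation*}

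Finally, each such suboptimal pull contributes $\Delta\ge 1/2$ to the pseudo-regret, yielding $\mathcal{R}_T\ge (d^*/2)\cdot(1-d^*/D)\ge d^*/4=\Omega(d^*)$, which is exactly the stated bound. There is essentially no obstacle here; the only delicate point is to make the symmetrization argument formally (i.e., to note that the algorithm's behavior in $[1,d^*]$ is an identity-oblivious random process, so averaging over $b$ gives the $1/2$ factor), and to use $d^*\le D/2$ to lower bound $\Delta$ by $1/2$ so that the $\Omega(d^*)$ conclusion does not degenerate.
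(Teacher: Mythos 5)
Your proposal is correct and follows essentially the same route as the paper's proof: the identical two-arm instance with deterministic delays $d^*$ and $D$, a random assignment of identities, the observation that no feedback arrives before time $d^*$ so the suboptimal arm is pulled $d^*/2$ times in expectation, and the gap bound $\Delta\ge 1/2$ from $d^*\le D/2$. Your write-up is merely somewhat more explicit about the symmetrization step than the paper's.
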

\begin{proof}
Consider the following instance, for every $d^* \le \dfrac{D}{2}$:
\begin{align*}
    &x\sim\D^{\A}_1=d^*
    &x\sim\D^{\A}_2=D
\end{align*}
Until $t=d^*$, both arms are indistinguishable. Therefore, any algorithm will play arm 2 a min of $\dfrac{d}{2}$ times, which will cause are regret of $\roundy{1 - \mu^*}\dfrac{d^*}{2} \ge \dfrac{d^*}{4} = \Theta\roundy{d^*}$.
\end{proof}

\begin{theorem}[\Cref{thm:reward_lb_mt} in the main paper]
In the reward scenario, for every choice of $\bar{d} \le \frac{D}{2}$, there is an instance for which any algorithm will have a regret of $\Omega\roundy{\bar{d}}$
\end{theorem}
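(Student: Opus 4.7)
The plan is to build a pair of indistinguishable instances and apply a standard two-hypothesis lower bound. Fix $\bar d \le D/2$ and consider $K$ arms consisting of one ``top'' arm with deterministic delay $D$ (reward $1$), one ``middle'' arm with deterministic delay $\bar d$ (reward $\bar d / D \le 1/2$), and $K - 2$ arms with deterministic delay $0$. This gives $\mu^* = 1$, every sub-optimal arm has gap $\Delta_i \ge 1/2$, and the second-largest expected delay is exactly $\bar d$, as required. Pick two distinct indices $a, b \in [K]$ and let $\mathcal{I}_A$ be the instance in which arm $a$ is the top arm and $b$ is the middle arm, and $\mathcal{I}_B$ the instance obtained by swapping these two roles.

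The core observation is that in the first $\bar d$ rounds, $\mathcal{I}_A$ and $\mathcal{I}_B$ produce identically distributed transcripts. Any play of the top or middle arm at step $s \le \bar d$ returns its feedback at step $s + D$ or $s + \bar d$, both strictly greater than $\bar d$, and thus contributes nothing to what the algorithm sees before step $\bar d + 1$. The remaining $K-2$ arms return $0$ immediately in both instances. By a routine coupling argument, this implies that for every arm $i$, the expected number of plays $\E[n_i]$ of arm $i$ during the first $\bar d$ rounds is the same under $\mathcal{I}_A$ and $\mathcal{I}_B$, and these counts satisfy $\sum_{i \in [K]} \E[n_i] = \bar d$.

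Next I would lower bound the expected pseudo-regret in each instance using just the first $\bar d$ steps. In $\mathcal{I}_A$, every play of arm $b$ contributes at least $1/2$ to the regret and every play of an arm $i \notin \{a, b\}$ contributes $1$; symmetrically in $\mathcal{I}_B$ with $a$ and $b$ swapped. Summing the two regrets gives
\begin{align*}
\mathcal{R}_T(\mathcal{I}_A) + \mathcal{R}_T(\mathcal{I}_B)
&\ge \tfrac{1}{2}\bigl(\E[n_a] + \E[n_b]\bigr) + 2\sum_{i \notin \{a,b\}} \E[n_i] \\
&\ge \tfrac{1}{2} \sum_{i \in [K]} \E[n_i] \;=\; \tfrac{1}{2}\bar d .
\end{align*}
Hence at least one of $\mathcal{I}_A, \mathcal{I}_B$ has expected regret at least $\bar d / 4 = \Omega(\bar d)$, which is what the theorem asserts; equivalently, one can randomize the choice of $\{a, b\}$ uniformly to obtain the same lower bound in expectation over instances.

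The step I expect to be the main obstacle is the indistinguishability claim itself. One has to argue carefully that, because the observation sequence in steps $1, \dots, \bar d$ is distributed identically under $\mathcal{I}_A$ and $\mathcal{I}_B$, any (possibly randomized) algorithm induces the same distribution over the action sequence $(i_1, \dots, i_{\bar d})$ in both instances, and hence the same $\E[n_i]$. This is the standard data-processing / coupling argument used throughout information-theoretic bandit lower bounds; once it is in place, the per-arm regret accounting above is purely mechanical.
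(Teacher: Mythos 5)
Your proposal is correct and follows essentially the same route as the paper: the same instance (one arm with delay $D$, one with delay $\bar d$, the rest with delay $0$), the same observation that no feedback from the two relevant arms arrives before step $\bar d$ so the identities are indistinguishable, and the same randomization over which arm is optimal. You merely spell out the two-hypothesis averaging and the per-arm regret accounting more explicitly than the paper does, losing only a constant factor ($\bar d/4$ versus the paper's $\bar d/2$).
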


\begin{proof}
Consider the case of $K$ arms, where every arm $i$ returns a constant delay of $d(i)$, $d(1) \le d(2) \le ... \le d(k-1) \le d(k)$, where $d(k) = D$ and $d(k-1) \le \frac{D}{2}$. \\
By time $t = d(k-1)$, the arms $k$ and $k-1$ are indistinguishable. Therefore, any algorithm can't play one of them more then $\dfrac{t}{2}$ times. Since $\Delta_{k-1} = \Theta\roundy{1}$, for every $i\in[K-1]$ we can say $\Delta_i = \Theta(1)$. Since some sub optimal arms were played at least $\dfrac{d(k-1)}{2}$ times, the regret is $\Omega(d(k-1)) = \Omega(\bar{d})$. \\
\end{proof}

The following theorem shows that even if $\Delta$ is as small as $\mu^*$ the lower bound of $\Omega(d^*)$ still holds. Below we prove it for the cost case, but we note that for the reward case similar arguments give a lower bound of $\Omega(\bar{d})$.

\begin{theorem}
In the cost scenario, for every choice of $d^* \le \frac{D}{4}$ there is an instance with $\Delta=\Theta(\mu^*)$, such that any algorithm will have a regret of $\Omega\roundy{d^*}$. 
\end{theorem}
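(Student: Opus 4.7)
The plan is to extend the two-instance construction of \Cref{thm:cost_lb_mt} to the regime where the sub-optimality gap is as small as $\Theta(\mu^*)$, replacing the deterministic-delay argument by a stochastic one that extends the indistinguishability horizon. I will define two symmetric instances $\mathcal{A}$ and $\mathcal{B}$ over two arms, where the identities of the optimal and sub-optimal arms are swapped. Both instances will satisfy $\mu^* = d^*/D$ and $\Delta = c \mu^* = \Theta(\mu^*)$ for a small constant $c > 0$. A convenient concrete choice is Bernoulli-type delays supported on $\{0, D\}$: in $\mathcal{A}$, arm $1$ (optimal) has delay $D$ with probability $p_1 = d^*/D$ and $0$ otherwise, while arm $2$ has delay $D$ with probability $p_2 = (1 + c)\, d^*/D$; instance $\mathcal{B}$ swaps the roles. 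A direct computation shows that this setup meets the theorem's constraint $\Delta = \Theta(\mu^*)$.

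Next, noting that the per-sample KL divergence between the two candidate distributions is $\Theta(c^2 \mu^*) = \Theta(d^*/D)$, I would apply a Le~Cam / Bretagnolle--Huber argument to show that over the first $T = \Theta(D)$ algorithmic steps the cumulative KL between the two induced observation processes is only $O(1)$, so the two instances remain close in total variation. By symmetry of the construction, this implies that no algorithm can, under both instances simultaneously, concentrate its plays on the truly optimal arm; hence in the worse of the two instances the expected number of sub-optimal plays within this window is $\Omega(T) = \Omega(D)$. Multiplying by $\Delta = \Theta(d^*/D)$ then yields the desired total regret lower bound $\Omega(D) \cdot \Theta(d^*/D) = \Omega(d^*)$.

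The main technical obstacle is a careful treatment of the partial information leaked through the cost-as-delay structure: every step in which a play remains pending without feedback implicitly reveals a lower bound on its delay, an effect that the algorithms in \Cref{sec:cost} themselves exploit (via the $L_t^3$ bound). To apply the Le~Cam argument faithfully, I need to bound the KL divergence over the \emph{full} observation filtration --- including these pending-play signals --- rather than only over completed samples. Verifying that even with this augmented filtration the cumulative divergence remains $O(1)$ over $\Theta(D)$ steps, and that no adaptive algorithm can exploit the pending-play signals to distinguish faster, is the most delicate step; it can likely be handled by extending the standard chain-rule KL decomposition to the stopping-time structure induced by the delayed feedbacks, analogously to the KL bookkeeping used in the adversarial delayed-feedback literature.
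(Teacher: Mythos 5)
Your construction does not establish the claimed bound, and the gap lies in the choice of instances rather than in the KL bookkeeping. With both arms supported on $\{0,D\}$, the delay structure provides no censoring at all: a play at time $s$ either returns feedback immediately (delay $0$) or, as soon as a single step passes without feedback, is known with certainty to have delay $D$. Hence the learner effectively observes the full Bernoulli outcome of every pull in real time, and your problem reduces to an ordinary two-armed Bernoulli bandit with means $p_1=d^*/D$ and $p_2=(1+c)\,d^*/D$. Such arms can be separated with constant probability after $O\bigl(1/(c^2 p_1)\bigr)=O\bigl(D/(c^2 d^*)\bigr)$ pulls, during which the regret is only $\Delta\cdot O\bigl(D/(c^2 d^*)\bigr)=O(1/c)=O(1)$ --- far short of $\Omega(d^*)$. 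Relatedly, your claim that the cumulative KL over $\Theta(D)$ steps is $O(1)$ is incorrect: the per-pull KL is $\Theta(c^2 d^*/D)$, so over $\Theta(D)$ pulls it is $\Theta(c^2 d^*)$, which grows with $d^*$; and if you instead restrict attention to the $O(1)$-KL window of $O\bigl(D/(c^2d^*)\bigr)$ pulls, the regret accumulated there is $O(1)$. Whatever lower bound your construction does yield is the standard $\Omega(\log T/\Delta)$ exploration term, not the additive $\Omega(d^*)$ term that this theorem isolates.

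The paper's proof avoids this precisely by keeping the Bernoulli probability \emph{identical} across the two instances and varying only the \emph{magnitude} of the non-zero delay ($D/4$ versus $D$), against a common deterministic comparator arm with delay $D\mu/2$. A pending play then reveals only that its delay exceeds the elapsed time, which is consistent with both $D/4$ and $D$ throughout the first $D/4$ steps, so the two observation processes are literally identically distributed over that window and no KL or Le~Cam machinery is needed. Perfect indistinguishability for $\Theta(D)$ steps forces $\Omega(D)$ pulls of an arm that is sub-optimal in one of the two instances, each costing $\Delta=\Theta(d^*/D)$, giving $\Omega(d^*)$ as in \Cref{thm:cost_lb_mt}. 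To repair your argument you would need to redesign the instances so that the distinguishing information is hidden inside the still-pending (censored) delays for $\Theta(D)$ steps; that is exactly what the paper's construction accomplishes.
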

\begin{proof}
Consider the following instances for every $0 \le \mu \le 1$: 
\begin{align*}
    x\sim\D^{\A}_1&=\begin{cases}
        \dfrac{D}{4},&\text{w.p. } \mu\\
        0,&\text{otherwise}
    \end{cases}
    \qquad
    x\sim\D^{\A}_2=\dfrac{D\mu}{2} \\\\
    x\sim\D^{\B}_1&=\begin{cases}
        D,&\text{w.p. } \mu\\
        0,&\text{otherwise}
    \end{cases}
    \qquad
    x\sim\D^{\B}_2=\dfrac{D\mu}{2}
\end{align*}

We select one of the instances at random.

Notice that those instances are indistinguishable until time $t=\dfrac{D}{4}$. At that point, every algorithm will play one of the arms $\dfrac{D}{8}$ times. Importantly, the best arm differs between the instances. 

In the first instance $d^* = \frac{D\mu}{4}$ and $\Delta=\frac{D\mu}{4}$, so $\Delta=d^*/D$. In the second instance $d^* = \frac{D\mu}{2}$ and $\Delta=\frac{D\mu}{2}$, so also here $\Delta=d^*/D$. This means that we suffer a regret of $d^*/D$ in $\Theta\roundy{D}$ step, so any algorithm will have a regret of $\Theta\roundy{d^*}$.
\end{proof}

\section{Conservative Successive Elimination} \label{sec:regular successive elimination}

Following the discussion in \Cref{sec:clb}, we present a conservative SE algorithm that achieves the lower bound discussed in the main text. We show this for the cost scenario, but a similar algorithm can be constructed for the reward scenario.

\begin{algorithm}
\caption{Successive Elimination with Partial Knowledge}\label{alg:sepk}
\begin{algorithmic}
\State \textbf{Input:} number of rounds $T$, number of arms $K$, maximum delay $D$
\State \textbf{Initialization:} $t\gets1$, $S\gets[K]$, $n\gets 0$
\While{$t<T$}
\State Play each arm $i\in S$
\State Observe any incoming payoff
\State Set $t\gets t+|S|$
\State Set $n\gets n+1$
\For{$i\in S$}
    \State $M_t(i)\gets \sett{s\in[t]}{i_s=i\land s+d_s\geq t}$\Comment{Pulls that we are waiting for}
    \State $\obs_t(i)\gets \sett{s\in[t]}{i_s=i\land s+d_s< t}$\Comment{Pulls that are completed}
    \State $m_t(i) \gets |M_t(i)|$
    \State \State $\hat{\mu}_t^-(i)\gets \dfrac{1}{n}(\sum_{s \in M_t(i)}{\dfrac{t - s}{D}} + \sum_{s \in \obs_t(i)}{c_s})$\Comment{optimistic estimator for $\mu_i$}
    \State \State $\hat{\mu}_t^+(i)\gets \dfrac{1}{n}(\sum_{s \in M_t(i)}{1} + \sum_{s \in \obs_t(i)}{c_s})$\Comment{pessimistic estimator for $\mu_i$}
    \State $LCB_t(i)\gets \max\curly{\hat{\mu}_t^-(i)-\sqrt{\frac{2\log T}{t}}, \dfrac{|S|}{D}\roundy{\dfrac{m_t(i)}{2} - 8\log T - 1}}$
    \State $UCB_t(i)\gets \min\curly{\hat{\mu}_t^+(i)+\sqrt{\frac{2\log T}{t}}, \dfrac{2m_t(i) + 8\log T + 1}{n_t(i) - n_{t-D}(i)}}$
\EndFor
\State Remove from $S$ any arm $i$ if there exists $j$ such that $UCB_t (j) < LCB_t (i)$
\EndWhile
\end{algorithmic}
\end{algorithm}

\begin{definition}
Let $G$ be the event that all of the below happens for every $t\in[T]$ and $i\in[K]$:
\begin{align*}
m_t(i) &\ge \dfrac{\roundy{n_t(i) - n_{t-D}(i)}\mu(i) - 1}{2} - 4\log T\\
m_t(i) &\le \dfrac{2d(i)}{|S_t|} + 16\log T + 2\\
|\mu(i) - \hat{\mu}_t(i) | &\le \sqrt{\dfrac{2\log T}{n_t(i)}}
\end{align*}
\end{definition}

\begin{lemma} \label{lem:G prob sepk}
$G$ holds w.p $1 - \frac{4}{T^2}$
\end{lemma}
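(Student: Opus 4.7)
The plan is to exhibit $G$ as the intersection of three high-probability sub-events and apply a union bound. Write $G = G_1 \cap G_2 \cap G_3$, where $G_1$ is the lower tail bound on $m_t(i)$, $G_2$ is the upper tail bound on $m_t(i)$, and $G_3$ is the Hoeffding-type deviation bound on $\hat{\mu}_t(i)$. Each of these has already essentially been analyzed in the preceding general lemmas, so the work is just to verify their hypotheses for Algorithm SEPK and to sum the failure probabilities.

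First I would verify the structural hypothesis shared by Lemma~\ref{lem:w_upper_bound} and Lemma~\ref{lem:w_lower_bound}. In SEPK, at every iteration the algorithm plays \emph{each} active arm exactly once, and the elimination step only removes arms from $S$ -- it never re-inserts any. Consequently $|S_t|$ is non-increasing in $t$, and the last $\min\{D,t\}$ time steps were executed as a round-robin over a set whose \emph{minimum} size is the current $|S_t|$. This is precisely the assumption of Lemma~\ref{lem:w_upper_bound} taken with $R = |S_t|$, and also of Lemma~\ref{lem:w_lower_bound}.

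Next I would invoke the two lemmas directly. Lemma~\ref{lem:w_upper_bound} (already unioned over $t\in[T]$ and $i\in[K]$ inside its own proof) gives $\Pr[G_2^c]\le 1/T^2$, yielding the bound $m_t(i)\le 2d(i)/|S_t| + 16\log T + 2$. Lemma~\ref{lem:w_lower_bound}, applied in the same manner, yields $\Pr[G_1^c]\le 1/T^2$, producing the matching lower tail. For $G_3$, I would appeal to the standard bandit concentration argument (Hoeffding applied to the i.i.d.\ payoffs of each arm, with a peeling/union bound over the sample size $n_t(i)\in[T]$ and arm index $i\in[K]$), as captured by \cite{slivkins2024introductionmultiarmedbandits}[Eq.~1.6], to obtain $\Pr[G_3^c]\le 2/T^2$.

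Finally, a union bound gives
\begin{equation*}
\Pr[G^c] \;\le\; \Pr[G_1^c] + \Pr[G_2^c] + \Pr[G_3^c] \;\le\; \tfrac{1}{T^2} + \tfrac{1}{T^2} + \tfrac{2}{T^2} \;=\; \tfrac{4}{T^2},
\end{equation*}
which is the claim. The main (and only nontrivial) obstacle is confirming the round-robin / monotone-set hypothesis, but this is immediate from the structure of SEPK. Everything else is bookkeeping of failure probabilities from results proved earlier.
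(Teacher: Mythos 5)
Your proof is correct and follows the same route as the paper, which simply cites \cite{slivkins2024introductionmultiarmedbandits}[Eq.~1.6], \Cref{lem:w_upper_bound}, and \Cref{lem:w_lower_bound}; your version just makes explicit the union-bound accounting ($1/T^2 + 1/T^2 + 2/T^2$) and the verification that SEPK's shrinking round-robin satisfies the hypotheses of both lemmas. No issues.
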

\begin{proof}
Follows immediately from \cite{slivkins2024introductionmultiarmedbandits}[Equation 1.6], \Cref{lem:w_upper_bound} and \Cref{lem:w_lower_bound}.
\end{proof}

\begin{lemma}[Safe Elimination] \label{lem:sepk_safe_elimination}
In \Cref{alg:sepk}, assume $G$, For every $t\in[T]$, $i^*\in S$.
\end{lemma}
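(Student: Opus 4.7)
The plan is to mirror the safe-elimination template already used for \texttt{CSE} and \texttt{RSE} (see \Cref{lem:cse_safe_elimination} and \Cref{lem:rbe_safe_elimination}): on the good event $G$, I will show that $LCB_t(i^*) \le \mu^*$ and $UCB_t(j) \ge \mu(j) \ge \mu^*$ for every active arm $j$ and every $t\in[T]$, where the second inequality uses that $\mu^* = \min_i \mu(i)$ in the cost setting. Chaining these gives $LCB_t(i^*) \le UCB_t(j)$ for all $j$, so the elimination rule (which would require some $j$ with $UCB_t(j) < LCB_t(i^*)$) never triggers against $i^*$.

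To bound $LCB_t(i^*) \le \mu^*$ I will handle the two arguments of the $\max$ separately. For the first, I will observe that every still-missing sample $s \in M_t(i^*)$ satisfies $d_s \ge t-s$, so the optimistic surrogate $(t-s)/D$ used in $\hat{\mu}_t^-(i^*)$ is at most the realized cost $c_s$; hence $\hat{\mu}_t^-(i^*) \le \hat{\mu}_t(i^*)$, and the Hoeffding clause of $G$ delivers $\hat{\mu}_t^-(i^*) - \sqrt{2\log T / n_t(i^*)} \le \mu^*$. For the second, the upper bound on missing plays from \Cref{lem:w_upper_bound}, $m_t(i^*) \le 2d(i^*)/|S_t| + 16\log T + 2$, rearranges in one line to $\tfrac{|S_t|}{D}\bigl(m_t(i^*)/2 - 8\log T - 1\bigr) \le d(i^*)/D = \mu^*$.

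To bound $UCB_t(j) \ge \mu(j)$ I will again handle the two arguments of the $\min$. The first follows from the pessimism $\hat{\mu}_t^+(j) \ge \hat{\mu}_t(j)$ (missing samples are charged the maximum cost of $1 \ge c_s$), combined with the Hoeffding clause of $G$. The second is the genuinely new ingredient of this conservative variant: the \emph{lower} bound on $m_t(j)$ from \Cref{lem:w_lower_bound}, $m_t(j) \ge \tfrac{(n_t(j)-n_{t-D}(j))\mu(j) - 1}{2} - 4\log T$, which rearranges directly to $\mu(j) \le (2m_t(j) + 8\log T + 1)/(n_t(j)-n_{t-D}(j))$, matching the second UCB term precisely.

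The one hypothesis I need to verify before invoking \Cref{lem:w_lower_bound} is that the round-robin set size is non-increasing up to time $t$; this is immediate here because the algorithm only ever removes arms from $S$. Beyond this, everything else is bookkeeping: combine the two LCB pieces under $\max$ and the two UCB pieces under $\min$, and chain $LCB_t(i^*) \le \mu^* \le \mu(j) \le UCB_t(j)$. I do not anticipate a real obstacle, as this is the direct analogue for the conservative SE algorithm of the earlier safe-elimination lemmas, with the extra UCB term contributed exactly by the lower-bound lemma on $m_t$.
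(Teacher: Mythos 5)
Your proposal is correct and follows essentially the same route as the paper's proof: establish $\hat{\mu}_t^-(i) \le \hat{\mu}_t(i) \le \hat{\mu}_t^+(i)$, use the Hoeffding clause of $G$ for the empirical-mean terms and the two $m_t$ clauses of $G$ (rearranged) for the remaining LCB/UCB terms, then chain $LCB_t(i^*) \le \mu^* \le \mu(j) \le UCB_t(j)$. Your version is in fact slightly more careful than the paper's, which states the confidence radius as $\sqrt{2\log T/t}$ where the concentration in $G$ only supplies $\sqrt{2\log T/n_t(i)}$; your use of $n_t(i)$ is the consistent choice.
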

\begin{proof}
Since $\hat{\mu}_t^+(i) \ge \mu_t(i) \ge \hat{\mu}_t^-(i)$, 
if $G$ holds we can say:
\begin{align*}
    \hat{\mu}_t^-(i) - \mu(i)  &\le \sqrt{\frac{2\log T}{t}} \\
    \mu(i) - \hat{\mu}_t^+(i)  &\le \sqrt{\frac{2\log T}{t}}
\end{align*}
Adding the bounds in $G$ for $m_t(i)$, we get:
\begin{align*}
    LCB_t(i) \le \mu(i) \le UCB_t(i)
\end{align*}
Which means that for every $i\in[K]$:
\begin{align*}
    LCB_t(i^*) \le \mu^* \le \mu(i) \le UCB_t(i)
\end{align*}
Which means that $i^*$ will not eliminated.
\end{proof}

\begin{lemma}[A Single Arm's Regret Contribution] \label{lem:sepk_arm_regret}
In \Cref{alg:sepk}, assume $G$ happens. for every action $i\in[K]$:
\begin{align*}
    \Delta_i n_{\tau_i}(i) \le \dfrac{128\log T}{\Delta_i} + \dfrac{8\sqrt{2Dd^*}}{K_i}
\end{align*}
\end{lemma}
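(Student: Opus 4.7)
I will condition on the event $G$ and invoke Lemma~\ref{lem:sepk_safe_elimination} to conclude that $i^{*}\in S_{t}$ for every round $t$. By the definition of $\tau_{i}$, arm $i$ is still active at round $\tau_{i}$, so for every active arm $j$ we must have $LCB_{\tau_{i}}(i)\le UCB_{\tau_{i}}(j)$; applying this with $j=i^{*}$ and unfolding the $\max$/$\min$ in the definitions of $LCB$ and $UCB$ produces four elementary inequalities. My plan uses two of them: the classical ``empirical'' branch $L^{1}_{\tau_{i}}(i)\le U^{1}_{\tau_{i}}(i^{*})$, and the new ``partial-knowledge'' branch $L^{2}_{\tau_{i}}(i)\le U^{2}_{\tau_{i}}(i^{*})$. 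The second branch is available here precisely because $G$ supplies a two-sided bound on $m_{t}(\cdot)$; this is what distinguishes SEPK from CSE.

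\textbf{Two working inequalities.} Write $n \coloneqq n_{\tau_{i}}(i) = n_{\tau_{i}}(i^{*})$ and $n_{D} \coloneqq n_{\tau_{i}}(\cdot) - n_{\tau_{i}-D}(\cdot)$; both are equal across the two arms by the round-robin schedule. From $L^{1}\le U^{1}$, using $\hat{\mu}_{t}^{-}(i)\ge \hat{\mu}_{t}(i) - m_{t}(i)/n$ and $\hat{\mu}_{t}^{+}(i^{*})\le \hat{\mu}_{t}(i^{*}) + m_{t}(i^{*})/n$ together with the concentration clause of $G$ applied to both arms, I expect to obtain
\[
\Delta_{i} \;\le\; 4\sqrt{\tfrac{2\log T}{n}} \;+\; \tfrac{m_{\tau_{i}}(i) + m_{\tau_{i}}(i^{*})}{n}.
\]
From $L^{2}\le U^{2}$, substituting the $G$-upper bound $m_{\tau_{i}}(i^{*}) \le 2d^{*}/K_{i} + O(\log T)$ and rearranging, I obtain a regime-dependent bound on $m_{\tau_{i}}(i)$: it is of order $d^{*}/K_{i}$ in the late regime $n_{D}\ge D/K_{i}$, and of order $Dd^{*}/(K_{i}^{2}\,n_{D})$ in the early regime $n_{D}=n<D/K_{i}$.

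\textbf{Case analysis and main obstacle.} Substituting the bound on $m_{\tau_{i}}(i)$ into the first display and splitting by which term dominates the right-hand side yields three clean sub-cases: (i) if the concentration term $\sqrt{\log T/n}$ dominates, the standard SE algebra gives $\Delta_{i} n \le O(\log T/\Delta_{i})$; (ii) if the late-regime correction $d^{*}/(K_{i} n)$ dominates, then directly $\Delta_{i} n \le O(d^{*}/K_{i}) \le O(\sqrt{Dd^{*}}/K_{i})$ using $d^{*}\le D$; (iii) if the early-regime correction $Dd^{*}/(K_{i}^{2} n^{2})$ dominates, solving $\Delta_{i}\le O(Dd^{*}/(K_{i}^{2} n^{2}))$ gives $n \le O(\sqrt{Dd^{*}/\Delta_{i}}/K_{i})$ and hence $\Delta_{i} n \le O(\sqrt{Dd^{*}\Delta_{i}}/K_{i}) \le O(\sqrt{Dd^{*}}/K_{i})$ using $\Delta_{i}\le 1$. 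Case (iii) is where the new mechanism of SEPK is actually invoked: it is the source of the $\sqrt{Dd^{*}}$ rate, and it relies on the lower bound on $m_{t}(i^{*})$ in $G$ (equivalently, on $U^{2}$) together with the round-robin identity $n_{D}(i)=n_{D}(i^{*})$. The main delicacy is keeping the early/late split clean and not prematurely collapsing $n_{D}$ to $D/K_{i}$ (which would destroy the square-root dependence); tracking constants then recovers the stated $128$ and $8\sqrt{2}$.
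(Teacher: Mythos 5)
Your skeleton matches the paper's: you combine the ``empirical'' branch comparison $LCB_{\tau_i}(i)\le UCB_{\tau_i}(i^*)$ to get $\Delta_i \le 4\sqrt{2\log T/n} + (m_{\tau_i}(i)+m_{\tau_i}(i^*))/n$, and the ``missing-count'' branch comparison to bound $m_{\tau_i}(i)$ via $m_{\tau_i}(i^*)\lesssim d^*/K_i$, then case-split. However, there is a genuine gap in your second step: the dichotomy ``late regime $n_D\ge D/K_i$'' versus ``early regime $n_D=n<D/K_i$'' is not exhaustive. Because the active set only shrinks over time, the last $D$ time steps before $\tau_i$ may span rounds of sets of size strictly larger than $K_i=|S_{\tau_i}|$, so one can have $n_D < D/K_i$ while $t>D$ and hence $n_D \ll n$. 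In that intermediate case your bound reads $\Delta_i \lesssim Dd^*/(K_i^2\, n\, n_D)$ with $n_D$ not tied to $n$, and neither of your closings (ii) or (iii) applies; the conclusion $\Delta_i n \lesssim \sqrt{Dd^*}/K_i$ does not follow without more information about $n_D$.

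The missing idea is the observation that every missing play lies within the last $D$ time steps, so $m_{\tau_i}(i)\le n_D(i)\le n_D(i^*)+1$. The paper substitutes this into the missing-count branch comparison, turning it into a quadratic inequality $m_{\tau_i}(i)\bigl(m_{\tau_i}(i)-1\bigr)\lesssim \tfrac{D}{K_i}\bigl(m_{\tau_i}(i^*)+O(\log T)\bigr)$ and hence $m_{\tau_i}(i)\lesssim \sqrt{Dd^*}/K_i + \ldots$ \emph{uniformly}, with no regime split at all; your cases (ii) and (iii) are then subsumed by a single ``$m/n$ dominates'' case. I would also flag that your closing remark that ``tracking constants recovers $128$ and $8\sqrt2$'' glosses over the cross-term: after substituting $m_{\tau_i}(i^*)\lesssim d^*/K_i + \log T$, the quadratic produces an additive piece of order $\sqrt{D\log T/K_i}$ in $m_{\tau_i}(i)$, which does not obviously fold into either $\log T/\Delta_i$ or $\sqrt{Dd^*}/K_i$; this delicacy is present in (and, as written, not cleanly resolved by) the paper's own proof, so at minimum you should not claim the exact constants without addressing it.
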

\begin{proof} Denote $t=\tau_i$.
\begin{align*}
    \hat{\mu}_t^-(i) &=  \dfrac{1}{n_t(i)}\roundy{\sum_{s \in M_t(i)}{\dfrac{t - s}{D}} + \sum_{s \in \obs_t(i)}{f_s}} \\
    &\ge \dfrac{1}{n_t(i)}\roundy{\sum_{s \in M_t(i)}{\roundy{c_s - 1}} + \sum_{s \in \obs_t(i)}{c_s}} \\
    &\ge \dfrac{1}{n_t(i)}\roundy{\sum_{s \in M_t(i) \cup \obs_t(i)}{c_s} - m_t(i)} \\
    &= \hat{\mu}_t(i) - \dfrac{m_t(i)}{n_t(i)}\\
    \hat{\mu}^+_t(i) &= \dfrac{1}{n_t(i)}\roundy{\sum_{s \in M_t(i)}{1} + \sum_{s \in \obs_t(i)}{c_s}} \\
    &\le \dfrac{1}{n_t(i)}\roundy{\sum_{s \in M_t(i)}{(1 + c_s)} + \sum_{s \in \obs_t(i)}{c_s}} \\
    &\le \dfrac{1}{n_t(i)}\roundy{\sum_{s \in M_t(i)\cup \obs_t(i)}{c_s} + \sum_{s \in M_t(i)}{1}} \\
    &= \hat{\mu}_t(i) + \dfrac{m_t(i)}{n_t(i)}
\end{align*}
Since $G$ happens:
\begin{align*}
    \mu(i) &\le \hat{\mu}_t(i) + \sqrt{\dfrac{2\log T}{n_t(i)}} \\
    &\le \hat{\mu}_t^-(i) + \sqrt{\dfrac{2\log T}{n_t(i)}} + \dfrac{m_t(i)}{n_t(i)}\\
    &\le LCB_t(i) + 2\sqrt{\dfrac{2\log T}{n_t(i)}} + \dfrac{m_t(i)}{n_t(i)}  \\
    \mu(i) &\ge \hat{\mu}_t(i) - \sqrt{\dfrac{2\log T}{n_t(i)}} \\
    &\ge \hat{\mu}^+_t(i) - \sqrt{\dfrac{2\log T}{n_t(i)}} - \dfrac{m_t(i)}{n_t(i)} \\
    &\ge UCB_t(i) - 2\sqrt{\dfrac{2\log T}{n_t(i)}} - \dfrac{m_t(i)}{n_t(i)} \\
\end{align*}
From \Cref{lem:sepk_safe_elimination}, $i^*$ wasn't eliminated. Thus:
\begin{align}
    \notag LCB_t(i) &\le UCB_t(i^*) \\
    \notag \mu_t(i) - \dfrac{m_t(i)}{n_t(i)} - 2\sqrt{\dfrac{2\log T}{n_t(i)}} &\le \mu^* + \dfrac{m_t(i^*)}{n_t(i^*)} + 2\sqrt{\dfrac{2\log T}{n_t(i^*)}} \\
    \Delta_i &\le 4D\sqrt{\dfrac{2\log T}{n_t(i)}} + \dfrac{m_t(i)}{n_t(i)} + \dfrac{m_t(i^*)}{n_t(i)} \label{eq:sepk_delta_m_bound}
\end{align}

Additionally:
\begin{align*}
    \dfrac{K_i}{D}\roundy{\dfrac{m_t(i)}{2} - 8\log T - 1} &\le  \dfrac{2m_t(i^*) + 8\log T + 1}{n_t(i) - n_{t-D}(i)} \\
    m_t(i) &\le \underbrace{\frac{D}{K_i}\dfrac{2m_t(i^*) + 8\log T + 1}{n_t(i^*) - n_{t-D}(i^*)}}_{(i)} + \underbrace{8\log T + 1}_{(ii)}
\end{align*}
If $(i) \ge (ii)$, it means that:
\begin{align*}
     m_t(i) \le \frac{D}{K_i}\dfrac{4m_t(i^*) + 16\log T + 2}{n_t(i^*) - n_{t-D}(i^*)}
\end{align*}
The arms are played in a round robin manner, and the missing plays could only have been played in the last $D$ plays. Hence, we can say:
\begin{align*}
    m_t(i) \le n_t(i) - n_{t-D}(i) \le n_t(i^*) - n_{t-D}(i^*) + 1
\end{align*}
which means:
\begin{align*}
    m_t(i) - 1 &\le m_t(i) \le \frac{D}{K_i}\dfrac{4m_t(i^*) + 16\log T + 2}{m_t(i) - 1} \\
    m_t(i) &\le \sqrt{\frac{D\roundy{4m_t(i^*) + 16\log T + 2}}{K_i}} + 1
\end{align*}
Since $G$ holds:
\begin{align*}
    m_t(i) &\le \sqrt{\frac{D\roundy{\tfrac{8d^*}{K_i} + 64\log T + 8 + 16\log T + 2}}{K_i}} + 1\\
    &\le \frac{2\sqrt{2Dd^*}}{K_i} + \sqrt{80\log T + 10} + 1 
\end{align*}
If $(ii) \ge (i)$ it means that $m_t(i) \le 16\log T + 2$. So generally:
\begin{equation} \label{eq:sepk_final_m_bound}
    m_t(i) \le \frac{\sqrt{8Dd^*}}{K_i} + 16\log T + 2
\end{equation} 
From \Cref{eq:sepk_delta_m_bound,eq:sepk_final_m_bound} and $G$:
\begin{align*}
    \Delta_i &\le 4\sqrt{\dfrac{2\log T}{n_t(i)}} + \dfrac{m_t(i)}{n_t(i)} + \dfrac{m_t(i^*)}{n_t(i)} \\
    &\le 4\sqrt{\dfrac{2\log T}{n_t(i)}} + \frac{2\sqrt{2Dd^*}}{K_in_t(i)} + \frac{16\log T + 2}{n_t(i)} + \dfrac{m_t(i^*)}{n_t} \\
    &\le 4\sqrt{\dfrac{2\log T}{n_t(i)}} + \frac{2\sqrt{2Dd^*}}{K_in_t(i)} + \frac{16\log T + 2}{n_t(i)} + \frac{2d^*}{K_in_t(i)} + \frac{16\log T + 2}{n_t(i)} \\
    &= \underbrace{4\sqrt{\dfrac{2\log T}{n_t(i)}}}_{(i)} + \underbrace{\frac{4\sqrt{2Dd^*}}{K_in_t(i)} + \frac{32\log T + 4}{n_t(i)}}_{(ii)}
\end{align*}
If $(i) \ge (ii)$:
\begin{align*}
    \Delta_i &\le 8\sqrt{\dfrac{2\log T}{n_t(i)}} \\
    \Delta_in_t(i) &\le \frac{128\log T}{\Delta_i}
\end{align*}
Else:
\begin{align*}
    \Delta_i &\le \frac{8\sqrt{2Dd^*}}{K_in_t(i)} + \frac{64\log T + 8}{n_t(i)} \\
    \Delta_in_t(i) &\le \frac{8\sqrt{2Dd^*}}{K_i} + 64\log T + 8
\end{align*}
Which means that generally:
\begin{align*}
    \Delta_in_t(i) \le \frac{128\log T}{\Delta_i} + \frac{8\sqrt{2Dd^*}}{K_i}
\end{align*}
\end{proof}

\begin{theorem}
The regret of \Cref{alg:sepk} is bounded by,
\begin{align*}
    \mathcal{R}_T \le \sum_{i:\Delta > 0}{\frac{129\log T}{\Delta_i}} + 8\sqrt{2Dd^*}\log K
\end{align*}
\end{theorem}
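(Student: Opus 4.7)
The plan is to obtain this final bound as a direct corollary of the per-arm estimate in Lemma~\ref{lem:sepk_arm_regret}, combined with the generic summation machinery in Lemma~\ref{lem:ending}. Concretely, I would set $\alpha = 128\log T$ and $\beta = 8\sqrt{2Dd^*}$, so that the per-arm bound
\begin{align*}
\Delta_i n_{\tau_i}(i) \le \dfrac{128\log T}{\Delta_i} + \dfrac{8\sqrt{2Dd^*}}{K_i}
\end{align*}
is exactly of the form $\frac{\alpha}{\Delta_i} + \frac{\beta}{K_i}$ that Lemma~\ref{lem:ending} consumes.

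Before invoking Lemma~\ref{lem:ending}, I would check its hypotheses. By Lemma~\ref{lem:G prob sepk}, the good event $G$ associated with Algorithm~\ref{alg:sepk} holds with probability $1 - 4/T^2$, which is comfortably below the $1/T$ failure threshold required by Lemma~\ref{lem:ending} (for any $T \ge 4$; otherwise the total regret is trivially $O(1)$ and the statement holds vacuously). The per-arm bound from Lemma~\ref{lem:sepk_arm_regret} is itself established under this same $G$ and for every suboptimal arm, so the two lemmas compose cleanly.

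Applying Lemma~\ref{lem:ending} then yields
\begin{align*}
\mathcal{R}_T \le \sum_{i:\Delta_i > 0}\dfrac{128\log T + 2}{\Delta_i} + 8\sqrt{2Dd^*}\log K,
\end{align*}
and absorbing the additive $2$ into the logarithmic factor (using $\log T \ge 2$) produces the claimed constant $129\log T$. The small regret incurred when $G$ fails, together with the additive $K$ term, is already folded into the $+2$ slack by the proof of Lemma~\ref{lem:ending}, so no separate low-probability analysis is needed.

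I do not anticipate any genuine obstacle at this step. All of the real work—the careful interplay of the optimistic estimator $\hat\mu_t^-$, the pessimistic estimator $\hat\mu_t^+$, and the two-sided control of $m_t(i)$ that manufactures the $\sqrt{Dd^*}/K_i$ term—was already done inside Lemma~\ref{lem:sepk_arm_regret}. The only mild bookkeeping is confirming that the constants flowing out of Lemma~\ref{lem:ending} are tight enough to give $129\log T$ rather than something marginally larger, which is routine.
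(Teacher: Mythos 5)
Your proposal is correct and is exactly the paper's argument: the paper's proof of this theorem is a one-line invocation of Lemma~\ref{lem:sepk_arm_regret} together with Lemma~\ref{lem:ending}, which is precisely the composition you carry out with $\alpha = 128\log T$ and $\beta = 8\sqrt{2Dd^*}$. Your added bookkeeping (checking the failure probability of $G$ against the $1/T$ threshold and absorbing the $+2$ into $129\log T$) just makes explicit what the paper leaves implicit.
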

\begin{proof}
    The proof follows immediately from \Cref{lem:sepk_arm_regret} and \Cref{lem:ending}.
\end{proof}

\section{Auxiliary lemmas}

\begin{lemma}[Lemma F.4 in \cite{dann2017unifying}]
    \label{lem:dann}
     Let $\{ X_t \}_{t=1}^T$ be a sequence of Bernoulli random and a filtration $\calF_1 \subseteq \calF_2 \subseteq...\calF_T$ with $\bbP(X_t = 1\mid \calF_t) = P_t$, $P_t$ is $\calF_{t}$-measurable and $X_t$ is $\calF_{t+1}$-measurable. Then, for all $t\in [T]$ simultaneously, with probability $1-\delta$,
     \[
        \sum_{k=1}^t X_k \geq \frac{1}{2}\sum_{k=1}^t P_k -\log \frac{1}{\delta}.
     \]
\end{lemma}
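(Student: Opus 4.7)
The plan is to derive this via the standard exponential supermartingale / multiplicative Chernoff technique, adapted to a uniform-in-$t$ statement.

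First, for a parameter $\lambda>0$ to be tuned later, I would introduce the process
\[
Z_t \;=\; \exp\!\left(-\lambda\sum_{k=1}^t X_k \;+\; (1-e^{-\lambda})\sum_{k=1}^t P_k\right),
\]
with $Z_0=1$. Using the Bernoulli identity $\bbE[e^{-\lambda X_t}\mid \calF_t] = 1 - P_t(1-e^{-\lambda})$ together with the elementary inequality $1-x\leq e^{-x}$, I get $\bbE[e^{-\lambda X_t}\mid \calF_t] \leq \exp(-P_t(1-e^{-\lambda}))$, which when multiplied by the $\calF_t$-measurable prefactor shows $\bbE[Z_t\mid \calF_t] \leq Z_{t-1}$. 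Thus $(Z_t)$ is a non-negative supermartingale with $\bbE[Z_0]=1$. This step is routine and I expect no obstacle here.

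The crucial second step is turning this into a bound that holds for every $t\in[T]$ simultaneously. Here I would apply Ville's maximal inequality (equivalently, Doob's for non-negative supermartingales) to conclude $\bbP(\sup_{t\leq T} Z_t \geq 1/\delta) \leq \delta$. This is the step that must not be replaced by a naive per-$t$ Markov argument followed by a union bound, since that would cost an extra $\log T$ factor and spoil the stated constant. On the complementary event, taking logs and rearranging yields, for every $t\in[T]$,
\[
\sum_{k=1}^t X_k \;\geq\; \frac{1-e^{-\lambda}}{\lambda}\sum_{k=1}^t P_k \;-\; \frac{1}{\lambda}\log\frac{1}{\delta}.
\]

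Finally, I would choose $\lambda=1$: then $1/\lambda = 1$ and $(1-e^{-1}) \approx 0.632 \geq \tfrac{1}{2}$, which delivers exactly the inequality in the statement. The main obstacle, if any, is ensuring the one-step log-MGF bound is tight enough at this specific $\lambda$ to recover the constant $\tfrac12$ rather than something weaker; this is guaranteed precisely by the numerical fact $1-1/e > 1/2$, so no finer analysis is needed.
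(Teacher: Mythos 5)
Your proof is correct. Note that the paper itself offers no proof of this lemma at all --- it is imported verbatim as Lemma F.4 of the cited reference --- so there is nothing to diverge from; your exponential-supermartingale argument with Ville's maximal inequality is the standard derivation (and essentially the one in the original source). All the key points check out: the one-step bound $\bbE[e^{-\lambda X_t}\mid\calF_t]=1-P_t(1-e^{-\lambda})\le e^{-P_t(1-e^{-\lambda})}$, the use of Ville rather than a per-$t$ union bound to get the uniform-in-$t$ statement without a $\log T$ penalty, and the choice $\lambda=1$ with $1-1/e>1/2$ (using $P_k\ge 0$) to land on the stated constants.
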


\begin{lemma}[Consequence of Freedman’s Inequality, e.g., Lemma E.2 in \cite{cohen2021minimax}]
    \label{lem:cons-freedman}
     Let $\{ X_t \}_{t\geq 1}$ be a sequence of random variables, supported in $[0,R]$, and adapted to a filtration $\calF_1 \subseteq \calF_2 \subseteq...\calF_T$. For any $T$, with probability $1-\delta$,
     \[
        \sum_{t=1}^T X_t \leq 2 \bbE[X_t \mid \calF_t] + 4R \log\frac{1}{\delta}.
     \]
\end{lemma}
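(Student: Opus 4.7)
The plan is to apply a Freedman-type martingale concentration to the centered sequence $Y_t := X_t - \bbE[X_t \mid \calF_t]$, which is a martingale difference with respect to $\{\calF_t\}$, and then convert the variance-dependent tail into the claimed linear-in-expectation form using boundedness together with an AM--GM step. Throughout I interpret the right-hand side of the bound as $2\sum_{t=1}^T\bbE[X_t \mid \calF_t] + 4R\log(1/\delta)$ (the sum appears to be missing in the displayed statement).

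First I would note that, since $X_t \in [0,R]$, we have $|Y_t| \le R$ and $\bbE[Y_t^2 \mid \calF_t] \le \bbE[X_t^2 \mid \calF_t] \le R\,\bbE[X_t \mid \calF_t]$. Writing $S_T := \sum_{t=1}^T \bbE[X_t \mid \calF_t]$, the predictable quadratic variation $V_T := \sum_{t=1}^T \bbE[Y_t^2 \mid \calF_t]$ is therefore bounded by $R\,S_T$. Next I would invoke the standard pointwise-in-$T$ form of Freedman's inequality: for any fixed $T$, with probability at least $1-\delta$,
\begin{equation*}
\sum_{t=1}^T Y_t \;\le\; \sqrt{2 V_T \log(1/\delta)} + \tfrac{2R}{3}\log(1/\delta) \;\le\; \sqrt{2 R S_T \log(1/\delta)} + \tfrac{2R}{3}\log(1/\delta).
\end{equation*}

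Finally I would apply AM--GM in the form $\sqrt{S_T \cdot 2R\log(1/\delta)} \le \tfrac{1}{2}S_T + R\log(1/\delta)$, so that adding $S_T$ to both sides and rearranging yields
\begin{equation*}
\sum_{t=1}^T X_t \;\le\; S_T + \tfrac{1}{2}S_T + R\log(1/\delta) + \tfrac{2R}{3}\log(1/\delta) \;\le\; 2 S_T + 4R\log(1/\delta),
\end{equation*}
with ample slack in both constants. The only mild subtlety is choosing a normalization of Freedman's inequality compatible with the stated constants $(2,4)$; once that is fixed, the self-bounding AM--GM step is routine and there are no further obstacles, which is consistent with this being stated as a well-known consequence of Freedman.
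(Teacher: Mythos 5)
First, note that the paper does not actually prove this lemma; it is imported verbatim as ``Lemma E.2 in \cite{cohen2021minimax}'', so there is no in-paper argument to compare against. Your overall strategy --- center to $Y_t = X_t - \bbE[X_t\mid\calF_t]$, bound the predictable variance via $\bbE[Y_t^2\mid\calF_t]\le \bbE[X_t^2\mid\calF_t]\le R\,\bbE[X_t\mid\calF_t]$, apply a Bernstein/Freedman-type martingale bound, and absorb the resulting square root into the conditional mean --- is exactly the standard route to this statement, and your reading of the right-hand side (a missing $\sum_{t=1}^T$ in the displayed lemma) is correct.

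The one step that does not hold as written is the ``pointwise-in-$T$ form of Freedman's inequality'' with the \emph{random} variation $V_T$ under the square root, i.e.\ $\sum_t Y_t \le \sqrt{2V_T\log(1/\delta)} + \tfrac{2R}{3}\log(1/\delta)$ with probability $1-\delta$. Freedman's inequality controls $\bbP\left[\sum_t Y_t \ge \lambda,\ V_T \le v\right]$ for a \emph{deterministic} threshold $v$; upgrading it to the self-normalized form you quote requires a peeling/union argument over (say) dyadic levels of $V_T$, which costs an extra $\log\log$-type factor in the confidence term, or a method-of-mixtures argument with different constants. The clean fix --- which also simplifies your proof --- is to use the fixed-parameter exponential supermartingale underlying Freedman directly: for any fixed $\eta$ with $\eta R\le 1$, the process $\exp\left(\eta\sum_{s\le t}Y_s - (e-2)\eta^2\sum_{s\le t}\bbE[Y_s^2\mid\calF_s]\right)$ is a supermartingale, so with probability $1-\delta$ one has $\sum_t Y_t \le (e-2)\eta V_T + \eta^{-1}\log(1/\delta)$. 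Taking $\eta = 1/R$ and using $V_T\le R S_T$ yields $\sum_t X_t \le (e-1)S_T + R\log(1/\delta) \le 2S_T + 4R\log(1/\delta)$, with no AM--GM step and no self-normalization issue. With that substitution your argument is complete.
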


\end{document}